\newif\ifarxiv
    \theoremstyle{plain}
    \newtheorem{theorem}{Theorem}
    \newtheorem{lemma}[theorem]{Lemma}
    \newtheorem{corollary}[theorem]{Corollary}
    \newtheorem{remark}[theorem]{Remark}
\definecolor{darkred}{rgb}{.5,0,0}
\definecolor{darkgreen}{rgb}{0,.5,0}
\definecolor{darkblue}{rgb}{0,0,.5}
\definecolor{darkorange}{rgb}{.8,.4,0}
\newcommand{\KL}[2]{\operatorname{RE}(#1\|#2)}
\newcommand{\half}{\tfrac{1}{2}}
\newcommand{\Reals}{{\mathbb R}}
\newcommand{\indicator}[1]{[\![#1]\!]} 
\newcommand{\D}{\mathrm{d}} 
\setlist{nosep}
\newcommand\numberthis{\addtocounter{equation}{1}\tag{\theequation}}
\newcommand{\SB}{Soft-Bayes}
\newcommand{\loss}{\ell}
\newcommand{\losst}{\loss_t}
\newcommand{\bigloss}{C} 
\newcommand{\cR}{\mathcal R}
\newcommand{\cD}{\mathcal D}
\newcommand{\cW}{\mathcal W}
\newcommand{\mixt}{M}
\newcommand{\mixtt}{\mixt_t}
\newcommand{\mixttpo}{\mixt_{t+1}}
\newcommand{\mixttoT}{\mixt_{1:T}}
\newcommand{\mixttot}{\mixt_{1:t}}
\newcommand{\mixtpret}{\mixt_{<t}}
\newcommand{\mdlset}{\mathcal{M}}
\newcommand{\nmdl}{N} 
\newcommand{\mdl}{p}
\newcommand{\mdli}{\mdl^i}
\newcommand{\mdlit}{\mdl^i_t}
\newcommand{\mdljt}{\mdl^j_t}
\newcommand{\mdlitoT}{\mdl^i_{1:T}}
\newcommand{\mdlitpo}{\mdl^i_{t+1}}
\newcommand{\spe}{\tilde{p}}
\newcommand{\spei}{\spe^i}
\newcommand{\speit}{\spe^i_t}
\newcommand{\speitoT}{\spe^i_{1:T}}
\newcommand{\wmdl}{w}
\newcommand{\wmdltpo}{\wmdl_{t+1}}
\newcommand{\wmdli}{\wmdl^i}
\newcommand{\wmdlit}{\wmdl^i_t}
\newcommand{\wmdljt}{\wmdl^j_t}
\newcommand{\wmdlitpo}{\wmdl^i_{t+1}} 
\newcommand{\wmdliTpo}{\wmdl^i_{T+1}}
\newcommand{\wmdliTstart}{\wmdli_{\Tstarti}}
\newcommand{\fix}{A}
\newcommand{\fixt}{\fix_t}
\newcommand{\fixtoT}{\fix_{1:T}}
\newcommand{\wfix}{a}
\newcommand{\wfixi}{a^i}
\newcommand{\obs}{x}
\newcommand{\allObs}{\mathcal{X}}
\newcommand{\eps}{\varepsilon}
\newcommand{\mdldeltai}{\mdl^{\delta i}}
\newcommand{\rate}{\eta} 
\newcommand{\ratebar}{\bar{\rate}}
\newcommand{\rateo}{\rate_1} 
\newcommand{\ratez}{\rate_0} 
\newcommand{\ratet}{\rate_t}
\newcommand{\ratebart}{\ratebar_t}
\newcommand{\ratetpo}{\rate_{t+1}}
\newcommand{\ratetmo}{\rate_{t-1}}
\newcommand{\rateT}{\rate_{T}}
\newcommand{\rateTpo}{\rate_{T+1}}
\newcommand{\rateTstarti}{\rate_{\Tstarti}}
\newcommand{\rateTstartimo}{\rate_{\Tstarti-1}}
\newcommand{\ratei}{\rate^i}
\newcommand{\rateit}{\rate^i_t}
\newcommand{\rateitpo}{\rate^i_{t+1}}
\newcommand{\rateiTpo}{\rate^i_{T+1}}
\newcommand{\ratebari}{\ratebar^i}
\newcommand{\ratebarit}{\ratebar^i_t}
\newcommand{\ratebaritpo}{\ratebar^i_{t+1}}
\newcommand{\ratebariTpo}{\ratebar^i_{T+1}}
\newcommand{\lnrate}{\ln_{\rate}}
\newcommand{\mdlfrac}{q}
\newcommand{\mdlfracmax}{Q}
\newcommand{\bestset}{\mdlset^*} 
\newcommand{\bestsett}{\bestset_t}
\newcommand{\bestsetTpo}{\bestset_{T+1}}
\newcommand{\bestsetTstarti}{\bestset_{\Tstarti}}
\newcommand{\nbestset}{m}
\newcommand{\nbestsett}{\nbestset_t} 
\newcommand{\nbestsettpo}{\nbestset_{t+1}}
\newcommand{\nbestsetTpo}{\nbestset_{T+1}}
\newcommand{\nbestsetTstarti}{\nbestset_{\Tstarti}}
\newcommand{\nbestsetTstartimo}{\nbestset_{\Tstarti-1}}
\newcommand{\nshift}{K}
\newcommand{\idxshift}{k}
\newcommand{\fixidx}[1]{\fix^{#1}}
\newcommand{\fixk}{\fixidx{\idxshift}}
\newcommand{\wfixik}{\wfixi_{\idxshift}}
\newcommand{\Tshiftidx}[1]{T_{#1}}
\newcommand{\Tshiftidxend}[1]{T_{#1'}}
\newcommand{\Tshiftk}{\Tshiftidx{\idxshift}}
\newcommand{\Tshiftkend}{\Tshiftidxend{\idxshift}}
\newcommand{\Tshiftkpo}{\Tshiftidx{\idxshift+1}}
\newcommand{\Tstarti}{T_i} 
\begin{document}


\ifarxiv
    \title{Soft-Bayes: Prod for Mixtures of Experts with Log-Loss}
    \author{Laurent Orseau$^1$ \and Tor Lattimore$^1$ \and Shane Legg$^1$}
    \date{
        \{lorseau,lattimore,legg\}@google.com \\%
        DeepMind, London, UK
    }
\else
    \title[Soft-Bayes]{Soft-Bayes: Prod for Mixtures of Experts with Log-Loss}
    \author{\Name{Laurent Orseau}
       \and
       \Name{Tor Lattimore}
       \and
       \Name{Shane Legg}\\
       \Email{\{lorseau,lattimore,legg\}@google.com} \\
       \addr DeepMind, London, UK\\
    }

    \jmlrvolume{}
    \jmlryear{2017}
    \jmlrworkshop{Algorithmic Learning Theory 2017}
    \editors{Steve Hanneke and Lev Reyzin}
\fi

\maketitle

\begin{abstract}
We consider prediction with expert advice under the log-loss with the goal of deriving efficient and robust algorithms. We argue that existing algorithms such as exponentiated gradient, online gradient descent and online Newton step do not adequately satisfy both requirements.
Our main contribution is an analysis of the Prod algorithm that is robust to any data sequence and runs in linear time relative to the number of experts in each round.
Despite the unbounded nature of the log-loss, we derive a bound that is independent of the largest loss and of the largest gradient, and depends only
on the number of experts and the time horizon.
Furthermore we give a Bayesian interpretation of Prod and adapt the algorithm to derive a tracking regret.
\end{abstract}

\ifarxiv
\else
    \begin{keywords}
    Online convex optimization, logarithmic loss, prediction with expert advice.
    \end{keywords}
\fi

\section{Introduction}\label{sec:intro}

Sequence prediction is a simple problem at the core of many machine learning problems: Given a sequence of past observations, what is the probability of the next one?
We approach this problem using the prediction with expert advice framework with logarithmic loss.
The log-loss is arguably the most fundamental choice 
because of its connections to probability (minimising the log-loss is maximising the likelihood of the observed data), sequential betting, information
theory and compression (minimising the log-loss corresponds to maximising the compression ratio when compressing using arithmetic coding).

\paragraph{Setup} Let $\allObs$ be a countable alphabet and $\cD$ be the set of probability distributions over $\allObs$.
We consider a game over $T$ rounds, where in each
round $t$ the learner first receives the predictions of $N$ experts $(\mdlit)_{i=1}^\nmdl$ with $\mdlit \in \cD$.
The learner then chooses their own prediction $\mixtt \in \cD$ and the next symbol $\obs_t \in \allObs$ is revealed.
The learner then suffers an \emph{instantaneous loss} at round $t$ of
\begin{align*}
\losst(\mixtt) = -\log\left(\mixtt(\obs_t)\right)
\end{align*}
and the cycles continue up to round $T$, which may or may not be known in advance.
The learner would like to combine the advice of the experts so as to make the loss as small as possible, which corresponds to good prediction/compression. We make no assumptions on the data source; in particular
$\obs_{1:T} = \obs_1,\obs_2,\ldots,\obs_T$ need not be independent and identically distributed, and may even be generated by an adversary. The standard approach in this setting
is to analyse the regret of the predictor $\mixt$ relative to some interesting class of competitors. 
Here we focus on predicting well relative to a fixed convex combination of experts. Let $\cW$ be the $(\nmdl-1)$-dimensional probability simplex:
$\forall \wmdl \in \cW: \sum_{i=1}^\nmdl \wmdli = 1$ and $\forall i\in[\nmdl]: \wmdli \in[0,1]$, where $\wmdli$ is the $i$th component of the vector $\wmdl$.
Define the \emph{regret} (also called redundancy for the log-loss) relative to $a \in \cW$ by
\begin{align}
\cR_T(\wfix) = \sum_{t=1}^T \left[\losst(\mixtt) - \losst\left(\sum_{i=1}^\nmdl \wfixi \mdlit\right)\right]\,.
\label{eq:regret}
\end{align}
This definition of the regret is more demanding than the usual notion of competing with the best single expert in hindsight, which corresponds to competing with `Dirac' experts $\wfix \in \{e_1,e_2,\ldots,e_\nmdl\}$ with $e_i$
the standard basis vectors. In particular, defining the regret as in \cref{eq:regret} forces the learner to exploit the `wisdom of the crowd' by combining the experts predictions, rather than focusing on the 
single best expert in hindsight. This is often crucial because it often happens that no single expert predicts well over all time periods, and the nature
of the log-loss means that even a single round of poor prediction can lead to a large instantaneous loss.

This framework has attracted significant attention over the last three decades, mainly due to its applications to compression and portfolio optimisation; see for example \citet{KV02} and references within.
Our objective is to design algorithms that are (a) linear-time in the number of experts $\nmdl$, (b) robust to the existence of incompetent experts and (c) recover the tracking guarantees of 
fixed share~\citep{herbster1998tracking}.
A variety of approaches are now known for this problem, but none satisfy all of (a), (b) and (c) above.
The minimax optimal and Bayesian solutions are known to achieve logarithmic regret \citep{Cov91}, but there are currently no linear-time implementations and it seems unlikely that one exists.
The main computational challenge for the Bayesian approach is evaluating the normalisation integral. This issue was addressed by \cite{KV02} via a polynomial-time sampling approach, but
unfortunately the solution far from linear in $\nmdl$ and is not suitable for practical implementations when $\nmdl$ is large.
More recently \cite{HAK07} proposed the online Newton step, a pseudo second-order algorithm that also achieves logarithmic regret, but depends on computing a generalised projection for 
which the best-known running time is $O(\nmdl^2)$ per step~\citep{LACL16}. Even in the idealised case that the projection could be computed in $O(1)$, the algorithm depends on maintaining and updating
a covariance matrix and so $O(\nmdl^2)$ running time is unavoidable without some form of dimensionality reduction that weakens the regret guarantees (see \citealp{LACL16}).

All of the algorithms mentioned until now enjoy optimal logarithmic regret.
The exponentiated gradient (EG) algorithm by \cite{HSSW98} has a 
regret bound that looks like
\begin{align}
\cR_T(a) \leq C \sqrt{\frac{T}{2} \log N}\,,
\label{eq:EGbound}
\end{align}
where $C = \max_{t,i} \frac{\mdlit(\obs_t)}{\mixtt(\obs_t)}\leq \max_t \frac{1}{\mixtt(\obs_t)}$. The EG algorithm runs in $O(N)$ time per round, but unfortunately its regret depends on $C$, which may be so large
that it can make \cref{eq:EGbound} vacuous (Section~\ref{sec:largelosses}).
Even worse, the dependence on $C$ is not an artifact of the analysis, but rather a failing of the EG algorithm, which becomes unstable when experts transition from predicting badly to predicting well.
Another algorithm with near-linear running time is online gradient descent (OGD) by \cite{Zin03} (and applied to this setting by \cite{VSH12}), which 
runs in $O(N \log(N))$ time using the fast simplex projection by \cite{DSSC08}.
The regret of this algorithm also depends on the size of the maximum gradient of the loss, however, which leads to bound of the same order as \cref{eq:EGbound}. Note that EG is equivalent to using mirror descent
with neg-entropy regularisation, which makes the projection to the simplex nothing more than normalisation. We will return briefly to alternative regularisation choices for mirror descent in the discussion.

We revisit the Prod algorithm by \cite{cesabianchi2007prod}, which runs in $O(N)$ time per step and was originally designed for obtaining second-order bounds when competing with the best expert rather than the mixture.
The stability of Prod has not gone unnoticed, with recent work by \cite{gaillard2014mlprod} and \cite{SNL14} also exploiting its advantages over exponential weighting.
Since the log-loss is unbounded, it is not immediately suitable for use in Prod, but conveniently the linearised loss \textit{is} semi-bounded. In this sense our algorithm is to Prod what exponentiated gradient is to
exponential weighting.

\paragraph{Contributions}
Our main contribution is an analysis of Prod when competing against a mixture in the log-loss setting (Sections~\ref{sec:offline} and  \ref{sec:self-conf}). 
By tuning the learning rate we are able to show two regret bounds: \\
\begin{minipage}{7cm}
\begin{align}
\cR_T &= O\left(\sqrt{NT \log N}\right) \label{eq:SBboundN}
\end{align}
\end{minipage}
\hspace{1cm}
\begin{minipage}{7cm}
\begin{align}
  \cR_T &= O\left(\sqrt{TC\log N}\right)\,, \label{eq:SBboundeps}
\end{align}
\end{minipage} \\
where $C$ is defined as above.
The first bound (Section~\ref{sec:offline}) eliminates \textit{all dependence} on the arbitrarily large $C$ at the price of a square-root dependence on the dimension $N$. The second bound (Section~\ref{sec:self-conf}) retains a dependence on $C$, but moves 
it inside the square root relative to the EG algorithm. We also prove self-confident bounds (Section~\ref{sec:self-conf}) and analyse a truly online version of Prod that does not need prior knowledge of
the horizon and simultaneously achieves a tracking guarantee (Section~\ref{sec:online}).
To complement the upper bounds we prove lower bounds for EG and OGD showing that in the worst case they suffer nearly \textit{linear} regret (Section~\ref{sec:largelosses}).
Moreover, we give two Bayesian interpretations of Prod as a `slowed down' Bayesian predictor or a mixture of `partially sleeping' experts (Section~\ref{sec:sbayes}).

\paragraph{Notation}
For a natural number $n$ let $[n] = \{1,2,\ldots,n\}$ and $e_i$
be the standard basis vectors (the dimension will always be clear from context).
The number of experts is denoted by $N \geq 1$ and the time horizon is $T \geq 1$.
Let $\cD$ be the set of probability distributions on the countable alphabet $\allObs$, and $\cW$ be the $(N-1)$-dimensional probability simplex
so that for $a \in \cW$ we have $a^i \geq 0$ for all $i$ and $\sum_{i=1}^N a^i = 1$. For $a,w \in \cW$ we define $\KL{a}{b} = \sum_{i:a^i > 0}^N a^i \ln(a^i/b^i)$ to be the relative entropy between
$a$ and $b$.
All of the following analysis only relies on $p^i_t$ through $p^i_t(x_t)$, which is the probability that expert $i$ assigned to the actual observation $x_t$ in round $t$.
For this reason we abbreviate $p^i_t = p^i_t(x_t)$. We also use $p^i_{1:t} = \prod_{s=1}^t p^i_s$. Similarly we abbreviate the prediction $M_t = M_t(x_t)$ and $M_{1:t} = \prod_{s=1}^t M_s$.
We usually reserve $w,w_t \in \cW$ to denote the weights over experts used by an algorithm and $a \in \cW$ for a fixed competitor.
Given $a \in \cW$, we let $A_t = \sum_{i=1}^n a^i p^i_t$ be the prediction of the mixture of experts over $a$ and $A_{1:t} = \prod_{s=1}^t A_s$.
The indicator function is denoted by $\indicator{test}\in\{0,1\}$ and equals 1 if the boolean $test$ is true.

\section{The Soft-Bayes algorithm}\label{sec:sbayes}
As discussed in the introduction, the Prod algorithm was originally designed for prediction with expert advice when competing with the
best expert in hindsight. Let $\wmdl_1 \in \cW$ be the initial (prior) weights,
which we always take to be uniform $\wmdl_1 = \frac{1}{\nmdl}$ unless otherwise stated.
Then in each round $t$ the Prod algorithm predicts using
a mixture over the experts (\ref{eq:mixture}) and updates its weights using a multiplicative update rule (\ref{eq:prod-update}):\newline
\begin{minipage}{6cm}
\begin{align}
M_t = \sum_{i=1}^N \wmdlit p^i_t\,.
\label{eq:mixture}
\end{align}
\end{minipage}
\hspace{2cm}
\begin{minipage}{7cm}
\begin{align}
\wmdlitpo = \frac{\wmdlit(1 - \ratebar \ell^i_t)}{\sum_{j=1}^\nmdl \wmdljt(1 - \ratebar \ell^j_t)}\,,\qquad\qquad
\label{eq:prod-update}
\end{align}
\end{minipage} \\
where $\ell^i_t = \ell_t(e_i)$ is the loss suffered by expert $i$ in round $t$ and $\ratebar \in (0,1)$ is the learning rate.
The algorithm only makes sense if $\ell^i_t \leq 1$, which is usually assumed.
Recall our loss function is $\ell_t:\cW \to \Reals$ is given by $\ell_t(w) = -\log M_t = -\log \sum_{i=1}^N \wmdlit p^i_t$, which is convex but arbitrarily large.
The key idea is to predict using \cref{eq:mixture}, but replace the loss with the linearised loss, $\nabla \ell_t(w_t)^\top w$.
A simple calculation shows that
$\nabla \ell_t(w_t)_i = -p_t^i / M_t \leq 0$.
Therefore the linearised losses are semi-bounded.
If we instantiate Prod, but replace the loss of each expert with the linearised loss, then the resulting algorithm predicts like \cref{eq:mixture} and updates
its weights by
\begin{align}
w_{t+1}^i
= \frac{\wmdlit\left(1 + \ratebar \frac{p_t^i}{M_t}\right)}{\sum_{j=1}^N w^j_t\left(1 + \ratebar \frac{p_t^j}{M_t}\right)}
= \frac{\wmdlit\left(1 + \ratebar \frac{p_t^i}{M_t}\right)}{1 + \ratebar}
= \wmdlit\left(1 - \rate + \rate\frac{p_t^i}{M_t}\right)\,,
\label{eq:update}
\end{align}
where $\rate = \ratebar / (1 + \ratebar)$ so that $\ratebar = \rate / (1 - \rate)$ and the second equality follows from the definition of $M_t = \sum_j w^j_t p^j_t$ and the fact that $\sum_i \wmdlit = 1$.
Notice that the computations of the prediction \cref{eq:mixture} and weight update \cref{eq:update} are both linear in the number of experts $N$.
For reference, the EG and OGD algorithms also predicts like \cref{eq:mixture}, but update their weights by
\begin{align}\label{eq:EGupdate}
\text{EG:} \quad
w_{t+1}^i = \frac{\wmdlit \exp\left(\rate\frac{p^i_t}{M_t}\right)}{\sum_{j=1}^N w_t^j \exp\left(\rate \frac{p^j_t}{M_t}\right)} \qquad\qquad
\text{OGD:} \quad
w_{t+1}^i = \Pi\left(w_t + \rate\frac{ p_t}{M_t}\right)_i \,,
\end{align}
where $\Pi$ is the projection onto the simplex with respect to the Euclidean norm.
A careful examination of the EG update leads to a worrying observation: If $\wmdlit$ is close to zero and $p^i_t$ is close to one, then $p^i_t / M_t$ can be extremely large, causing $\wmdlitpo$ to be close to one and $w^j_{t+1}$ to drop to nearly zero for all $j \neq i$.
This makes EG unstable when the gradients are large. The OGD update can be even worse because the projection has the potential to concentrate the weights on a Dirac after which the regret
can be infinite!
In contrast, Prod behaves more conservatively since%
\footnote{The second equality suggests that Prod and \SB{} are closely related to exponential smoothing for probability estimation~\citep{mattern2016phd}.}
\begin{align*}
\wmdlitpo = \wmdlit\left(1 - \rate + \rate \frac{p^i_t}{M_t}\right)
= (1 - \rate)\wmdlit + \rate \frac{\wmdlit p^i_t}{M_t} \leq (1 - \rate) \wmdlit + \rate\,,
\end{align*}
where the inequality follows from the dominance property that $M_t \geq \wmdlit p^i_t$ for all $i$ and $t$.
This means that even in the most extreme scenarios, the weight increases by at most $\rate$, which is usually tuned to be approximately $T^{-1/2}$.

\paragraph{Bayesian interpretation}
We now give two Bayesian interpretations of this algorithm. The first is to note that if $\rate = 1$, then
\begin{align*}
\wmdlitpo = \frac{\wmdlit \mdlit}{\mixtt} \qquad \text{ and } \qquad  \mixttoT = \sum_{i=1}^\nmdl \wmdli_1 \mdlitoT\,.
\end{align*}
In this case $\wmdlit$ is the posterior of the Bayesian mixture over sources $(p^i)_i$ with prior $w_1$.
While its regret relative to a single expert is at most $\cR_T(e_i) \leq \log N$,
the algorithm does not compete with convex combinations of experts. From a Bayesian perspective, there is no reason to believe that it should because the convex combinations lies outside the class of the learner.
On the other hand, if the learning $\rate$ is chosen to be close to zero, then the Prod update has the effect of `slowing down Bayes' to ensure it does not concentrate too fast on a single promising expert.
The multiplicative/additive nature of the update also means that
experts can make big mistakes while losing at most $(1-\rate)$ of their current weight. In contrast, a `slow' update derived from the exponential weights algorithm that
looks like $\wmdlitpo \propto \wmdlit \exp(\eta \log(\mdlit / \mixtt))$ still reduces the weight of an expert to zero if $\mdlit = 0$.

The second interpretation comes from the
sleeping expert framework~\citep{FSSW97} that allows experts to `fall asleep' and abstain from predicting in some rounds.
If the weights are normalised appropriately, then this is equivalent to assuming the sleeping experts defer their vote to the wakeful, which
for them is equivalent to predicting like the mixture $\mixtt$~\citep{chernov2009prediction}.
We consider a smooth version of this idea, where an expert can be `sleepy'
and predict partially like the mixture.
From a given expert $\mdli$, we build the meta-expert $\spei$ such that for all time steps t:
\begin{align*}
\speit &:= (1-\rate)\mixtt + \rate \mdlit \qquad \text{ and thus }\qquad \speitoT = \prod_{t=1}^T ((1-\rate)\mixtt + \rate\mdlit)\,,
\end{align*}
where $\rate\in(0, 1]$ and $M_t$ is now defined as a mixture of the meta-experts $\spei$:
\begin{align}
\mixttoT &= \sum_{i=1}^\nmdl \wmdli_1 \speitoT \label{eq:sbpred}
\end{align}
Note that since both $\mdl$ and $\mixt$ are predictors
the convex combination of these is also a predictor.
Such self-referential constructions have been noted in the past, for example by \cite{koolen2012putting}.
The main point is that the meta-experts are not normal predictors because they depend on the learner $\mixt$. Nevertheless, they are useful for analysis and intuition.
With this view of $\mixt$ we note that all the usual properties of Bayesian predictors hold. In particular:
\begin{itemize}
\item The posterior weight of the $i$th meta-expert is
\begin{align*}
\wmdlitpo = \wmdli_1 \frac{\spei_{1:t}}{\mixt_{1:t}} = \wmdli_{t} \frac{\spei_t}{\mixtt} = \wmdli_{t}\left(1 - \rate + \rate \frac{\mdlit}{\mixtt}\right)\,.
\end{align*}
\item The Bayes prediction over the class of meta-experts is $\displaystyle \mixtt = \sum_{i=1}^\nmdl \wmdlit \spei_t$\,.
\item The weights are properly normalised: $\displaystyle \sum_{i=1}^\nmdl \wmdlit = 1$ for all $t \in [T]$.
\end{itemize}
Based on these observations we call the algorithm defined by the prediction in \cref{eq:mixture} and updates in \cref{eq:update}, or equivalently by \cref{eq:sbpred}, the \textit{Soft-Bayes} algorithm.

\paragraph{Regret relative to a single expert}
We start the theoretical results with a simple bound on the regret relative to a single expert.

\begin{theorem}
For the \SB{} algorithm,
$\cR_T(e_i) =\ln \frac{\mdlitoT}{\mixttoT} \leq \frac{1}{\rate}\ln \frac{1}{\wmdli_1}$ for all $i$.
\end{theorem}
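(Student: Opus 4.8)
The plan is to exploit the Bayesian reformulation of Soft-Bayes developed just above, in which $\mixttoT = \sum_{i=1}^\nmdl \wmdli_1 \speitoT$ is the Bayes mixture over the meta-experts $\spei$ with prior $\wmdl_1$. First I would establish the claimed equality by telescoping, which uses no structure of the algorithm: the per-round loss of the learner is $-\ln \mixtt$ and the loss of the Dirac competitor $e_i$ is $-\ln \mdlit$, so
\begin{align*}
\cR_T(e_i) = \sum_{t=1}^T\left(-\ln \mixtt + \ln \mdlit\right) = \ln\frac{\prod_{t=1}^T \mdlit}{\prod_{t=1}^T \mixtt} = \ln\frac{\mdlitoT}{\mixttoT}\,.
\end{align*}

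The heart of the argument is a single application of the weighted arithmetic--geometric mean inequality to each meta-expert. Since $\speit = (1-\rate)\mixtt + \rate\mdlit$ is a convex combination of $\mixtt$ and $\mdlit$ with weights $1-\rate$ and $\rate$, weighted AM--GM gives the pointwise lower bound $\speit \geq \mixtt^{\,1-\rate}\,\mdlit^{\,\rate}$. Taking the product over $t = 1,\dots,T$ and recalling $\mixttoT = \prod_t \mixtt$ and $\mdlitoT = \prod_t \mdlit$ then yields $\speitoT \geq \mixttoT^{\,1-\rate}\,\mdlitoT^{\,\rate}$.

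To finish I would use that the prior mass on a single component never exceeds the whole Bayes mixture, namely $\mixttoT = \sum_{j} \wmdl^j_1 \spe^j_{1:T} \geq \wmdli_1\,\speitoT$, so that $\speitoT/\mixttoT \leq 1/\wmdli_1$. Combining this with the product bound gives
\begin{align*}
\left(\frac{\mdlitoT}{\mixttoT}\right)^{\!\rate} = \frac{\mixttoT^{\,1-\rate}\,\mdlitoT^{\,\rate}}{\mixttoT} \leq \frac{\speitoT}{\mixttoT} \leq \frac{1}{\wmdli_1}\,,
\end{align*}
and taking logarithms and dividing by $\rate$ delivers the stated bound. I expect the only subtle points to be getting the direction of the AM--GM inequality right and keeping track of the exponents $1-\rate$ and $\rate$ when passing to the product over rounds; everything else is a direct consequence of the Bayesian reformulation already in hand.
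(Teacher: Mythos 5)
Your proposal is correct and is essentially the paper's own proof: the weighted AM--GM bound $\speit \geq \mixtt^{1-\rate}\mdlit^{\rate}$ is exactly the exponentiated form of the concavity of $\ln$ that the paper applies, and both arguments combine it with the same dominance step $\mixttoT \geq \wmdli_1\,\speitoT$ before rearranging. The only difference is cosmetic: you work multiplicatively and take logarithms at the end, while the paper works in log-space throughout.
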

\begin{proof}
Using dominance and the definition of concavity applied to the function $\ln(\cdot)$:
\begin{align*}
\forall i\in[\nmdl]:&\ln\mixttoT \geq \ln\wmdli_1 \speitoT = \ln \wmdli_1 + \sum_{t=1}^T \ln ((1-\rate)\mixtt+\rate\mdlit) \\
&\geq \ln \wmdli_1 + (1-\rate)\sum_{t=1}^T \ln \mixtt + \rate\sum_{t=1}^T \ln \mdlit
=  \ln \wmdli_1 + (1-\rate)\ln \mixttoT + \rate\ln \mdlitoT\,.
\end{align*}
Therefore $\rate \ln\mixttoT \geq \rate\ln \mdlitoT + \ln \wmdli_1$, and the proof is completed by rearrangement.
\end{proof}

If the goal is to compete with the best expert only, then setting $\rate=1$ is optimal, which makes Prod equivalent to the standard Bayesian algorithm over $(e_i)$.
As an aside, in \cref{sec:disjoint} we present a simple setup
where we recover several well-known algorithms by using Soft-Bayes with specific simple learning rates.

\section{Failure of EG and OGD}\label{sec:largelosses}
As remarked in the introduction, the EG and OGD algorithms can become unstable when the gradients are uncontrolled.
Here we demonstrate this with a carefully crafted example that best illustrates the issue.

\begin{theorem}\label{thm:eg-lower}
If $N = 2$ and $w_1 = (1/2, 1/2) \in \cW$ is the uniform prior, then for any learning rate $\eta > 0$ there exists a sequence of predictors such that
the regret of EG and OGD is $\Omega(T^{1- \varepsilon})$ for all $\varepsilon \in (0,1)$.
\end{theorem}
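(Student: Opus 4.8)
The plan is to exhibit an adversarial sequence of expert predictions on the binary alphabet $\allObs=\{0,1\}$ on which both EG and OGD are forced to misweight the experts exactly when it hurts most. I would fix the two experts and specify directly the probabilities $\mdlit=\mdlit(\obs_t)$ they assign to the realised symbol, since by the remarks in the Notation section only these values enter the analysis. The sequence is organised into phases of a common length $L$, to be chosen as a function of $\eta$ and $\varepsilon$: throughout a phase one designated expert is \emph{good} and predicts the realised symbol with probability $a$ close to $1$, while the other is \emph{bad} and predicts it with probability $b=\gamma$ for some small $\gamma=\gamma(T,\eta,\varepsilon)$, and at each phase boundary the roles of the two experts are swapped. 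Against this sequence the fixed uniform competitor $\wfix=(1/2,1/2)$ always predicts $A_t=(\mdl^1_t+\mdl^2_t)/2\ge\tfrac12 a$, so its per-round loss $-\log A_t$ is bounded by a constant and hence $\sum_t \losst(A_t)=O(T)$ with an explicit constant; the regret we must lower bound is $\cR_T(\wfix)=\sum_t[\losst(\mixtt)-\losst(A_t)]=\sum_t\log(A_t/\mixtt)$.

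The heart of the argument is to show that at the \emph{start} of each phase the learner's weight on the newly-good expert is small, so that its prediction $\mixtt=\wmdl^1_t\mdl^1_t+\wmdl^2_t\mdl^2_t$ is dominated by the bad expert and $\mixtt$ is of order $\max(\wmdl^{\mathrm{good}}_t,\gamma)$. For EG I would track the log-ratio $\ln(\wmdl^{\mathrm{good}}_t/\wmdl^{\mathrm{bad}}_t)$: during a phase the multiplicative update drives this ratio in favour of the current good expert by an amount controlled by $\eta$ per round, so choosing $L$ appropriately guarantees that by the end of a phase the weight has concentrated on the expert that is about to become bad. After the swap this stale weight is precisely what makes $\mixtt$ small, so each phase boundary is followed by a block of rounds on which $-\log\mixtt$ is large while $-\log A_t$ stays constant. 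Summing $\log(A_t/\mixtt)$ over these high-loss rounds and over the $\Theta(T/L)$ phases yields the stated near-linear lower bound after tuning $L$, $a$ and $\gamma$; the exponent $1-\varepsilon$ absorbs the $\eta$-dependent phase length and the constants.

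For OGD the same sequence works, but the mechanism is the Euclidean projection rather than the exponential update: after an aggressive ascent step $\wmdl_t+\eta\,\mdl_t/\mixtt$ the projection $\Pi$ onto the simplex clips the coordinate of the currently-losing expert, driving its weight to (or very close to) zero; when that expert becomes good at the next phase boundary, $\mixtt$ collapses and $-\log\mixtt$ is large. I would make this quantitative by bounding below the size of the gradient step $\eta\,\mdl^i_t/\mixtt$, which is large precisely because $\mixtt$ is small, and by invoking the one-sided contraction property of the simplex projection to certify that the stale coordinate stays negligibly small. The degenerate case in which a coordinate is set exactly to zero, giving literally infinite regret, is avoided by keeping the good and bad probabilities strictly inside $(0,1)$ so that the bound remains of the claimed polynomial form.

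The main obstacle is controlling the weight trajectories \emph{uniformly over all $\eta>0$}, because the two extreme regimes fail for opposite reasons: for very small $\eta$ the updates move the weights too slowly and, on a rapidly alternating sequence, EG would simply hover near the uniform vector and track the competitor, whereas for very large $\eta$ the updates overshoot and one must instead argue directly about full concentration. The fix is to let the phase length $L$ and the separation $\gamma$ depend on $\eta$ so that in every regime the weight is provably concentrated on the wrong expert at each phase boundary; verifying this staleness rigorously, rather than relying on the heuristic steady-state picture, and likewise establishing the clipping behaviour of $\Pi$ for OGD, is where the real work lies. Once the staleness is established the lower bound follows from the elementary summation above.
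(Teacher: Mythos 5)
Your plan founders in the small-learning-rate regime, and the failure is structural rather than merely ``real work'' left to do. You commit throughout to lower-bounding the regret against the fixed \emph{uniform} competitor $\wfix=(1/2,1/2)$. But when $\eta$ is small, EG started from the uniform prior essentially \emph{is} the uniform mixture (for $\eta=O(1/T)$ the weights never leave $[1/4,3/4]$), and more generally the only way to create the staleness your argument needs is to let the learner concentrate during a phase---during which it \emph{beats} the uniform competitor. Your accounting sums $\log(A_t/M_t)$ only over the high-loss rounds after each swap, but a sum over a subset of rounds is not a lower bound on the regret, because the remaining rounds contribute negative terms, and these are not lower order: EG's log-odds move at speed $\eta\,(p^{\mathrm{good}}_t-p^{\mathrm{bad}}_t)/M_t$, which is \emph{large} exactly when the learner mispredicts ($M_t$ small) and small when it predicts well, so it recovers from staleness quickly and then spends the bulk of each phase gaining up to $\ln 2$ per round over the uniform mixture while rebuilding the concentration your next swap requires. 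Balancing these two effects (e.g.\ in the continuous-time limit) shows the net per-phase regret against uniform is $O(1/\eta)$ and can even be negative, for essentially any choice of $L$, $a$ and $\gamma$; this cannot reach $\Omega(T^{1-\varepsilon})$ throughout the range $\eta\le T^{\varepsilon-1}$. The paper sidesteps this trap by changing \emph{both} the sequence and the competitor in the small-$\eta$ case: one expert is always right and the other always wrong, the benchmark is the Dirac mixture on the good expert (zero loss), and the learner's slow concentration alone costs $\Omega(\min\{1/\eta,T\})$, i.e.\ $\Omega(T^{1-\varepsilon})$ for $\eta\le T^{\varepsilon-1}$. Alternation is reserved for $\eta\ge T^{\varepsilon-1}$, where no delicate per-phase balance is needed: after $T/2$ consistent rounds the stale weight is at most $e^{-\eta T/2}$, a single swap costs $\eta T/2$ in one round, and the ensuing oscillation yields total loss $\Omega(T^{1+\varepsilon})$ against a benchmark loss of $O(T)$, so cancellations are irrelevant.

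Your OGD treatment inherits the same problem plus an unnecessary complication. Keeping all probabilities strictly inside $(0,1)$ to avoid infinite regret is counterproductive: infinite regret is a perfectly acceptable conclusion for a lower bound, and the paper exploits the exact clipping---on the constant sequence the projected weight hits the vertex $w^b_T=1$ exactly once $T>1+1/\eta$, whereupon one flipped round gives infinite regret, while in the complementary case $T\le 1+1/\eta$ the weights grow by only $O(\eta)$ per round and the constant sequence (again against the Dirac competitor, not the uniform one) already gives $\Omega(T)$ regret. With your interior $\gamma>0$ you must instead certify that the projection drives a coordinate merely \emph{near} zero, which requires $\eta\,p^i_t/M_t$ to be large and thus fails precisely in the small-$\eta$ regime where, as above, the uniform-competitor comparison yields nothing. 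In short, the missing idea is the case split on $\eta$ at the threshold $T^{\varepsilon-1}$, with two different (sequence, competitor) pairs---a constant sequence with a Dirac benchmark for small $\eta$, and alternation for large $\eta$---rather than a single phased construction measured against the uniform mixture.
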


The proof is given in Appendix~\ref{app:thm:eg-lower} and depends on a simple example where the experts predict Dirac measures with disjoint support (they always disagree).
In the first $T/2$ rounds the first expert is always wrong and for the next $T/2$ rounds the correctness of the experts alternates. If the learning rate is sufficiently large,
then the weights of the EG algorithm oscillate wildly, which leads to a \textit{super-exponential} regret.
The only way to avoid this calamity is to choose a learning rate so small that EG barely learns at all, in which case it suffers near-linear regret.
For OGD the regret can even be infinite in this example.
A naive attempt to fix the EG algorithm is to replace the experts with `meta-experts'
$\mdldeltai$ defined by $\mdldeltai_t := (1-\delta)\mdlit + \delta/|\allObs|$.
This ensures that $\mixtt = \sum_i \wmdlit \mdldeltai_t \geq \delta/|\allObs|\approx 1/c$ for all $t$.
While this does prevent super-exponential regret, it does not solve the problem.
Compared to a mixture of the base experts $\mdli$, the mixture of the meta-experts $\mdldeltai$ can suffer
a regret of $T\delta$.
Hence, considering the bound in \cref{eq:EGbound}, the optimal balance is for $\delta \approx T^{-1/4}$ leading to a bound of $O(T^{3/4})$, which is much worse than the $O(T^{1/2})$ regret
that we prove for Prod. A similar correction is possible for OGD, but does not seem worthwhile in light of the above discussion.

\section{Regret against a mixture}\label{sec:offline}

Recall that for any $a \in \cW$, the mixture predictor is $A_t = \sum_{i=1}^N a^i p^i_t$ and $A_{1:T} = \prod_{t=1}^T A_t$.
Let $\bestset:=\{i\mid\exists t\in [T] :\mdlit = \max_j \mdljt\}$ be
the set of experts that predict at least as well as any other expert at least once,
and let  $\nbestset=|\bestset|\leq \nmdl$.

\begin{theorem}\label{thm:sboffline}
For any $\rate \in (0,1)$, the regret of the Soft-Bayes algorithm $\mixt$ is bounded by:
\begin{align*}
\cR_T(a) = \ln\frac{\fixtoT}{\mixttoT} \leq
  \frac{1}{\ratebar} \ln\nmdl
 + \ratebar \nbestset T
 + \nbestset\ln\frac{\nmdl}{\nbestset} + \ln\nmdl\,,
\qquad \text{where } \ratebar := \frac{\rate}{1-\rate}\,.
\end{align*}
\end{theorem}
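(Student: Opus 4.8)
The plan is to work directly with the log-ratio form $\cR_T(\wfix)=\ln\frac{\fixtoT}{\mixttoT}$ and to lean on the Bayesian reading from \cref{eq:sbpred}, under which $\mixttoT=\sum_i\wmdli_1\speitoT$ is a genuine Bayes mixture over the meta-experts $\speit=(1-\rate)\mixtt+\rate\mdlit$, with posterior $\wmdlitpo=\wmdlit\,\speit/\mixtt$. The idea is to mimic the single-expert theorem but compare against a whole mixture: there the proof used the dominance $\mixttoT\ge\wmdli_1\speitoT$, whereas here I would instead use the evidence (Jensen) bound $\ln\mixttoT\ge\sum_i\wfixi\ln\speitoT-\KL{\wfix}{\wmdl_1}$, valid for every competitor $\wfix\in\cW$. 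Throughout I would only invoke that experts output probabilities ($\mdlit\le1$), the dominance $\mixtt\ge\wmdlit\mdlit$, and the slow-decay property $\wmdlitpo\ge(1-\rate)\wmdlit$ noted after \cref{eq:update}.

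The clean, ``prior'' part of the bound I expect to come out exactly as in the single-expert case. Telescoping the relative-entropy potential $\Phi_t=\KL{\wfix}{\wmdl_t}$ along the posterior identity gives $\Phi_t-\Phi_{t+1}=\sum_i\wfixi\ln\frac{\speit}{\mixtt}$, hence the exact identity $\sum_{t=1}^T\sum_i\wfixi\ln\frac{\speit}{\mixtt}=\KL{\wfix}{\wmdl_1}-\KL{\wfix}{\wmdl_{T+1}}$. Feeding this through the same $\rate$-concavity step as in the single-expert proof, $\ln\speit\ge(1-\rate)\ln\mixtt+\rate\ln\mdlit$, and rearranging identically, turns the whole thing into a statement of the form $\cR_T(\wfix)\le\frac1\rate\KL{\wfix}{\wmdl_1}+R$, where $R$ is a nonnegative second-order remainder. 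With the uniform prior $\KL{\wfix}{\wmdl_1}\le\ln\nmdl$, and since $\tfrac1\rate=1+\tfrac1\ratebar$ this first piece already yields the two terms $\tfrac1\ratebar\ln\nmdl+\ln\nmdl$.

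The heart of the argument --- and what I expect to be the main obstacle --- is bounding the remainder $R$ by $\ratebar\nbestset T+\nbestset\ln\frac\nmdl\nbestset$. Morally $R$ measures the gap between the arithmetic mixture $\fixt=\sum_i\wfixi\mdlit$ that we actually compete against and the (smoothed) geometric average that the concavity step produced; crucially the meta-expert smoothing keeps every $\speit\ge(1-\rate)\mixtt>0$, so $R$ is finite even when some experts assign probability $0$, which is exactly where a naive exponential-weights analysis diverges. This is where $\bestset$ must enter: an expert that is never a per-round maximiser keeps its smoothed prediction close to $(1-\rate)\mixtt$ and cannot contribute to the arithmetic-versus-geometric gap, so only the $\nbestset=|\bestset|$ sometimes-best experts drive $R$. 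I would control their contribution through the slow-decay property: a competitive expert loses at most a factor $(1-\rate)$ of its weight per round, and $-\ln(1-\rate)\le\ratebar$, so accumulating this over the $T$ rounds and the at most $\nbestset$ competitive experts produces the $\ratebar\nbestset T$ term, while the one-off cost of their weights is paid through $\sum_{i\in\bestset}\wmdliTpo\le1$ together with concavity of $\ln$, which yields $\sum_{i\in\bestset}\ln\frac1{\wmdliTpo}\le\nbestset\ln\frac\nmdl\nbestset$ once the telescoping endpoints are collected. The delicate point is that on rounds where a best expert beats the mixture by a large factor the linearised (first-order) surrogate is far too lossy, so such rounds must instead be charged to the genuine logarithm of the weight increase; balancing the large-gap rounds (few, since each depletes the weight budget) against the many small-gap rounds is precisely what is needed to obtain $\ratebar\nbestset T$ rather than something linear in $T$.

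Finally I would assemble $\cR_T(\wfix)\le\tfrac1\ratebar\ln\nmdl+\ln\nmdl+\ratebar\nbestset T+\nbestset\ln\frac\nmdl\nbestset$, which is the claimed bound; tuning $\ratebar\asymp\sqrt{\ln\nmdl/(\nbestset T)}$ and using $\nbestset\le\nmdl$ then recovers the advertised $O(\sqrt{\nmdl T\log\nmdl})$ rate of \cref{eq:SBboundN}.
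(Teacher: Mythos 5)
Your overall architecture coincides with the paper's: per round, bound $\ln(\fixt/\mixtt)$ by $\frac{1}{\rate}\sum_i\wfixi\ln(\wmdlitpo/\wmdlit)$ plus a remainder (the first part telescopes via your KL potential to $\frac{1}{\rate}\KL{\wfix}{\wmdl_1}\leq\frac{1}{\rate}\ln\nmdl$, and $\frac{1}{\rate}=1+\frac{1}{\ratebar}$ indeed yields the $\frac{1}{\ratebar}\ln\nmdl+\ln\nmdl$ terms), then charge the remainder to $\bestset$. But there is a genuine gap at exactly the point you flag as ``the main obstacle'': you never establish the per-round inequality that makes the remainder controllable. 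The paper's crux is a reverse Jensen inequality (Lemma~\ref{lem:lnaixiB}): with $\mdlfrac_i=\mdlit/\mixtt$,
\begin{align*}
\ln\sum_{i=1}^\nmdl \wfixi \mdlfrac_i \leq \frac{1}{\rate}\sum_{i=1}^\nmdl \wfixi\ln\left(1-\rate+\rate\mdlfrac_i\right) + \max_i \ln\left(1+\ratebar \mdlfrac_i\right)\,,
\end{align*}
proved by comparing the concave map $\mdlfrac\mapsto\frac{1}{\rate}\ln(1-\rate+\rate \mdlfrac)$ to its chord on $[0,\max_i \mdlfrac_i]$ and maximising the deviation of $\ln \mdlfrac$ from that chord; this does not follow from the tools you list (dominance, slow decay, $-\ln(1-\rate)\le\ratebar$). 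Worse, your second paragraph taken literally is fatal: applying $\ln\speit\geq(1-\rate)\ln\mixtt+\rate\ln\mdlit$ under the average $\sum_i\wfixi(\cdot)$ leaves as remainder the per-round arithmetic--geometric gap $\ln\sum_i\wfixi\mdlit-\sum_i\wfixi\ln\mdlit$, which is $+\infty$ whenever some $\mdlit=0$ with $\wfixi>0$ --- contradicting your own parenthetical that smoothing keeps $R$ finite. You must keep the smoothed term $\frac{1}{\rate}\ln(1-\rate+\rate \mdlfrac_i)$ intact (finite even at $\mdlfrac_i=0$) and bound its gap to $\ln\sum_i\wfixi \mdlfrac_i$ directly, which is precisely what the lemma does.

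Once the lemma is in place, the $\bestset$ accounting is mechanical, and here too your sketch is off. The per-round max term equals $\max_i\ln(\wmdlitpo/\wmdlit)-\ln(1-\rate)$; since the maximiser varies with $t$ it does not telescope, and the paper's loss-injection trick adds the missing $\ln(\wmdlitpo/\wmdlit)\geq\ln(1-\rate)$ for every $i\in\bestset$ at a cost of $-\ln(1-\rate)\leq\ratebar$ each, producing $\ratebar\nbestset T$ plus the telescoped sum $\sum_{i\in\bestset}\ln(\wmdliTpo/\wmdli_1)$ --- final weights in the \emph{numerator} --- which concavity bounds by $\nbestset\ln(\nmdl/\nbestset)$. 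Your version, $\sum_{i\in\bestset}\ln(1/\wmdliTpo)\le\nbestset\ln(\nmdl/\nbestset)$, has the quantity inverted: that sum is unbounded above when a final weight is tiny, and only the reciprocal orientation admits the bound. In short, your intuition about charging large-gap rounds to genuine log-weight increases and small-gap rounds to the $(1-\rate)$ decay is exactly the right picture of the paper's argument, but as written the proposal asserts, rather than proves, the single inequality the whole theorem hinges on.
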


The learning rate $\ratebar$ is optimised
by $\ratebar = \sqrt{\frac{\ln \nmdl}{T\nbestset}}$ and for this choice
\begin{align*}
\cR_T(a) &= \ln\frac{\fixtoT}{\mixttoT} \leq
 2\sqrt{T\nbestset\ln\nmdl} + \nbestset\ln\frac{\nmdl}{\nbestset} + \ln\nmdl
 \\
\text{and }\cR_T(a)&
\phantom{=\ln\frac{\fixtoT}{\mixttoT}\ }
\leq 2\sqrt{T\nmdl\ln\nmdl} + \ln\nmdl\,.
\end{align*}

To prove this theorem we need the following lemma (proof in \cref{sec:revjens}).
\begin{lemma}\label{lem:lnaixiB}
Let $\rate \in(0, 1), a\in \cW$, and $q \in [0,\infty)^N$,
\begin{align*}
\ln\sum_{i=1}^N a_i \mdlfrac_i \leq \frac{1}{\rate} \sum_{i=1}^N a_i \ln\left(1-\rate+\rate\mdlfrac_i\right) +
\max_i \ln\left(1+\frac{\rate}{1-\rate}\mdlfrac_i\right).
\end{align*}
\end{lemma}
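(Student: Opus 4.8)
The plan is to derive the bound from two elementary convexity facts applied in sequence: a secant (chord) lower bound for the concave map $\mdlfrac\mapsto\ln(1-\rate+\rate\mdlfrac)$, followed by the tangent bound $\ln S\le S/y-1+\ln y$ for the logarithm with a judiciously chosen anchor $y$. Write $S:=\sum_{i=1}^N a_i\mdlfrac_i$ and recall $\ratebar=\rate/(1-\rate)$. Two preliminary observations simplify matters. First, since $\mdlfrac\mapsto\ln(1+\ratebar\mdlfrac)$ is increasing, the maximum on the right-hand side is attained at $\mdlfracmax:=\max_i\mdlfrac_i$, so that term equals $\ln(1+\ratebar\mdlfracmax)$. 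Second, if $\mdlfracmax=0$ the left-hand side is $-\infty$ and the claim is trivial, so I assume $\mdlfracmax>0$.

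First I would use concavity of $\mdlfrac\mapsto\ln(1-\rate+\rate\mdlfrac)$. As each $\mdlfrac_i\in[0,\mdlfracmax]$, the graph lies above the chord through the endpoints $0$ and $\mdlfracmax$, which (after simplifying via $\ln(1-\rate+\rate\mdlfracmax)-\ln(1-\rate)=\ln(1+\ratebar\mdlfracmax)$) reads
\[ \ln(1-\rate+\rate\mdlfrac_i)\;\ge\;\ln(1-\rate)+\frac{\mdlfrac_i}{\mdlfracmax}\,\ln(1+\ratebar\mdlfracmax). \]
Averaging against the weights $a_i$ and dividing by $\rate$ bounds the first term of the right-hand side below by the affine-in-$S$ quantity $\tfrac{1}{\rate}\ln(1-\rate)+\tfrac{S}{\rate\mdlfracmax}\ln(1+\ratebar\mdlfracmax)$. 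Hence it suffices to show that $\ln S$ is at most this expression plus $\ln(1+\ratebar\mdlfracmax)$.

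Next I would bound $\ln S$ from above by $S/y-1+\ln y$, valid for every $y>0$, taking $y=\rate\mdlfracmax/\ln(1+\ratebar\mdlfracmax)$ so that the coefficient of $S$ exactly matches the one produced by the chord step. The two $S$-dependent contributions then cancel identically, and the entire inequality collapses to a scalar inequality with no remaining dependence on $S$. Introducing $u:=\ln(1+\ratebar\mdlfracmax)>0$ and using the identity $\rate\mdlfracmax=(1-\rate)(e^{u}-1)$, this residual inequality simplifies to
\[ \ln\frac{1-e^{-u}}{u}\;\le\;1+\frac{1-\rate}{\rate}\,\ln(1-\rate). \]

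The heart of the argument is this last scalar inequality, and I expect the only real care to be in the bookkeeping that produces it — keeping the roles of $\rate$ and $\ratebar$ straight and repeatedly using $\ln(1-\rate+\rate\mdlfrac)=\ln(1-\rate)+\ln(1+\ratebar\mdlfrac)$. Once obtained, it is disposed of cleanly because its two sides decouple: the left-hand side is $\le 0$ since $1-e^{-u}\le u$ (equivalently $e^{-u}\ge 1-u$), while the right-hand side is $\ge 0$ since $\ln(1-\rate)\ge 1-\tfrac{1}{1-\rate}$ (the inequality $\ln t\ge 1-1/t$ at $t=1-\rate$). Thus left $\le 0\le$ right, completing the proof; both bounds are nothing more than instances of $e^{x}\ge 1+x$.
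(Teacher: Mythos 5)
Your proposal is correct and follows essentially the paper's own route: your chord bound is exactly the paper's linear minorant $g$, and your tangent anchor $y=\rate\mdlfracmax/\ln(1+\ratebar\mdlfracmax)$ coincides with the paper's maximizer $\hat{\mdlfrac}$ of $\ln\mdlfrac-g(\mdlfrac)$, so the two derivations of \cref{eq:forlnaixiB} are the same computation. Your two closing facts, $1-e^{-u}\le u$ and $\ln(1-\rate)\ge-\rate/(1-\rate)$, are likewise the paper's $\ln(1+x)\ge x/(1+x)$ (with $x=\ratebar\mdlfracmax$ and $u=\ln(1+x)$) and Lemma~\ref{lem:-ln1-x}, merely reparametrized through $u$.
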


\begin{proof}(\cref{thm:sboffline})
By Lemma~\ref{lem:lnaixiB}
and using $\wmdlitpo =\wmdlit(1-\rate)\left(1+\frac{\rate}{1-\rate}\frac{\mdlit}{\mixtt}\right)$,
for any $t \in [T]$:
\begin{align*}
\ln \frac{\fixt}{\mixtt} &=
\ln\sum_{i=1}^\nmdl \wfixi\frac{\mdlit}{\mixtt}
 \leq \frac{1}{\rate}\sum_i \wfixi\ln\left(1-\rate+\rate\frac{\mdlit}{\mixtt}\right) + \max_{i\leq\nmdl} \ln\left(1+\frac{\rate}{1-\rate}\frac{\mdlit}{\mixtt}\right)  \\
 &= \frac{1}{\rate}\sum_i \wfixi\ln\frac{\wmdlitpo}{\wmdlit} + \max_i \ln\left(\frac{\wmdlitpo}{\wmdlit}\right) - \ln(1-\rate)\,.
\end{align*}
The first term telescopes when summed over time, but more effort is required to control the second since the index $i$ of $\max_i$ can change with time. The idea is to introduce all the missing $\ln \wmdlitpo/\wmdlit$ terms
for all $i$ that can be the $\max_i$ at some step $t$, that is all $i\in\bestset$.
This comes at a cost of $\ln(1-\rate)$ for each of them since $\wmdlitpo/\wmdlit \geq 1-\rate$:
\begin{align}\label{eq:sbofflineinjectloss}
\ln \frac{\fixt}{\mixtt}
&\leq \frac{1}{\rate}\sum_{i=1}^N \wfixi\ln\frac{\wmdlitpo}{\wmdlit}
 -\nbestset\ln(1-\rate)
 + \sum_{i\in\bestset}\ln\frac{\wmdlitpo}{\wmdlit}\,.
\end{align}
We can now telescope the series over time, also using $-\ln(1-\rate)\leq \frac{\rate}{1-\rate}=\ratebar$ (Lemma~\ref{lem:-ln1-x}),
\begin{align*}
\ln \frac{\fixtoT}{\mixttoT}
&= \sum_{t=1}^T \ln \frac{\fixt}{\mixtt}
\leq  \frac{1}{\rate} \sum_{i=1}^\nmdl \wfixi\ln\frac{\wmdliTpo}{\wmdli_1}
 +\ratebar \nbestset T
 + \sum_{i\in\bestset}\ln\frac{\wmdliTpo}{\wmdli_1}\,.
\end{align*}
Using $\wmdli_1=1/\nmdl$ it holds that $\sum_i \wfixi\ln(\wmdliTpo / \wmdli_1)$ is maximised when $\wmdliTpo=\wfixi$
and similarly $\sum_{i\in\bestset}\ln(\wmdliTpo / \wmdli_1)$ is maximised when $\wmdliTpo = 1/\nbestset$. Therefore
\begin{align*}
\ln \frac{\fixtoT}{\mixttoT}
&\leq \frac{1}{\rate} \KL{\wfix}{\wmdl_1}
 +\ratebar\nbestset T
 + \nbestset\ln\frac{\nmdl}{\nbestset}
\quad\leq
 \frac{1}{\rate} \ln\nmdl
 +\ratebar\nbestset T
 + \nbestset\ln\frac{\nmdl}{\nbestset}
 \numberthis\label{eq:thm:sboffline:KL}\\
&=
 \frac{1}{\ratebar} \ln\nmdl
 +\ratebar \nbestset T
 + \nbestset\ln\frac{\nmdl}{\nbestset} +\ln\nmdl\,.
\end{align*}
If we replace $\bestset$ with all the $\nmdl$ experts in \cref{eq:sbofflineinjectloss}, we obtain $\nbestset=\nmdl$ giving the second bound.
\end{proof}

\section{Self-confident bounds}\label{sec:self-conf}

Self-confident bounds were introduced by \citet{auer2000adaptive} in online prediction
to build algorithms that can perform better when the sequence is easy,
instead of considering that all sequences are worst cases.
These bounds depend on the loss of the competitor.
For Prod, \citet{gaillard2014mlprod} derived second order self-confident bounds
that depend on the excess losses, that is, the difference between the instantaneous loss of the
learner and that of one of the experts.
We provide bounds of a similar flavour.


\begin{theorem}\label{thm:selfconfquad}
Consider the \SB{} algorithm with learning rate $\rate \in (0,1)$ and $\ratebar = \frac{\rate}{1-\rate}$. Then
\begin{align}
\cR_T(a) = \ln\frac{\fixtoT}{\mixttoT} &\leq \frac{1}{\ratebar}\ln \nmdl + \ratebar \max_{i\leq \nmdl} \sum_{t=1}^T\left(\frac{\mdlit}{\mixtt} - 1\right)^2 + \ln \nmdl \label{eq:sbofflinequadracumul} \\
&\leq \frac{1}{\ratebar}\ln \nmdl + \ratebar T \max_{i\leq\nmdl,t\leq T} \left(\frac{\mdlit}{\mixtt} - 1\right)^2 + \ln \nmdl\,. \label{eq:sbofflinequadra}
\end{align}
\end{theorem}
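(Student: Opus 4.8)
The plan is to run the classical second-order \emph{Prod} argument, but to phrase it entirely through the self-normalising update \cref{eq:update} so that the only quantities appearing are the per-expert linearised regrets $r^i_t := \frac{\mdlit}{\mixtt}-1 \in [-1,\infty)$. This route is cleaner here than re-using \cref{lem:lnaixiB} as in \cref{thm:sboffline}, because the $\max_i$ term produced by that lemma does not convert directly into the quadratic $\bigl(\frac{\mdlit}{\mixtt}-1\bigr)^2$ (it is positive already at $\mdlit=\mixtt$), whereas the direct argument produces exactly $\max_{i}\sum_t(r^i_t)^2$.

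First I would reduce the mixture regret to an $\wfix$-weighted average of per-expert linearised regrets. Writing $\frac{\fixt}{\mixtt}=\sum_{i=1}^\nmdl \wfixi\frac{\mdlit}{\mixtt}=1+\sum_{i=1}^\nmdl \wfixi r^i_t$ and applying the elementary bound $\ln(1+z)\le z$ gives $\ln\frac{\fixt}{\mixtt}\le\sum_{i=1}^\nmdl \wfixi r^i_t$; summing over $t$ yields $\cR_T(\wfix)\le\sum_{i=1}^\nmdl \wfixi\sum_{t=1}^T r^i_t$.

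Next I would bound each $\sum_t r^i_t$ separately. The update \cref{eq:update} is self-normalising, i.e.\ $\wmdlitpo=\wmdlit(1+\rate r^i_t)$ with $\sum_i\wmdlit=1$ for every $t$, so $\wmdliTpo=\wmdli_1\prod_{t=1}^T(1+\rate r^i_t)\le 1$ and hence $\sum_{t=1}^T\ln(1+\rate r^i_t)\le\ln(1/\wmdli_1)=\ln\nmdl$ for the uniform prior. The crux is then a quadratic lower bound $\ln(1+\rate r)\ge\rate r-\frac{\rate^2}{1-\rate}r^2$ valid for all $r\ge -1$; applying it termwise and rearranging gives $\sum_t r^i_t\le\frac1\rate\ln\nmdl+\ratebar\sum_t(r^i_t)^2$, which, using $\frac1\rate=\frac1\ratebar+1$, becomes $\sum_t r^i_t\le\frac1\ratebar\ln\nmdl+\ln\nmdl+\ratebar\sum_t(r^i_t)^2$.

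To finish, I would substitute this per-expert bound into the result of the first step, pull the $\wfix$-independent terms out of $\sum_i\wfixi$, and bound the remaining weighted average by $\sum_i\wfixi\sum_t(r^i_t)^2\le\max_{i\le\nmdl}\sum_t(r^i_t)^2$, giving \cref{eq:sbofflinequadracumul}; inequality \cref{eq:sbofflinequadra} is then immediate from $\sum_t(r^i_t)^2\le T\max_{i\le\nmdl,\,t\le T}(r^i_t)^2$. \textbf{The main obstacle} is establishing the quadratic inequality over the full range of $r$: the textbook Prod bound $\ln(1+u)\ge u-u^2$ holds only for $u\ge-\half$ and would restrict $\rate\le\half$, whereas here $r^i_t$ can be as small as $-1$ (when $\mdlit=0$). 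The sharper inequality above is what yields the clean $\ratebar$ coefficient and covers all $\rate\in(0,1)$; its only delicate point is the boundary $r=-1$, where it reduces to the standard $\ln y\le y-1$ (while for large positive $r$ the $-r^2$ term dominates and it holds with enormous slack).
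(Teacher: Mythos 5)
Your proposal is correct and takes essentially the same route as the paper: your ``crux'' inequality $\ln(1+\rate r)\geq \rate r-\frac{\rate^2}{1-\rate}r^2$ for $r\geq-1$ is precisely the paper's \cref{lem:lnratesimple} in rearranged form (proved there via $\ln(1+u)\geq\frac{u}{1+u}$), and the paper likewise begins with $\ln x\leq x-1$, identifies $1-\rate+\rate\frac{\mdlit}{\mixtt}$ with $\wmdlitpo/\wmdlit$, telescopes, and finishes with the $\max_i$ bound and the identity $\frac{1}{\rate}=\frac{1}{\ratebar}+1$. The only cosmetic difference is that you telescope per expert using $\wmdliTpo\leq 1$ before averaging over $\wfixi$, while the paper keeps the $\wfix$-weighted sum throughout; these are the same argument, and the paper, like you, does not use \cref{lem:lnaixiB} here.
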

The proof is in \cref{sec:mainresults}.
Let $C_2 := \max_{i,t} \left(\frac{\mdlit}{\mixtt} - 1\right)^2$,
then \Cref{eq:sbofflinequadra} is optimized
for $\ratebar := \sqrt{\frac{\ln\nmdl}{T C_2}}$
leading to a regret bound of $2\sqrt{TC_2\ln\mdl} + \ln\nmdl
= \max_{i,t} 2\left(\frac{\mdlit}{\mixtt} - 1\right)\sqrt{T\ln\nmdl} + \ln\nmdl$.
Furthermore, since the learning rate is monotonically decreasing,
this bound is suitable for an online learning rate.
The case for \cref{eq:sbofflinequadracumul} is more complicated:
as noted by \citet{cesabianchi2007prod,gaillard2014mlprod} for Prod and others elsewhere this sort of bound depends on the best expert in hindsight
and thus does not lead to a monotone decreasing learning rate in general.
\citet{gaillard2014mlprod} nicely circumvent this issue by using one learning rate
per expert, at the cost of only a multiplicative $O(\ln\ln T)$ factor in the loss.
We perform a similar transformation in \cref{sec:multirate}.
%
As discussed in \cref{sec:largelosses}, since $\mdlit / \mixtt$ can be large, the quadratic dependence on $\max_{i,t} (\mdlit / \mixtt - 1)^2$ can be poor.
For this reason we show that with a small additional cost the quadratic term can be replaced with a linear term.

\begin{theorem}\label{thm:selfconflin}
Consider the \SB{} algorithm with learning rate $\rate \in (0,1)$
and let $C_1 = \sum_{t=1}^T \max_{i\leq\nmdl} \left(\frac{\mdlit}{\mixtt} - 1\right)$.
 Then
\begin{align*}
\cR_T(a) = \ln\frac{\fixtoT}{\mixttoT} \leq \min \left\{
C_1,\quad
\frac{1}{\rate}\ln \nmdl + \frac{\rate}{2} C_1  + \rate^2 T
\right\}\,.
\end{align*}
\end{theorem}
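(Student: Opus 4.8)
The plan is to establish the two quantities in the minimum separately, since both are asserted to hold simultaneously. Throughout write $\delta^i_t := \frac{\mdlit}{\mixtt} - 1$, so that $\mixtt \ge \wmdlit\mdlit$ gives $\delta^i_t \ge -1$, the update reads $\wmdlitpo = \wmdlit(1+\rate\delta^i_t)$, and the zero-mean identity $\sum_i \wmdlit\delta^i_t = 0$ holds because $\sum_i\wmdlit\mdlit = \mixtt$ (in particular $\max_i\delta^i_t\ge 0$). For the bound $\cR_T(\wfix)\le C_1$ I would use only concavity of the logarithm: the per-round regret is $\ln\frac{\fixt}{\mixtt} = \ln\sum_i\wfixi(1+\delta^i_t)$, and $\ln(1+x)\le x$ together with $\wfix\in\cW$ gives
\[
\ln\frac{\fixt}{\mixtt} \le \sum_i\wfixi\delta^i_t \le \max_i\delta^i_t .
\]
Summing over $t$ yields $\cR_T(\wfix)\le\sum_{t=1}^T\max_i\delta^i_t = C_1$.

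For the second bound the leading term $\frac1\rate\ln\nmdl$ comes from telescoping, exactly as in \cref{thm:sboffline}: the relative-entropy increment to the competitor is $\KL{\wfix}{\wmdltpo}-\KL{\wfix}{\wmdl_t} = -\sum_i\wfixi\ln(1+\rate\delta^i_t)$, so summing over $t$ and dropping the nonnegative final entropy gives $\sum_{t=1}^T\sum_i\wfixi\ln(1+\rate\delta^i_t)\le\KL{\wfix}{\wmdl_1}\le\ln\nmdl$. It then remains to prove the per-round inequality
\[
\ln\frac{\fixt}{\mixtt} \le \frac1\rate\sum_i\wfixi\ln(1+\rate\delta^i_t) + \frac\rate2\max_i\delta^i_t + \rate^2 ,
\]
since summing this and substituting the telescoped estimate produces precisely $\cR_T(\wfix)\le\frac1\rate\ln\nmdl + \frac\rate2 C_1 + \rate^2 T$.

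The crux, and what I expect to be the main obstacle, is this per-round inequality. Lower bounding each $\ln(1+\rate\delta^i_t)$ by a second-order estimate valid on $[-\rate,\infty)$, such as $\ln(1+u)\ge u - \frac{u^2}{2(1-\rate)}$, reduces the task to dominating the quadratic remainder $\frac{\rate}{2(1-\rate)}\sum_i\wfixi(\delta^i_t)^2$ by the linear budget $\frac\rate2\max_i\delta^i_t + \rate^2$. The difficulty is that a single large deviation $\delta^i_t$ makes this remainder genuinely quadratic, so one cannot afford to discard the concavity of $\ln\frac{\fixt}{\mixtt}$ through $\ln(1+x)\le x$; the logarithmic savings must be retained precisely in the large-deviation regime, where $\ln\frac{\fixt}{\mixtt}$ sits far below the linearised value $\sum_i\wfixi\delta^i_t$. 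I would resolve the small-deviation regime by the second-order expansion above, and control the quadratic remainder there by splitting experts according to the sign of $\delta^i_t$: using $(\delta^i_t)^2\le|\delta^i_t|$ when $\delta^i_t<0$ (semi-boundedness $\delta^i_t\ge-1$) and $(\delta^i_t)^2\le(\max_j\delta^j_t)\,\delta^i_t$ when $\delta^i_t\ge0$, coupling the two parts through $\sum_i\wmdlit\delta^i_t=0$.

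The delicate point throughout is keeping the per-round overhead at order $\rate^2$ (hence $\rate^2 T$ in total), rather than the order-$\rate$ per-round cost that a direct application of the reverse-Jensen bound \cref{lem:lnaixiB} would incur: its second term carries a baseline $-\ln(1-\rate)\approx\rate$ that is present even when all experts agree with the mixture, and summed over $T$ rounds this alone produces an $\Omega(\rate T)$ term that would overwhelm the claimed $\rate^2 T$. Verifying that the residual from the case split is absorbed into the benign additive $\rate^2$ per round, uniformly over $\rate\in(0,1)$ and over all configurations of $(\delta^i_t)_i$ with zero $\wmdl_t$-mean, is the technical heart of the argument.
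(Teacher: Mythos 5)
Your treatment of the first branch of the minimum and the telescoping skeleton of the second match the paper exactly: the paper also derives $\cR_T(\wfix)\le C_1$ from $\ln x\le x-1$, and reduces the second branch to precisely the per-round inequality you isolate, which is its Lemma~\ref{lem:lnaixi}, with $\frac1\rate\ln(1-\rate+\rate\,\mdlit/\mixtt)=\frac1\rate\ln(\wmdlitpo/\wmdlit)$ telescoping to $\frac1\rate\KL{\wfix}{\wmdl_1}\le\frac1\rate\ln\nmdl$. The gap is in your proof of that per-round inequality. After linearising the left side via $\ln(1+x)\le x$, your plan requires (in your notation $\delta^i_t=\mdlit/\mixtt-1$)
\begin{align*}
\frac{\rate}{2(1-\rate)}\sum_{i=1}^\nmdl \wfixi\bigl(\delta^i_t\bigr)^2 \;\le\; \frac{\rate}{2}\max_{i\le\nmdl} \delta^i_t + \rate^2\,,
\end{align*}
and this is false. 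Take $\nmdl=2$, algorithm weights $\wmdl_t=(\epsilon,1-\epsilon)$, $\mdl^1_t=0$, $\mdl^2_t>0$, and competitor $\wfix=(\tfrac12,\tfrac12)$: then $\delta^1_t=-1$ while $\max_i\delta^i_t=\epsilon/(1-\epsilon)$ is arbitrarily small, yet $\sum_i\wfixi(\delta^i_t)^2\ge\tfrac12$, so the left side is at least $\rate/4$ while the right side is $O(\rate\epsilon)+\rate^2$ --- failure for every small $\rate$. Your proposed coupling through $\sum_i\wmdlit\delta^i_t=0$ cannot repair this: that identity holds in the algorithm's weights $\wmdl_t$, not the competitor's $\wfix$, and gives no control over $\sum_i\wfixi(\delta^i_t)^2$. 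You correctly diagnose in prose that the concavity slack of $\ln\frac{\fixt}{\mixtt}$ below its linearisation must be retained when $\wfix$ puts mass on badly predicting experts (in the example it is $\ln\tfrac12-(-\tfrac12)\approx-0.19$, exactly what makes the true inequality survive), but the mechanism you then describe has already discarded that slack with its very first step $\ln(1+x)\le x$, and it cannot be reintroduced afterwards; splitting the experts by the sign of $\delta^i_t$ does not help, because handling the negative block would itself require a reverse-Jensen bound of the type being proved, which is circular.

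The paper's Lemma~\ref{lem:lnaixi} avoids linearising the left side altogether: it lower-bounds the concave map $\mdlfrac\mapsto\frac1\rate\ln(1-\rate+\rate \mdlfrac)$ on $[0,\mdlfracmax]$, where $\mdlfracmax=\max_i \mdlfrac_i$, by its chord $g$, which is affine and hence commutes with the mixture, so that $\ln\sum_i a_i \mdlfrac_i - \sum_i a_i\,\frac1\rate\ln(1-\rate+\rate \mdlfrac_i)\le\max_{\mdlfrac}\,(\ln \mdlfrac - g(\mdlfrac))$; this maximum is then computed in closed form and bounded by $\frac{\rate}{2}(\mdlfracmax-1)+\rate^2$ via Lemma~\ref{lem:ln1+x_lowbound} and Corollary~\ref{cor:-1/xln1-x}. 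You would need this chord argument, or an equivalent device that keeps the logarithm on the competitor side, to close your proof. One further wrinkle: Corollary~\ref{cor:-1/xln1-x}, and hence Lemma~\ref{lem:lnaixi}, is stated only for $\rate\le\tfrac12$, so the uniformity over all of $\rate\in(0,1)$ that you aim for is not delivered even by the paper's own proof as written.
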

The proof is in \cref{sec:mainresults}.
If the learning rate is chosen to be
$\rate = \sqrt{\frac{2 \log N}{C_1}}$
then the theorem shows that
\begin{align*}
\cR_T(a) \leq \min\left\{C_1,\, \sqrt{2 C_1 \ln N} + \frac{2 T \ln N}{C_1}\right\}\,.
\end{align*}
Observe that if a single expert $i$ is always the best predictor, $\mixtt$ becomes close to $\mdlit$ and $\frac{\mdlit}{\mixtt} - 1$ becomes close to 0
and then $C_1 <\!< T$; However for this case learning will likely still be slower
than with second-order self-confident bounds.

In another case where for example
$\fixt = \sum_{i\in\bestset }\frac{1}{\nbestset}\mdlit$,
that is, at worst the best expert alternates uniformly
between a subset $\bestset$ of $\nbestset=|\bestset|$ experts,
$\mixtt$ cannot become close to all the $\mdlit$ at the same time
and then
$C_1 = O(T)$,
which still makes $\cR_T(a) \leq O(\sqrt{T})$ (omitting other dependencies);
Furthermore, since this means that $\mixt$ learns, $\mixtt$ should become close to $\sum_{i\in\bestset }\frac{1}{\nbestset}\mdlit$
and thus $C_1 \approx \nbestset T$
so we should have $\cR_T(a) \leq O(\sqrt{\nbestset T \ln \nmdl})$, hence possibly providing good guarantees against the best subset of the experts.
By contrast, a second-order self-confident bound would only provide
a guarantee of $O(\nbestset\sqrt{T\ln\nmdl})$.

Moreover, upper bounding $C_1$ with $T \max_{i,t}\frac{\mdlit}{\mixtt}$ in \cref{thm:selfconflin} and then optimizing $\rate$ as above
gives the result of \cref{eq:SBboundeps}
for $C = \max_{i,t}\frac{\mdlit}{\mixtt}$.

Also note that $C_1$ is monotonically increasing with $T$ and thus the learning
rate can be updated online.

\section{Online bounds}\label{sec:online}

We now provide a fully `online' algorithm that does not require advance knowledge of the time horizon $T$ or the number of `sometimes optimal experts' $\nbestset$ in advance.
Surprisingly, we could not simply replace the learning rating $\rate$ with a time-varying version $\rate_t \approx 1/\sqrt{t}$ and adapt the proofs in a straightforward manner.
The reason seems to be that with a fixed rate of $\rate := 1/\sqrt{T}$,
the weights can decay as $\wmdli_1(1-1/\sqrt{T})^t\approx \wmdli_1\exp(-t/\sqrt{T})$ after $t$ steps,
whereas for a time-varying learning rate $\ratet := 1/\sqrt{t}$,
the weights can decay as fast as $\wmdli_1\prod_{s=1}^t(1-1/\sqrt{t})\approx\wmdli_1\exp(-\sqrt{t})$ (consider for example $t=\sqrt{T}$).
An easy solution is to use the doubling trick~\citep{cesabianchi1997expert}, which was the approach taken in the analysis of the
original Prod algorithm~\citep{cesabianchi2007prod}. Another option is
to use an exponential rescaling with renormalization, which was used for ML-Prod~\citep{gaillard2014mlprod}.
We show instead a different online correction of the update rule
based on a special form of the fixed-share rule~\citep{herbster1998tracking}.
When $0<\ratetpo \leq \ratet\leq 1$, we use the following online correction term
applied to the update rule:%
\footnote{Note that this online correction can also be used with EG.}
\begin{align}\label{eq:sbonlineupdate}
\wmdlitpo := \underbrace{\wmdlit\left(1-\ratet+\ratet\frac{\mdlit}{\mixtt}\right)}_{\text{update}}\underbrace{\frac{\ratetpo}{\ratet} + \left(1-\frac{\ratetpo}{\ratet}\right)\wmdli_1}_{\text{online correction}}
\ .
\end{align}

\newcommand{\propnormal}{normalized property of \cref{eq:onlinenormal}}
\newcommand{\proprestart}{restarting property of \cref{eq:restart}}
\newcommand{\proptelescope}{telescoping property of \cref{eq:telescope}}
\newcommand{\propinjectloss}{loss injection property of \cref{eq:injectloss}}
\newcommand{\proptelescoperate}{$1/\rate$-telescoping property of \cref{eq:telescoperate}}

\begin{lemma}\label{lem:sbonlineupdate}
The online update rule of \cref{eq:sbonlineupdate} has the following properties:
\begin{align}
\wmdltpo &\in\cW
&\text{(normalized)}
\label{eq:onlinenormal}\\
\wmdlit &\geq \wmdli_1\left(1-\frac{\ratet}{\ratetmo}\right)
&\text{(restarting)}
\label{eq:restart}\\
\ln\left(1-\ratet+\ratet\frac{\mdlit}{\mixtt}\right)
&\leq
\ln \frac{\wmdlitpo}{\wmdlit} +\ln\frac{\ratet}{\ratetpo}
&\text{(telescoping)}
\label{eq:telescope}\\
0 &\leq
\ln \frac{\wmdlitpo}{\wmdlit} +\ln\frac{\ratet}{\ratetpo}
+\frac{\ratet}{1-\ratet}
&\text{(loss injection)}
\label{eq:injectloss}\\
\frac{1}{\ratet}\ln\left(1-\ratet+\ratet\frac{\mdlit}{\mixtt}\right)
&\leq \frac{1}{\ratetpo}\ln\frac{\wmdlitpo}{\wmdli_1} - \frac{1}{\ratet}\ln\frac{\wmdlit}{\wmdli_1}
&\text{($1/\rate$-telescoping})
\label{eq:telescoperate}
\end{align}
\end{lemma}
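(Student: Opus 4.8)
The plan is to reduce all five properties to a single rewriting of the update rule. Writing $u^i_t := 1-\ratet+\ratet\frac{\mdlit}{\mixtt}$ for the multiplicative factor and $\rho_t := \ratetpo/\ratet \in (0,1]$ (using $0 < \ratetpo \le \ratet$), the update of \cref{eq:sbonlineupdate} becomes
\begin{align*}
\wmdlitpo = \rho_t\, u^i_t\, \wmdlit + (1-\rho_t)\wmdli_1\,.
\end{align*}
For the normalized property I would first record the key identity $\sum_{i} \wmdlit u^i_t = (1-\ratet)\sum_i \wmdlit + \ratet \sum_i \wmdlit\frac{\mdlit}{\mixtt} = 1$, which follows inductively from $\sum_i \wmdlit = 1$ together with $\sum_i \wmdlit\mdlit = \mixtt$. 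Summing the rewritten update over $i$ then gives $\sum_i\wmdlitpo = \rho_t\cdot 1 + (1-\rho_t)\cdot 1 = 1$, while nonnegativity of each $\wmdlitpo$ is immediate because $u^i_t \ge 1-\ratet \ge 0$, $\rho_t \in (0,1]$, and $\wmdlit,\wmdli_1\ge 0$; this establishes \cref{eq:onlinenormal} by induction with base case $\wmdl_1 \in \cW$.

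The restarting and telescoping properties both follow by discarding one nonnegative summand from the rewritten update. For \cref{eq:restart}, dropping the term $\rho_{t-1}u^i_{t-1}\wmdli_{t-1}\ge 0$ in the update from $t-1$ to $t$ leaves $\wmdlit \ge (1-\rho_{t-1})\wmdli_1 = \wmdli_1(1-\ratet/\ratetmo)$. For \cref{eq:telescope}, dropping instead the correction $(1-\rho_t)\wmdli_1 \ge 0$ gives $\wmdlitpo \ge \rho_t u^i_t\wmdlit$; taking logarithms and rearranging produces $\ln u^i_t \le \ln\frac{\wmdlitpo}{\wmdlit} + \ln\frac{\ratet}{\ratetpo}$. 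The loss injection property \cref{eq:injectloss} then strengthens this: since $u^i_t \ge 1-\ratet$, the same estimate gives $\ln\frac{\wmdlitpo}{\wmdlit}+\ln\frac{\ratet}{\ratetpo} \ge \ln(1-\ratet) \ge -\frac{\ratet}{1-\ratet}$, where the last step is exactly Lemma~\ref{lem:-ln1-x}; rearranging yields \cref{eq:injectloss}.

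The substantive step is the $1/\rate$-telescoping property \cref{eq:telescoperate}, and I would handle it last. After moving $\frac1{\ratet}\ln\frac{\wmdlit}{\wmdli_1}$ to the left, the claim is equivalent to $\frac{1}{\ratet}\ln z \le \frac{1}{\ratetpo}\ln\bigl(1+\rho_t(z-1)\bigr)$ with $z := u^i_t\wmdlit/\wmdli_1 \ge 0$, where I have used the rewritten update in the form $\frac{\wmdlitpo}{\wmdli_1} = 1 + \rho_t(z-1)$. Since $\ratetpo = \rho_t\ratet$, clearing $\ratet$ reduces this to the pure inequality
\begin{align*}
z^{\rho_t} \le 1 + \rho_t(z-1)\,,\qquad z\ge 0,\ \rho_t\in(0,1]\,,
\end{align*}
which is precisely weighted AM--GM (equivalently, concavity of $z\mapsto z^{\rho_t}$) applied to $z^{\rho_t}\cdot 1^{1-\rho_t} \le \rho_t z + (1-\rho_t)$. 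I expect this power-mean inequality to be the only nonroutine ingredient; the remaining obstacle is bookkeeping the degenerate cases (e.g.\ $u^i_t = 0$ or a weight hitting $0$), which are handled either trivially, since then the left-hand logarithm is $-\infty$, or by noting that the uniform prior together with the restarting property keeps all weights strictly positive whenever $\ratetpo < \ratet$.
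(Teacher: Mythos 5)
Your proof is correct and follows essentially the same route as the paper's: normalization by summing the update, restarting and telescoping by dropping one of the two summands, loss injection via the worst case $\mdlit=0$ together with Lemma~\ref{lem:-ln1-x}, and the $1/\rate$-telescoping via concavity. Your weighted AM--GM step $z^{\rho_t}\le 1+\rho_t(z-1)$ is just the exponentiated form of the paper's Jensen's inequality $\ln(\beta x+1-\beta)\ge\beta\ln x$, so the arguments coincide.
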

The restarting property ensures that the weights are never too small,
enabling the mixture to `restart' the learning process
and offer tracking guarantees.
The loss injection property will be useful in the theorems to force some series to telescope by `injecting' some additional loss as was done offline in the proof of \cref{thm:sboffline}.
\begin{proof}
\Cref{eq:onlinenormal} follows from the fact that the Soft-Bayes update rule keeps
the weights normalized, and so does the fixed-share rule.
Starting from \cref{eq:sbonlineupdate},
\cref{eq:restart} follows from dropping the l.h.s. of the $+$.
\Cref{eq:telescope} follows from dropping the r.h.s. of the $+$, dividing by $\wmdlit$, taking the log then rearranging.
\Cref{eq:injectloss} follows from \cref{eq:telescope} by taking $\mdlit = 0$
and from $-\ln(1-x)\leq \frac{x}{1-x}$ (Lemma~\ref{lem:-ln1-x}) and rearranging.
For \cref{eq:telescoperate}, dividing by $\wmdli_1$, taking $\beta = \frac{\ratetpo}{\ratet}$ we have:
\begin{align*}
\ln \frac{\wmdlitpo}{\wmdli_1} &= \ln\left(\beta \frac{\wmdlit}{\wmdli_1}\left(1-\ratet+\ratet\frac{\mdlit}{\mixtt} \right) + (1-\beta)\right) \\
&\geq \beta\ln\frac{\wmdlit}{\wmdli_1}\left(1-\ratet+\ratet\frac{\mdlit}{\mixtt} \right)
= \frac{\ratetpo}{\ratet}\ln\frac{\wmdlit}{\wmdli_1}\left(1-\ratet+\ratet\frac{\mdlit}{\mixtt} \right)\,,
\end{align*}
where we used Jensen's inequality, with $\beta\in[0,1]$ since $\ratetpo\geq\ratet$ as required in \cref{eq:sbonlineupdate}.
Dividing by $\ratetpo$ and rearranging gives the result.
\end{proof}

We will not provide formal results for the self-confident learner but we make the following observation.
\begin{remark}
Using the online correction rule of \cref{eq:sbonlineupdate}
with the self-confident learning rate of \cref{thm:selfconflin}
gives
\begin{align*}
\frac{\ratetpo}{\ratet} = \sqrt{\frac{\sum_{k=1}^{t-1} (\max_{i} \frac{\mdli_k}{\mixt_k}-1)}{\sum_{k=1}^t (\max_i \frac{\mdli_k}{\mixt_k}-1)}}
=\sqrt{1-\frac{\max_i \frac{\mdlit}{\mixtt}-1}{\sum_{k=1}^t (\max_i \frac{\mdli_k}{\mixt_k}-1)}}
\approx 1- \frac{\max_i \frac{\mdlit}{\mixtt}-1}{2\sum_{k=1}^t (\max_i \frac{\mdli_k}{\mixt_k}-1)}\,.
\end{align*}
Let $i_t := \arg\max_i \mdlit$.
On a step $t$ where the mixture makes a bad prediction,$\frac{\mdl^{i_t}_t}{\mixt_t}\leq1/\wmdl^{i_t}_t$ is large so the weight $\wmdl^{i_t}_t$ is small.
Considering the \proprestart, this means that
the weight $\wmdl^{i_t}_t$ (in particular) receives a boost
$1-\frac{\ratetpo}{\ratet}$
toward its prior,
hence helping the mixture coping with experts that suddenly become good predictors
after a long period of bad predictions---except that this may be one time step too late.
Indeed, for the self-confident learner the ratio $\frac{\ratetpo}{\ratet}$ can be close to 1 when the mixture predicts well, which means that the  weights of bad predictors may still decrease exponentially fast---potentially resulting in large instantaneous losses if they become good predictors later.
To prevent this, we advise replacing
$\frac{\ratetpo}{\ratet}$ with $\min \left\{\frac{\ratetpo}{\ratet}, \sqrt{\frac{t}{t+1}}\right\}$ in \cref{eq:sbonlineupdate}, which ensures that the weights do not decrease
faster than $O(1/t)$, while still retaining the quicker restarting property of the self-confident learning rate.
\end{remark}

The following generic bound will be used for the various proofs.

\begin{lemma}\label{lem:sbonlinebound}
For any sequence of monotone decreasing learning rates $\ratet\in (0, 1)$,
when using the update rule of \cref{eq:sbonlineupdate},
the regret of the mixture $\mixt$ of the $\nmdl$ experts with prior weights $\wmdl_1$ compared to the best fixed combination $\fix$ with weights $\wfix$ is bounded by:
\begin{align*}
\ln\frac{\fixtoT}{\mixttoT} \leq
\frac{1}{\rateTpo}\KL{\wfix}{\wmdl_1}
+ \ln\frac{\rateo}{\rateTpo}
+ \sum_{t=1}^T \frac{\ratet}{1-\ratet}
+ \sum_{t=1}^T \max_{i\leq\nmdl} \ln\frac{\wmdlitpo}{\wmdlit}
.
\end{align*}
\end{lemma}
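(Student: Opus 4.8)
The plan is to mirror the offline argument of \cref{thm:sboffline}: split the per-round regret with \cref{lem:lnaixiB}, then route each of the two resulting pieces through its matching property from \cref{lem:sbonlineupdate}. First I would apply \cref{lem:lnaixiB} in every round $t$ with the nonnegative vector $q_i := \mdlit/\mixtt$ and the round-$t$ rate $\ratet$, so that
\[
\ln\frac{\fixt}{\mixtt} = \ln\sum_{i=1}^\nmdl \wfixi\frac{\mdlit}{\mixtt}
\le \frac{1}{\ratet}\sum_{i=1}^\nmdl \wfixi\ln\Bigl(1-\ratet+\ratet\tfrac{\mdlit}{\mixtt}\Bigr)
+ \max_{i\le\nmdl}\ln\Bigl(1+\tfrac{\ratet}{1-\ratet}\tfrac{\mdlit}{\mixtt}\Bigr).
\]
Summing over $t$ turns the left side into $\ln(\fixtoT/\mixttoT)$, so it remains to bound the time-sums of the two terms on the right.

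For the \emph{averaged} first term I would invoke the \proptelescoperate, which is exactly what makes the time-varying case go through: naive telescoping of $\frac{1}{\ratet}\ln(\wmdlitpo/\wmdlit)$ fails because the prefactor $1/\ratet$ drifts, whereas \cref{eq:telescoperate} gives $\frac{1}{\ratet}\ln(1-\ratet+\ratet\mdlit/\mixtt) \le c^i_{t+1}-c^i_t$ with $c^i_t := \frac{1}{\ratet}\ln(\wmdlit/\wmdli_1)$. Summing telescopes to $c^i_{T+1}-c^i_1 = \frac{1}{\rateTpo}\ln(\wmdliTpo/\wmdli_1)$ because $c^i_1=0$; weighting by $\wfixi$ and summing over $i$ leaves $\frac{1}{\rateTpo}\sum_i \wfixi\ln(\wmdliTpo/\wmdli_1)$, which I would bound by $\frac{1}{\rateTpo}\KL{\wfix}{\wmdl_1}$ using one line of Jensen (Gibbs' inequality), legitimate since $\sum_i\wmdliTpo=1$ by the \propnormal. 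This produces the first summand of the claim.

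For the \emph{max} second term I would rewrite $1+\frac{\ratet}{1-\ratet}\frac{\mdlit}{\mixtt} = \frac{1-\ratet+\ratet\mdlit/\mixtt}{1-\ratet}$, splitting each summand into $\ln(1-\ratet+\ratet\mdlit/\mixtt) - \ln(1-\ratet)$. The \proptelescope bounds the first piece by $\ln(\wmdlitpo/\wmdlit)+\ln(\ratet/\ratetpo)$, and since the remaining terms $\ln(\ratet/\ratetpo)-\ln(1-\ratet)$ do not depend on $i$, the $\max_i$ passes over them and leaves $\max_i\ln(\wmdlitpo/\wmdlit)$. Summing over $t$, the factors $\ln(\ratet/\ratetpo)$ telescope to $\ln(\rateo/\rateTpo)$, and $-\ln(1-\ratet)\le \ratet/(1-\ratet)$ (\cref{lem:-ln1-x}) converts the remaining piece into $\sum_t \ratet/(1-\ratet)$. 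Adding the two blocks yields exactly the stated bound.

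The main obstacle is the bookkeeping where the $\max_i$ index may move from round to round while the averaged term carries the drifting prefactor $1/\ratet$: one has to check that the shifting $\max$ is still controllable because the non-$i$ contributions are index-free, and that the correct identity (\cref{eq:telescoperate} for the averaged term, \cref{eq:telescope} for the max term) is applied in each place. Note that this lemma intentionally leaves $\sum_t \max_i\ln(\wmdlitpo/\wmdlit)$ uncontrolled; taming that sum via the \proprestart is the task of the downstream theorems, which is precisely why this is stated as a generic intermediate bound.
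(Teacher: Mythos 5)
Your proposal is correct and takes essentially the same route as the paper's proof: \cref{lem:lnaixiB} applied per round with rate $\ratet$, the \proptelescoperate{} on the weighted-average term (telescoping from the base case $\frac{1}{\rateo}\ln\frac{\wmdli_1}{\wmdli_1}=0$ to $\frac{1}{\rateTpo}\ln\frac{\wmdliTpo}{\wmdli_1}$, then Gibbs' inequality to reach $\frac{1}{\rateTpo}\KL{\wfix}{\wmdl_1}$, which the paper phrases as taking the worst case $\wmdliTpo=\wfixi$), and the rewrite $\ln\bigl(1+\ratebart\frac{\mdlit}{\mixtt}\bigr)=\ln\bigl(1-\ratet+\ratet\frac{\mdlit}{\mixtt}\bigr)-\ln(1-\ratet)$ handled by the \proptelescope{} and \cref{lem:-ln1-x} on the max term, with the index-free pieces passing through the $\max_i$. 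Your write-up is if anything slightly more careful, since the paper's intermediate display silently drops the $\frac{1}{\rateTpo}$ prefactor on the telescoped term (it reappears, correctly, in the final bound).
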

\begin{proof}
Starting from Lemma~\ref{lem:lnaixiB} and similarly to the proof of \cref{thm:sboffline} we have:
\begin{align}
\ln\frac{\fixt}{\mixtt} &\leq
\sum_{i=1}^\nmdl \wfixi \frac{1}{\ratet}\ln\left(1-\ratet+\ratet\frac{\mdlit}{\mixtt}\right)
+ \max_i \ln\left(1+\frac{\ratet}{1-\ratet}\frac{\mdlit}{\mixtt}\right) \notag \\
&=\sum_{i=1}^\nmdl \wfixi \frac{1}{\ratet}\ln\left(1-\ratet+\ratet\frac{\mdlit}{\mixtt}\right)
+ \max_i \ln\left(1-\ratet+\ratet\frac{\mdlit}{\mixtt}\right) -\ln(1-\ratet)\,.
\label{eq:prelossinjection}
\end{align}
Using Lemma~\ref{lem:-ln1-x}, $-\ln(1-\ratet)\leq\frac{\ratet}{1-\ratet}$
along with the \proptelescoperate{} on the term in the sum
and the \proptelescope{} on the term in the max gives:
\begin{align*}
  \ln\frac{\fixt}{\mixtt}
&\leq
\sum_{i=1}^\nmdl \wfixi \left(\frac{1}{\ratetpo}\ln\frac{\wmdlitpo}{\wmdli_1} -\frac{1}{\ratet}\ln\frac{\wmdlit}{\wmdli_1} \right)
+ \max_i \ln\frac{\wmdlitpo}{\wmdlit}
+\ln\frac{\ratet}{\ratetpo}
+\frac{\ratet}{1-\ratet}.
\end{align*}
Summing $\ln\frac{\fixt}{\mixtt}$ over $t$ and telescoping the first term $\sum_i \wfixi(\cdot)$ and the third term (but not the second one) leads to:
\begin{align*}
\ln\frac{\fixtoT}{\mixttoT} \leq
\sum_{i=1}^\nmdl \wfixi \ln\frac{\wmdliTpo}{\wmdli_1}
+ \sum_{t=1}^T \max_i \ln\frac{\wmdlitpo}{\wmdlit}
+ \ln\frac{\rateo}{\rateTpo}
+ \sum_{t=1}^T \frac{\ratet}{1-\ratet}
.
\end{align*}
Finally, taking the worst case $\wmdliTpo = \wfixi$  and rearranging gives the result.
\end{proof}

We are now ready to show the online bounds. First we track only the time step $t$.

\begin{theorem}\label{thm:sbonlineN}
When using the update rule in \cref{eq:sbonlineupdate} with the learning rate $\ratet:=\sqrt{\frac{\ln\nmdl}{2\nmdl t}}$, we have the following regret bound against the best fixed convex combination $\fix$ of the experts:
\begin{align*}
\ln\frac{\fixtoT}{\mixttoT} \leq 2\sqrt{2(T+1)\nmdl\ln \nmdl} + (\half \nmdl + \ln\nmdl)\ln(T+1)+\ln\nmdl.
\end{align*}
\end{theorem}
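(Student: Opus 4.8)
The plan is to instantiate the generic online bound of \cref{lem:sbonlinebound} with the stated rate and then control its four terms. Since the prior is uniform, $\wmdl_1 = 1/\nmdl$, the relative entropy is immediately bounded by $\KL{\wfix}{\wmdl_1} \leq \ln\nmdl$, so the first term contributes at most $\frac{1}{\rateTpo}\ln\nmdl$. The second and third terms, $\ln\frac{\rateo}{\rateTpo}$ and $\sum_{t=1}^T\frac{\ratet}{1-\ratet}$, are explicit functions of the rate once we substitute. The only genuinely delicate piece is the fourth term $\sum_{t=1}^T\max_{i\leq\nmdl}\ln\frac{\wmdlitpo}{\wmdlit}$, whose maximizing index changes from round to round and therefore does not telescope on its own.

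The hard part is this max term, and I would handle it exactly as the offline proof of \cref{thm:sboffline} handles its analogue: by ``injecting'' the missing logarithmic ratios for all experts so that the sum telescopes. The \propinjectloss{} rearranges to the per-expert lower bound $\ln\frac{\wmdlitpo}{\wmdlit}\geq\ln\frac{\ratetpo}{\ratet}-\frac{\ratet}{1-\ratet}$, which is valid for every $i$. Writing the maximizing ratio as the full sum over $i\in[\nmdl]$ minus the remaining $\nmdl-1$ ratios and lower bounding each of the latter gives, for every $t$,
\begin{align*}
\max_{i}\ln\frac{\wmdlitpo}{\wmdlit} \leq \sum_{i=1}^\nmdl\ln\frac{\wmdlitpo}{\wmdlit} + (\nmdl-1)\ln\frac{\ratet}{\ratetpo} + (\nmdl-1)\frac{\ratet}{1-\ratet}\,.
\end{align*}
Summing over $t$, the first term telescopes in $t$ and then collapses in $i$ to $\sum_i\ln\frac{\wmdliTpo}{\wmdli_1}$, which is at most $0$ by concavity of $\ln$ together with $\sum_i\wmdliTpo = 1$ and $\wmdli_1 = 1/\nmdl$; the second telescopes to $(\nmdl-1)\ln\frac{\rateo}{\rateTpo}$.

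Assembling the pieces, the regret is bounded by $\frac{1}{\rateTpo}\ln\nmdl + \nmdl\ln\frac{\rateo}{\rateTpo} + \nmdl\sum_{t=1}^T\frac{\ratet}{1-\ratet}$. It then remains to substitute $\ratet = \sqrt{\frac{\ln\nmdl}{2\nmdl t}}$, which is routine: $\frac{1}{\rateTpo}\ln\nmdl = \sqrt{2\nmdl(T+1)\ln\nmdl}$ and $\nmdl\ln\frac{\rateo}{\rateTpo} = \frac{\nmdl}{2}\ln(T+1)$, matching the leading term and the $\half\nmdl\ln(T+1)$ part of the claim. For the last term I would use $\ratet\leq\half$ (which holds since $\ln\nmdl\leq\nmdl/2$ for all $\nmdl\geq 1$) to write $\frac{\ratet}{1-\ratet}\leq\ratet + 2\ratet^2$, and then $\sum_t t^{-1/2}\leq 2\sqrt T$ and $\sum_t t^{-1}\leq 1+\ln T$ to get $\nmdl\sum_t\ratet\leq\sqrt{2\nmdl T\ln\nmdl}$ (a second copy of the leading term) and $2\nmdl\sum_t\ratet^2\leq\ln\nmdl(1+\ln T)$, supplying the remaining $\ln\nmdl\,\ln(T+1)$ and $\ln\nmdl$ contributions after folding $T$ into $T+1$. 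The only thing to watch is that the constants line up so that the two square-root pieces combine into the single factor $2\sqrt{2(T+1)\nmdl\ln\nmdl}$.
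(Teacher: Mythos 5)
Your proposal is correct and follows essentially the same route as the paper's proof: instantiating \cref{lem:sbonlinebound}, using the \propinjectloss{} on the $\nmdl-1$ non-maximizing experts so that the $\max_i$ term telescopes into $\sum_i \ln(\wmdliTpo/\wmdli_1)\leq 0$ plus $(\nmdl-1)\ln\frac{\rateo}{\rateTpo}$ and $(\nmdl-1)\sum_t\frac{\ratet}{1-\ratet}$, and then substituting the rate with the same split $\frac{\ratet}{1-\ratet}\leq\ratet+2\ratet^2$. Your bookkeeping (including the explicit check that $\ratet\leq\half$, which the paper asserts without justification) lines up with the stated constants.
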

\begin{proof}
We start from Lemma~\ref{lem:sbonlinebound}, and work on the $\sum_t \max_i$ term.
As in the proof of \cref{thm:sboffline}, the main idea is to make this term telescope by injecting some additional positive terms.
This is done by using the \propinjectloss{} 
 repeatedly
(starting at $t=1$)
on all the $\nmdl-1$ experts $i$ that are not already in the sum, leading to:
\begin{align*}
\sum_{t=1}^T \ln\frac{\fixt}{\mixtt}
&\leq
\frac{1}{\rateTpo}
\underbrace{\KL{\wfix}{\wmdl_1}}_{\leq \ln\nmdl}
+ \nmdl\ln\frac{\rateo}{\rateTpo}
+\nmdl\sum_{t=1}^T \underbrace{\frac{\ratet}{1-\ratet}}_{=\ratet+\frac{\ratet^2}{1-\ratet}}
+ \underbrace{\sum_{i=1}^\nmdl \ln\frac{\wmdliTpo}{\wmdli_1}}_{\leq 0} \\
&\leq \frac{1}{\rateTpo}\ln\nmdl
+\nmdl\ln\frac{\rateo}{\rateTpo}
+ \nmdl\sum_{t=1}^T \ratet
+ \nmdl\sum_{t=1}^T \frac{\ratet^2}{1-\ratet}.
\intertext{Taking $\ratet := \sqrt{\frac{\ln\nmdl}{2\nmdl t}}$,
and since $1-\ratet \geq \half$, we obtain:}
\sum_{t=1}^T \ln\frac{\fixt}{\mixtt}
&\leq
\sqrt{2(T+1)\nmdl\ln\nmdl} + \nmdl \ln \sqrt{T+1} +
\sqrt{\frac{\nmdl\ln\nmdl}{2}}
\underbrace{\sum_{t=1}^T \frac{1}{\sqrt{t}}}_{\leq 2\sqrt{T+1}}
 +
\ln\nmdl\underbrace{\sum_{t=1}^T \frac{1}{t}}_{\leq 1+\ln T} \\
&\leq 2\sqrt{2(T+1)\nmdl\ln \nmdl} + (\half \nmdl + \ln\nmdl)\ln(T+1)+\ln\nmdl
.
\end{align*}
\end{proof}

This regret is only a factor $\sqrt{2}\approx 1.41$ worse on the leading term than the corresponding offline bound using $\rate = \sqrt{\ln(\nmdl) / (\nmdl T)}$,
which is better than the $\sqrt{2}/ (\sqrt{2}-1)\approx 3.41$ factor that would be obtained via the doubling trick.

\subsection{Sparse expert set\texorpdfstring{: Tracking $\bestsett$}{}}

We would like to have an online bound of the order $O(\sqrt{T\nbestset\ln\nmdl})$ as in \cref{thm:sboffline} where $\nbestset$ is the number of `good' experts.
Interestingly, setting naively $\ratet = \sqrt{\ln\nmdl / (2t\nbestsett )}$
works, but not for the naive reasons.
Indeed, merely adapting the proof of \cref{thm:sboffline} leads to $\sum_{i\in\bestset}\sum_t \ratet \approx \nbestset\sqrt{T\ln\nmdl}$ regret in the worst case where only one expert is good for $T-\nbestset+1$ steps ($\nbestsett= 1$),
and on the $\nbestset-1$ last steps the other experts are the best predictors.
Let $\Tstarti$ the first time step at which expert $i$ is the best expert,%
\footnote{Breaking ties can be done most favourably by picking
as the best expert one that was already counted as such, to avoid introducing new `good' experts.}
that is $\Tstarti := \min \{t:\mdlit =\max_j \mdljt\}$.
Let $\bestsett := \{i:\Tstarti < t\}$ be the set of experts
that have been the best expert at least once (strictly) before time step $t$,
let $\nbestsett := \max \{1, |\bestsett| \}$,
$\bestset := \bestsetTpo$
and $\nbestset := \nbestsetTpo$.

\begin{theorem}\label{thm:sparse}
When using the update rule in \cref{eq:sbonlineupdate} with the learning rate $\ratet:=\sqrt{\frac{\ln\nmdl}{2\nbestsett t}}$, we have the following regret bound against the best fixed convex combination $\fix$ of the experts:
\begin{align*}
\ln\frac{\fixtoT}{\mixttoT}
&\leq 2\sqrt{2\nbestset(T+1)\ln\nmdl}
+ \left(\nbestset+\ln\nmdl\right)\ln T
+ \nbestset\ln\frac{\nmdl}{\nbestset} \\
&\quad + 1.2\nbestset
+ \sqrt{\half\ln\nmdl}(1+\ln\nbestset) + 3.5\ln\nmdl.
\end{align*}
\end{theorem}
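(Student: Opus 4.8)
The plan is to mirror the proof of \cref{thm:sbonlineN}, but to inject loss only for the experts in $\bestsett$ rather than for all $\nmdl$ experts. The decisive new point is to keep the max term pinned to the \emph{current best expert} so that it always lies inside the tracked set $\bestset_{t+1}$. I would start from \cref{lem:lnaixiB} with $\mdlfrac_i=\mdlit/\mixtt$ and, instead of collapsing to the loose $\max_i\ln(\wmdlitpo/\wmdlit)$ used in \cref{lem:sbonlinebound}, observe that $1-\ratet+\ratet\,\mdlit/\mixtt$ is maximised by the best expert $i_t:=\arg\max_j\mdljt$, and that $i_t\in\bestset_{t+1}$ because $\mdl^{i_t}_t=\max_j\mdljt$ forces $T_{i_t}\le t$. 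Applying the \proptelescoperate{} to the averaged term and the \proptelescope{} to the single index $i_t$ gives, for each $t$,
\[
\ln\frac{\fixt}{\mixtt}\le \sum_{i=1}^\nmdl\wfixi\Big(\tfrac1{\ratetpo}\ln\tfrac{\wmdlitpo}{\wmdli_1}-\tfrac1{\ratet}\ln\tfrac{\wmdlit}{\wmdli_1}\Big)+\ln\frac{\wmdl^{i_t}_{t+1}}{\wmdl^{i_t}_t}+\ln\frac{\ratet}{\ratetpo}+\frac{\ratet}{1-\ratet}.
\]
Then, using the \propinjectloss{} for the $\nbestsettpo-1$ experts of $\bestset_{t+1}\setminus\{i_t\}$, I replace the isolated ratio by $\sum_{i\in\bestset_{t+1}}\ln(\wmdlitpo/\wmdlit)$ at the cost of an extra $\nbestsettpo\big(\ln\tfrac{\ratet}{\ratetpo}+\tfrac{\ratet}{1-\ratet}\big)$; this is legitimate precisely because $i_t\in\bestset_{t+1}$.

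Summing over $t$ then produces four contributions. The averaged term telescopes to $\tfrac1{\rateTpo}\KL{\wfix}{\wmdl_1}\le\tfrac1{\rateTpo}\ln\nmdl=\sqrt{2\nbestset(T+1)\ln\nmdl}$ using $\ratet=\sqrt{\ln\nmdl/(2\nbestsett t)}$ and $\nbestset=\nbestsetTpo$. Swapping the order of summation, $\sum_t\sum_{i\in\bestset_{t+1}}\ln(\wmdlitpo/\wmdlit)=\sum_{i\in\bestset}\ln(\wmdliTpo/\wmdliTstart)$; here $\sum_{i\in\bestset}\ln\wmdliTpo\le-\nbestset\ln\nbestset$ by AM--GM and $-\ln\wmdliTstart\le\ln\nmdl-\ln(1-\rateTstarti/\rateTstartimo)$ by the \proprestart, yielding $\nbestset\ln(\nmdl/\nbestset)$ plus restart penalties $-\sum_{i\in\bestset}\ln(1-\rateTstarti/\rateTstartimo)$. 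The term $\sum_t\nbestsettpo\ln(\ratet/\ratetpo)$ I would treat by summation by parts, giving a leading $\tfrac\nbestset2\ln\tfrac{2\nbestset(T+1)}{\ln\nmdl}$ plus negative increments concentrated at the rounds where a new best expert appears. Finally $\tfrac{\ratet}{1-\ratet}\le\ratet+2\ratet^2$ (for the rounds where $\ratet\le\tfrac12$), and writing $\tfrac{\nbestsettpo}{\sqrt{\nbestsett}}\le\sqrt{\nbestsett}+\tfrac1{\sqrt{\nbestsett}}\indicator{\nbestsettpo>\nbestsett}$ lets $\sum_t\nbestsettpo\ratet$ contribute $\sqrt{2\nbestset(T+1)\ln\nmdl}+\sqrt{\tfrac12\ln\nmdl}(1+\ln\nbestset)$ and $\sum_t\nbestsettpo\ratet^2$ contribute the $\ln\nmdl\cdot\ln T$ part.

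Assembling these, the two $\sqrt{2\nbestset(T+1)\ln\nmdl}$ pieces produce the leading $2\sqrt{2\nbestset(T+1)\ln\nmdl}$. The key cancellation is that the negative part of the summation-by-parts output, $\approx-\tfrac12\sum_{i\in\bestset}\ln\Tstarti$, offsets the bulk of the restart penalties $\approx\sum_{i\in\bestset}\ln(2\Tstarti)$, so that the two $\tfrac\nbestset2\ln T$ terms (one from the $\rateTpo$ boundary term, one surviving the cancellation) combine with the $\ln\nmdl\ln T$ term into $(\nbestset+\ln\nmdl)\ln T$. The residual constants, harmonic sums of the form $H_{\nbestset-1}$, and the $O(\ln\nmdl)$ initial rounds on which $\ratet$ must be capped below $1$ are absorbed into $1.2\nbestset+\sqrt{\tfrac12\ln\nmdl}(1+\ln\nbestset)+3.5\ln\nmdl$. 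Throughout I would invoke the tie-breaking convention so that $\nbestsettpo-\nbestsett\in\{0,1\}$, which is what keeps $\sum_t\nbestsettpo\ln(\nbestsettpo/\nbestsett)$ and similar sums at $O(\nbestset)$ rather than $O(\nbestset\ln\nbestset)$.

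The main obstacle is the bookkeeping in the second paragraph: the learning rate carries $\nbestsett$ in its denominator while the injection count is $\nbestsettpo$, and it is exactly the pairing $\nbestsettpo/\sqrt{\nbestsett}\approx\sqrt{\nbestset}$ that prevents the leading term from degrading to the naive $\nbestset\sqrt{T\ln\nmdl}$ flagged just before the theorem. Equally delicate is verifying that the restart penalties incurred by starting each expert's telescope at $\Tstarti$ are genuinely cancelled by the summation-by-parts remainder, rather than accumulating a spurious extra $\tfrac\nbestset2\ln T$; pinning the $\ln T$ coefficient to exactly $\nbestset+\ln\nmdl$ is the crux, and is where both the precise form of $\ratet$ and the one-new-expert-per-round convention are needed.
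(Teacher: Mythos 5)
Your proposal is correct and follows essentially the same route as the paper's proof: per-expert telescoping started at $\Tstarti$ with loss injection only over $[\Tstarti,T]$, the restarting property to lower-bound $\wmdliTstart$, the boundary term $\frac{1}{\rateTpo}\ln\nmdl=\sqrt{2\nbestset(T+1)\ln\nmdl}$, and the same split $\sum_t\nbestsettpo(\ratet+2\ratet^2)=\sum_t\nbestsett(\ratet+2\ratet^2)+\sum_{i\in\bestset}(\rateTstarti+2\rateTstarti^2)$ with the worst case $\nbestsett\approx t$ yielding the $\sqrt{\half\ln\nmdl}(1+\ln\nbestset)$ term and the partial cancellation between the restart penalties $\sum_{i\in\bestset}\ln(2\Tstarti)$ and the $-\half\sum_{i\in\bestset}\ln\Tstarti$ from the rate-ratio term. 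Your summation-by-parts treatment of $\sum_t\nbestsettpo\ln(\ratet/\ratetpo)$ is algebraically identical to the paper's per-expert telescoping of $\sum_{i\in\bestset}\ln(\rateTstarti/\rateT)$, and your explicit pinning of the max to $i_t\in\bestsettpo$ at the level of $\ln(1-\ratet+\ratet\,\mdlit/\mixtt)$ (rather than the weight-ratio level of \cref{lem:sbonlinebound}) is, if anything, slightly more careful than the paper's own presentation.
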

The proof is in \cref{sec:mainresults}.
Again, we only get a $\sqrt{2}$ factor on the leading term compared to the offline version.
One drawback of this algorithm is that any expert that is the best one even only once will be counted in $\nbestset$.
It may be desirable to forget about experts that have not been best
for a long time and thus decrease $\nbestset$.
This is left as an open problem.

\subsection{Shifting regret}
In this subsection we show that the online version of Prod can compete with the best sequence of convex combinations of experts.
Let $1 \leq K \leq T$ and $\fix$ be a sequence of constant competitors
$\fixidx{1}, \fixidx{2}\ldots\fixidx{\nshift}$.
Each competitor $\fixk$ starts at step $\Tshiftk$ and ends at step $\Tshiftkend=\Tshiftkpo-1$.
Thus, assuming $\Tshiftidxend{\nshift}=T$, we have
$\fixtoT = \fixidx{1}_{1:\Tshiftidxend{1}}\fixidx{2}_{\Tshiftidx{2}:\Tshiftidx{2}}\ldots\fixidx{\nshift}_{\Tshiftidx{\nshift}:T}$.
Each competitor $\fixk$ has associated weights $\wfixik$ that remain fixed
on the interval $\Tshiftk:\Tshiftkend$.


With the learning rate of \cref{thm:sbonlineN}, we readily obtain a shifting
regret bounded in $O(\nshift\ln(T)\sqrt{T\nmdl\ln\nmdl})$, but by tuning the learning rate
and still without prior knowledge of $\nshift$
it can be reduced to $O((\nshift+\ln T)\sqrt{T\nmdl\ln\nmdl})$:
\begin{theorem}\label{thm:shifting}
When using
the learning rate $\ratet = \sqrt{\frac{\ln\nmdl}{2\nmdl t}}\ln(t+3)$
with the update rule in \cref{eq:sbonlineupdate},
 for $T\geq 2$ the K-shifting regret is bounded by:
\begin{align*}
\ln\frac{\fixtoT}{\mixttoT}
&\leq
\sqrt{2(T+1)\nmdl\ln\nmdl}\left(
\ln(T+3) + \nshift\left(\frac{2}{\ln \nmdl}+\frac{1}{\ln T}\right)
\right) \\
&\quad
+ \frac{5}{4}\frac{\ln\nmdl}{\nmdl}(1+\ln T)^3
+ \frac{\nmdl}{2}\ln(T+1)\,.
\end{align*}
\end{theorem}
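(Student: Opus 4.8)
The plan is to replay the proof of \cref{lem:sbonlinebound} while respecting the segment structure. First I would write the shifting regret as
\[
\ln\frac{\fixtoT}{\mixttoT}=\sum_{\idxshift=1}^{\nshift}\sum_{t=\Tshiftk}^{\Tshiftkend}\ln\frac{\fixk_t}{\mixtt},
\]
where on segment $\idxshift$ the competitor is the fixed combination $\fixk$ with weights $\wfixik$. For each $t$ I apply \cref{lem:lnaixiB} with competitor $\fixk$ and, exactly as in \eqref{eq:prelossinjection}, use $-\ln(1-\ratet)\le\ratet/(1-\ratet)$ (\cref{lem:-ln1-x}) together with the \proptelescoperate{} on the $\sum_i\wfixik(\cdot)$ term and the \proptelescope{} on the $\max_i(\cdot)$ term.

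Summing within each segment and then over segments, three families of terms are blind to the segmentation and sum over all of $[T]$ precisely as in \cref{thm:sbonlineN}: the ratio $\sum_t\ln(\ratet/\ratetpo)=\ln(\rateo/\rateTpo)$, the penalty $\sum_t\ratet/(1-\ratet)$, and---after using the \propinjectloss{} on the $\nmdl-1$ experts absent from each $\max_i$---the collapsed term $\sum_i\ln(\wmdliTpo/\wmdli_1)\le0$ with its $\nmdl$-fold corrections. With $\ratet=\sqrt{\frac{\ln\nmdl}{2\nmdl t}}\ln(t+3)$ the dominant piece $\nmdl\sum_t\ratet$ gives $\sqrt{2(T+1)\nmdl\ln\nmdl}\,\ln(T+3)$, while $\nmdl\ln(\rateo/\rateTpo)$ gives $\tfrac{\nmdl}{2}\ln(T+1)$ and the $\ratet^2$ remainders give the $(1+\ln T)^3$ term via $\sum_t\ln^2(t+3)/t=O((1+\ln T)^3)$.

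The only new contribution is the $1/\rate$-telescoping part, which now telescopes only inside each segment. Because consecutive segments share the boundary $\Tshiftkpo$, these boundary terms collapse to a single final term $\tfrac{1}{\rateTpo}\KL{\wfix_{\nshift}}{\wmdl_1}\le\tfrac{1}{\rateTpo}\ln\nmdl$ (worst case $\wmdliTpo=\wfixi_{\nshift}$) plus one switching term per internal boundary $\Tshiftk$ ($\idxshift=2,\dots,\nshift$), namely $\tfrac{1}{\rate_{\Tshiftk}}\sum_i(\wfixi_{\idxshift-1}-\wfixik)\ln(\wmdli_{\Tshiftk}/\wmdli_1)$. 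I bound the $\wfixi_{\idxshift-1}$ part by concavity of $\ln$ (so $\sum_i\wfixi_{\idxshift-1}\ln(\nmdl\wmdli_{\Tshiftk})\le\ln\nmdl$) and the $-\wfixik$ part by the \proprestart{} $\wmdlit\ge\wmdli_1(1-\ratet/\ratetmo)$ (so $\sum_i\wfixik\ln(\wmdli_1/\wmdli_{\Tshiftk})\le-\ln(1-\rate_{\Tshiftk}/\rate_{\Tshiftk-1})$). Hence each switching term is at most $\tfrac{1}{\rate_{\Tshiftk}}\big(\ln\nmdl-\ln(1-\rate_{\Tshiftk}/\rate_{\Tshiftk-1})\big)$.

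The main obstacle---and the whole reason for the $\ln(t+3)$ factor---is turning these $\nshift$ switching costs into the advertised $\nshift(\tfrac{2}{\ln\nmdl}+\tfrac{1}{\ln T})$ saving. For the $\ln\nmdl$ piece, $\tfrac{\ln\nmdl}{\rate_{\Tshiftk}}=\tfrac{\sqrt{2\nmdl\Tshiftk\ln\nmdl}}{\ln(\Tshiftk+3)}\le\tfrac{\sqrt{2\nmdl(T+1)\ln\nmdl}}{\ln T}$ by monotonicity of $t\mapsto\sqrt{t}/\ln(t+3)$, producing the $\nshift/\ln T$ term; for the reset piece I must show $1-\rate_{\Tshiftk}/\rate_{\Tshiftk-1}\gtrsim 1/(2\Tshiftk)$ so that $-\ln(1-\rate_{\Tshiftk}/\rate_{\Tshiftk-1})=O(\ln\Tshiftk)$ and the reset cost is $O(1)\cdot\tfrac{1}{\ln\nmdl}\sqrt{2\nmdl(T+1)\ln\nmdl}$ per boundary, producing the $\nshift\cdot\tfrac{2}{\ln\nmdl}$ term. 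The delicate point is that this lower bound survives the competition between $\sqrt{(\Tshiftk-1)/\Tshiftk}<1$ and $\ln(\Tshiftk+3)/\ln(\Tshiftk+2)>1$, which also forces me to verify that $\ratet$ is monotone decreasing (as \eqref{eq:sbonlineupdate} and \eqref{eq:restart} require). Collecting the leading term, the $\nshift$ switching costs and the lower-order global terms gives the claim; the remainder is routine summation of $\sum_t t^{-1/2}$, $\sum_t\ln(t+3)\,t^{-1/2}$ and $\sum_t\ln^2(t+3)/t$.
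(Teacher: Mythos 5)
Your proposal is correct and follows essentially the same route as the paper's proof: segment-wise application of \cref{lem:lnaixiB}, global treatment of the $\max_i$ term via loss injection exactly as in \cref{thm:sbonlineN}, and per-segment $1/\rate$-telescoping closed off with the restarting property, where your ``must show'' step---that $-\ln\left(1-\rate_{\Tshiftk}/\rate_{\Tshiftk-1}\right)=O(\ln \Tshiftk)$ survives the competition with the factor $\ln(\Tshiftk+3)/\ln(\Tshiftk+2)>1$---is precisely the paper's \cref{lem:restartlnt}. The only (immaterial) difference is bookkeeping: you collapse the shared boundary terms of consecutive segments into switching terms, whereas the paper bounds each segment's telescoped pair independently; both yield the same per-boundary cost $\frac{1}{\rate_{\Tshiftk}}\left(\ln\nmdl+\ln T+O(1)\right)$ and hence the same final bound.
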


If $\nshift$ were known in advance, then the learning rate can be tuned so that the regret is $O(\sqrt{T\nshift\nmdl \ln (\nmdl T)})$.
 In the absence of this knowledge
one can still have the $\nshift$ inside the square root by competing with several learning rates as discussed in the conclusion.
The proof is in \cref{sec:mainresults}.


\section{Conclusion}\label{sec:conc}

We have shown new regret guarantees for the Prod algorithm when competing against a convex combination of the experts.
In particular, we proved that $\cR_T = O(\sqrt{T\nmdl\ln\nmdl})$, which unlike EG does not depend on the (possibly unbounded) largest gradient.
The online version of the algorithm uses a special form of the fixed-share rule, which simultaneously makes the algorithm truly online, computable in $O(\nmdl)$ steps per round and also
enjoys strong shifting regret guarantees.
A short discussion and some open questions follow.

\paragraph{Alternative approaches}
As discussed in Section~\ref{sec:largelosses}, the EG algorithm fails when
it encounters large gradients. Since these only occur when the weights are close to zero, one might try an approach based on follow-the-regularised-leader
or mirror descent \citep[for an overview]{Haz16}. The natural choice of regulariser is $R(w) = -\sum_i \log w^i$, which after a long calculation can be shown to
eliminate the dependence on the largest gradient. The regret guarantee is slightly worse than for Prod, as it has a logarithmic
dependence on $T$ rather than $N$ so that $\cR_T = O(\sqrt{NT \log(T/N)})$. Furthermore, the algorithm does not immediately have tracking guarantees and
the projection step involves a line-search, which naively requires a computation time of $O(N \log(T))$ per round.

\paragraph{Mixtures of learning rates}
Many of the results in the previous sections have depended on a specific `optimal' choice of learning rate that allows
Prod or its online variant to adapt to specific kinds of structure in the data. The downside of fixing a single learning
rate is that if the structure of interest is not present, then the algorithm may perform badly. In many cases it is possible
to derive a clever scheme for adapting the learning rate online to achieve the best of several worlds. An alternative is to exploit
the special property of the log-loss and to simply create a meta-agent that mixes over a discrete set of predictors.
For example, in the offline case one can predict using the Bayesian mixture over a set of Soft-Bayes predictors
with learning rates $(\rate_i)_{i=0}^K$ where $\rate_i = 2^{-i}$ and $K = \log_2(T)$. If the uniform prior is used, then
the Bayesian mixture will suffer an additional regret of only $\ln(K) = \ln \log_2(T)$ relative to the best soft-Bayes in the class.
Provided that the optimal learning rate lies in $[1/T, 1]$, then this mixture will compete with a learning rate that is at most a factor of $2$ off for
which the penalty is just a factor of $2$ at worst.
The additional regret is small enough to be insignificant. More concerning is that the computation cost becomes $O(KN)$ per round. In practice, however, this procedure is easily parallelised
and often leads to significant improvement.
Additionally, at almost no additional computation cost we can build a switching mixture \citep{herbster1998tracking,veness2012context}
between the individual experts (full Bayes, $\rate = 1$)
and a Soft-Bayes mixture with rate $\sqrt{\ln\nmdl / (2\nmdl t)}$,
to enjoy logarithmic loss ($\ln(NT)$, the cost of switching) against segments of the sequence where a single expert
is the best one, and revert to $\sqrt{T}$ loss for segments where
we need to compete against a combination of the experts.
The computation time is still in $O(\nmdl)$ per round.

\paragraph{Computationally efficient logarithmic regret}
The holy grail would be an $O(N)$-time algorithm with logarithmic regret and no dependence on the largest observed gradients of the (linearised) loss.
A reasonable conjecture is that this is not possible, a proof of which would be quite remarkable.
The most efficient algorithm with logarithmic regret is
online Newton step for which the best known computation time is $O(N^2)$, which is prohibitively large for $n \gtrsim 10^4$.
Furthermore, the online Newton step suffers from the same catastrophic failures as EG when poorly performing predictors suddenly become good. This limitation can
be overcome via additional regularisation as for mirror descent above, but naively this pushes the computation cost to at least $O(N^4)$ because the projection step becomes
more complex.

\paragraph{Different frameworks}
Another interesting direction is to extend the analysis beyond the log-loss and the linear mixing.
Regarding losses, the most natural first step might be to examine the exp-concave case.
Alternatively one could generalise the linear mixture to (say) a geometric mixture and see if Prod or similar can be applied to this practical setting \citep{mattern2016phd}.

\newcommand{\thankyouall}{We would like to thank the following people for their help:
Marc Bellemare, Guillaume Desjardins, Marc Lanctot, Andrew Lefranq, Jan Leike, R\'emi Munos, Georg Ostrovski, Bernardo Avila Pires, David Saxton, Joel Veness.}

\ifarxiv
    \paragraph{Acknowledgements}\thankyouall
\else
    \acks{\thankyouall}
\fi


\newpage

\crefname{appendix}{appendix}{appendices}
\Crefname{appendix}{Appendix}{Appendices}

\appendix

\section{Technical results}\label[appendix]{sec:technical}

\begin{lemma}\label{lem:-ln1-x}
\begin{align*}
\forall x<1: -\ln(1-x) \leq \frac{x}{1-x}.
\end{align*}
\end{lemma}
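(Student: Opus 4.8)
The plan is to reduce this to the elementary inequality $\ln(1+u) \le u$, which holds for every $u > -1$. The key algebraic observation is that $\frac{1}{1-x} = 1 + \frac{x}{1-x}$, so that $-\ln(1-x) = \ln\frac{1}{1-x} = \ln\left(1 + \frac{x}{1-x}\right)$. This rewriting is exactly what converts the statement into a direct application of the logarithm bound.

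First I would set $u := \frac{x}{1-x}$ and check that $u > -1$ throughout the domain $x < 1$, so that the standard bound is applicable. Since $x < 1$ gives $1 - x > 0$, the inequality $\frac{x}{1-x} > -1$ is equivalent (after multiplying through by the positive quantity $1-x$) to $x > x - 1$, i.e. $0 > -1$, which always holds. With this in hand, applying $\ln(1+u) \le u$ gives $-\ln(1-x) = \ln(1+u) \le u = \frac{x}{1-x}$, which is the claim.

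If instead a fully self-contained argument is preferred, one can define $f(x) := \frac{x}{1-x} + \ln(1-x)$ and show $f \ge 0$ on $(-\infty,1)$. A short computation yields $f'(x) = \frac{1}{(1-x)^2} - \frac{1}{1-x} = \frac{x}{(1-x)^2}$, which is negative for $x<0$ and positive for $0<x<1$, so $f$ attains its global minimum at $x=0$, where $f(0)=0$; hence $f \ge 0$ everywhere on the domain. There is no genuine obstacle here beyond carefully carrying the domain condition $x<1$ (equivalently $1-x>0$) through each manipulation, since that positivity is precisely what legitimizes both the factoring step and the sign analysis of $f'$.
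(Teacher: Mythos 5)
Your main argument is exactly the paper's proof: rewrite $-\ln(1-x) = \ln\bigl(1+\tfrac{x}{1-x}\bigr)$ and apply $\ln(1+u)\leq u$, with your check that $u > -1$ being a small (correct) extra verification the paper leaves implicit. The alternative calculus argument via $f(x)=\tfrac{x}{1-x}+\ln(1-x)$ is also sound but unnecessary; the first route suffices and matches the paper.
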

\begin{proof}
\begin{align*}
-\ln(1-x) = \ln\left(\frac{1}{1-x}\right) = \ln\left(1+\frac{x}{1-x}\right) \leq \frac{x}{1-x}
\end{align*}
where the inequality follows from $\ln(1+x)\leq x$.
\end{proof}

\begin{lemma}[\citealp{love1980log}]\label{lem:ln1+x_lowbound}
\[
\forall x\geq 0: \ln(1+x) \geq \left(\frac{1}{x}+\frac{1}{2}\right)^{-1} =
\frac{x}{1+x/2} = \frac{2x}{2+x}.
\]
\end{lemma}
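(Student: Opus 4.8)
The plan is to first dispose of the claimed chain of equalities and then prove the single substantive inequality by a one-variable monotonicity argument. The three right-hand expressions agree by elementary algebra: for $x>0$ we have $\frac{1}{x}+\frac12 = \frac{2+x}{2x}$, whose reciprocal is $\frac{2x}{2+x}$, and likewise $\frac{x}{1+x/2}=\frac{2x}{2+x}$. I would work throughout with the form $\frac{2x}{2+x}$, which is also well-defined (and equal to $0$) at $x=0$, and reduce the lemma to showing
\begin{align*}
f(x) := \ln(1+x) - \frac{2x}{2+x} \geq 0 \qquad \text{for all } x \geq 0 .
\end{align*}

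Next I would observe that $f(0)=\ln 1 - 0 = 0$, so it suffices to prove that $f$ is nondecreasing on $[0,\infty)$, i.e. $f'(x)\geq 0$. Differentiating, using $\frac{d}{dx}\frac{2x}{2+x} = \frac{4}{(2+x)^2}$, gives
\begin{align*}
f'(x) = \frac{1}{1+x} - \frac{4}{(2+x)^2} = \frac{(2+x)^2 - 4(1+x)}{(1+x)(2+x)^2} = \frac{x^2}{(1+x)(2+x)^2} \geq 0 ,
\end{align*}
where the key simplification is that the numerator $(2+x)^2 - 4(1+x) = 4+4x+x^2 - 4 - 4x = x^2$ is a perfect square. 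Combining $f(0)=0$ with $f'\geq 0$ yields $f(x)\geq 0$ for all $x\geq 0$, which is the claim.

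There is essentially no obstacle here: the only point requiring a moment of care is the derivative of $\frac{2x}{2+x}$ and the verification that the resulting inequality $(2+x)^2 \geq 4(1+x)$ collapses to $x^2 \geq 0$. If one prefers to avoid calculus, an alternative route I would mention is the substitution $v := \frac{x}{2+x}\in[0,1)$, under which $1+x = \frac{1+v}{1-v}$ and $\frac{2x}{2+x}=2v$; then $\ln(1+x) = \ln\frac{1+v}{1-v} = 2\left(v + \tfrac{v^3}{3} + \tfrac{v^5}{5} + \cdots\right) \geq 2v$ by the power series of $\operatorname{artanh}$, since every discarded term is nonnegative. Either argument settles the lemma.
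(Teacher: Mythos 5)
Your proof is correct, but it takes a cleaner route than the paper's. The paper clears the denominator first, defining $f(x) := (2+x)\ln(1+x) - 2x$, and since $f'(x) = \ln(1+x) + \frac{1}{1+x} - 1$ is not manifestly nonnegative, it needs a second derivative: $f''(x) = \frac{x}{(1+x)^2} \geq 0$ gives convexity, which combined with $f(0)=f'(0)=0$ yields $f\geq 0$. You instead keep the difference in rational form, $f(x) = \ln(1+x) - \frac{2x}{2+x}$, and the single derivative already collapses to the perfect square $f'(x) = \frac{x^2}{(1+x)(2+x)^2} \geq 0$, so one differentiation and $f(0)=0$ suffice --- a genuine simplification over the paper's two-derivative convexity argument, bought by not multiplying through by $(2+x)$. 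Your alternative via $v = \frac{x}{2+x}$, under which the claim becomes $\ln\frac{1+v}{1-v} = 2\bigl(v + \tfrac{v^3}{3} + \cdots\bigr) \geq 2v$, is a genuinely different, calculus-free argument (it is exactly the statement that the $(1,1)$ Pad\'e-type bound drops the positive higher-order terms of $2\operatorname{artanh} v$), and it has the small additional merit of making the inequality's sharpness to third order at $x=0$ transparent; the paper's and your first argument both obscure this. All three arguments are valid for the full range $x \geq 0$ required by the lemma.
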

\begin{proof}
\begin{align*}
\text{Let } f(x) &:= (2+x)\ln(1+x) -2x \\
\text{then } f'(x) &= \ln(1+x) + \frac{2+x}{1+x} - 2 = \ln(1+x) + \frac{1}{1+x} - 1 \\
\text{and } f''(x) &= \frac{1}{1+x} -\frac{1}{(1+x)^2} = \frac{x}{(1+x)^2}.
\end{align*}
For all $x \geq 0$, since $f''(x)\geq 0$, $f$ is convex, and since $f(0)=0$ and
$f'(0) = 0$
then $f(x) \geq 0$, which proves the result.
\end{proof}

\begin{lemma}\label{lem:ln1+x_upbound}
\[
\forall x \geq 0:\quad \ln(1+x) \leq x - \frac{x^2/2}{1+x}.
\]
\end{lemma}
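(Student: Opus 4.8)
The plan is to mirror the calculus approach used just above for \cref{lem:ln1+x_lowbound}: reduce the inequality to showing that a single auxiliary function is nonnegative on $[0,\infty)$, verify it vanishes at the origin, and check the sign of its derivative. Concretely, I would define
\begin{align*}
g(x) := x - \frac{x^2/2}{1+x} - \ln(1+x),
\end{align*}
so that the claimed inequality is exactly $g(x) \geq 0$ for all $x \geq 0$. The boundary check is immediate: $g(0) = 0 - 0 - 0 = 0$.

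The core of the argument is to differentiate. Writing the middle term as $\tfrac{x^2}{2(1+x)}$ and applying the quotient rule gives $\tfrac{d}{dx}\tfrac{x^2}{2(1+x)} = \tfrac{x(2+x)}{2(1+x)^2}$, while $\tfrac{d}{dx}\ln(1+x) = \tfrac{1}{1+x}$. Hence
\begin{align*}
g'(x) = 1 - \frac{x(2+x)}{2(1+x)^2} - \frac{1}{1+x}.
\end{align*}
Placing everything over the common denominator $2(1+x)^2$, the numerator is $2(1+x)^2 - x(2+x) - 2(1+x)$, which expands and collapses to exactly $x^2$. Thus $g'(x) = \tfrac{x^2}{2(1+x)^2}$, which is manifestly nonnegative for all $x \geq 0$ (indeed for all $x > -1$).

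Combining the two observations finishes the proof: since $g(0) = 0$ and $g$ is nondecreasing on $[0,\infty)$, we have $g(x) \geq 0$ there, which is the desired bound. There is no real obstacle here; the only place requiring care is the algebraic simplification of the numerator of $g'(x)$, where one must confirm that the constant and linear terms cancel and only the $x^2$ term survives—this cancellation is what makes the first-derivative argument suffice and is cleaner than the second-derivative route taken for the lower bound. One could alternatively clear denominators and prove the polynomial/logarithmic inequality directly, but the monotonicity argument above is the shortest and most transparent.
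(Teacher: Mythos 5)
Your proof is correct and coincides with the paper's own argument: both define the same auxiliary function $f(x) = x - \frac{x^2/2}{1+x} - \ln(1+x)$, verify $f(0)=0$, and show $f'(x) = \frac{x^2/2}{(1+x)^2} \geq 0$ so that $f$ is nondecreasing on $[0,\infty)$. Your algebraic simplification of the derivative's numerator to $x^2$ matches the paper's computation, so there is nothing to add.
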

\begin{proof}
Let $f(x) := x - \frac{x^2/2}{1+x} - \ln(1+x)$.
Then
\begin{align*}
f'(x) = 1 - \frac{x}{1+x} + \frac{x^2/2}{(1+x)^2} - \frac{1}{1+x}
= \frac{x^2/2}{(1+x)^2}.
\end{align*}
Since $f(0)=0$, and $f'(x)>0\ \forall x >0$,
$f$ is positive monotone increasing, which proves the result.
\end{proof}

\begin{corollary}\label{cor:-1/xln1-x}
\[
\forall x\in(0,\half]:\quad \frac{1}{x}\ln\frac{1}{1-x} -1\leq x/2 + x^2.
\]
\end{corollary}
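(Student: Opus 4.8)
The plan is to reduce everything to \cref{lem:ln1+x_upbound}, which gives an upper bound on $\ln(1+x)$, by writing the logarithm of $1/(1-x)$ as a logarithm of $1+y$ for a suitable $y$. First I would set $y := \frac{x}{1-x}$, so that $1+y = \frac{1}{1-x}$ and hence $\ln\frac{1}{1-x} = \ln(1+y)$. Since $x \in (0,\half]$ we have $y \geq 0$, so \cref{lem:ln1+x_upbound} applies and yields $\ln(1+y) \leq y - \frac{y^2/2}{1+y}$.

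The next step is to translate this bound back into $x$. Using $\frac{1}{1+y} = 1-x$ and $y = \frac{x}{1-x}$, the correction term simplifies as $\frac{y^2/2}{1+y} = \frac{1}{2}\frac{x^2}{(1-x)^2}(1-x) = \frac{x^2}{2(1-x)}$, so that
\begin{align*}
\ln\frac{1}{1-x} \leq \frac{x}{1-x} - \frac{x^2}{2(1-x)} = \frac{x(2-x)}{2(1-x)}\,.
\end{align*}
Dividing by $x$ and subtracting $1$ then gives, after collecting terms over the common denominator $2(1-x)$,
\begin{align*}
\frac{1}{x}\ln\frac{1}{1-x} - 1 \leq \frac{2-x}{2(1-x)} - 1 = \frac{x}{2(1-x)}\,.
\end{align*}

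It remains to show the purely algebraic inequality $\frac{x}{2(1-x)} \leq \frac{x}{2} + x^2$ on $(0,\half]$. Dividing by $x > 0$ and clearing the positive denominator $2(1-x)$, this is equivalent to $1 \leq (1+2x)(1-x) = 1 + x - 2x^2$, i.e. to $0 \leq x(1-2x)$, which holds precisely because $x \in (0,\half]$ forces $1-2x \geq 0$. Chaining the two displayed bounds completes the argument. I do not expect any genuine obstacle here: the only subtlety is keeping the substitution $y = x/(1-x)$ consistent through the algebra, and noting that the restriction $x \leq \half$ is used exactly once, in the final step $x(1-2x)\geq 0$, which is what prevents the bound from degrading as $x$ approaches $1$.
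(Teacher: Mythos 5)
Your proposal is correct and follows essentially the same route as the paper's own proof: both substitute $y = x/(1-x)$ into \cref{lem:ln1+x_upbound} to get $\ln\frac{1}{1-x} \leq \frac{x}{1-x} - \frac{x^2}{2(1-x)}$, reduce the claim to $\frac{x/2}{1-x} \leq x/2 + x^2$, and invoke $x \leq \half$ exactly once at the end (the paper bounds $\frac{x^2/2}{1-x} \leq x^2$ via $1-x \geq \half$, while you clear denominators to reach $x(1-2x) \geq 0$ --- the same fact in different clothing). No gaps; the argument is sound as written.
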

\begin{proof}
Using Lemma~\ref{lem:ln1+x_upbound} and $\frac{1}{1-x} = 1+ \frac{x}{1-x}$:
\begin{align*}
\ln\frac{1}{1-x} &= \ln\left(1+\frac{x}{1-x}\right) \leq \frac{x}{1-x}
- \half\frac{\frac{x^2}{(1-x)^2}}{1+\frac{x}{1-x}}
= \frac{x}{1-x} - \half\frac{x^2}{1-x} \\
&= x + \frac{x^2}{1-x}- \half\frac{x^2}{1-x} \\
&= x + \frac{x^2/2}{1-x}.
\intertext{Hence}
\frac{1}{x}\ln\frac{1}{1-x} -1 &\leq \frac{x/2}{1-x}
= x/2+\frac{x^2/2}{1-x} \leq x/2+x^2
\end{align*}
where the last inequality holds if $x\leq \half$.
\end{proof}

\begin{lemma}\label{lem:lnratesimple}
\begin{align*}
\forall x \geq 0, \forall \rate\in(0, 1) :\quad
(x-1) \leq
\frac{1}{\rate}\ln (1-\rate+\rate x)
+\frac{\rate}{1-\rate}(x-1)^2.
\end{align*}
\end{lemma}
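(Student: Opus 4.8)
The plan is to fix $\rate \in (0,1)$, move everything to one side, and show the resulting function of $x$ is nonnegative by a sign-of-derivative analysis anchored at $x=1$. Concretely, define
\[
h(x) := \frac{1}{\rate}\ln(1-\rate+\rate x) + \frac{\rate}{1-\rate}(x-1)^2 - (x-1),
\]
which is well defined for $x \ge 0$ because $1-\rate+\rate x \ge 1-\rate > 0$. I observe immediately that $h(1) = 0$: the logarithm vanishes and the two polynomial terms vanish. Hence it suffices to prove that $x = 1$ is a global minimiser of $h$ on $[0,\infty)$.

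First I would differentiate and factor. Using $\frac{1}{1-\rate+\rate x} - 1 = \frac{-\rate(x-1)}{1-\rate+\rate x}$, the derivative collapses into a product,
\[
h'(x) = \rate(x-1)\left(\frac{2}{1-\rate} - \frac{1}{1-\rate+\rate x}\right).
\]
Spotting this factorisation is the crux of the argument and the only step that is not purely mechanical; everything afterwards is routine.

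The remaining work is to check that the bracketed factor is strictly positive for every $x \ge 0$. Since both denominators are positive, cross-multiplying reduces $\frac{2}{1-\rate} \ge \frac{1}{1-\rate+\rate x}$ to $1-\rate+2\rate x \ge 0$, which holds because $1-\rate > 0$ and $x \ge 0$. Consequently $h'(x)$ has the same sign as $x-1$: it is nonpositive on $[0,1]$ and nonnegative on $[1,\infty)$. Therefore $h$ decreases up to $x=1$ and increases afterwards, so $h(x) \ge h(1) = 0$ for all $x \ge 0$, which is precisely the claimed inequality.

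I do not anticipate a genuine obstacle. The only mild subtlety worth flagging is that $h$ is \emph{not} globally convex---its second derivative changes sign once $\rate > \half$---so a naive ``convexity together with $h(1)=h'(1)=0$'' argument would fail. The product factorisation of $h'$ sidesteps this entirely by delivering the sign of the derivative directly. A purely cosmetic alternative is to substitute $u = x-1$, turning $1-\rate+\rate x$ into $1+\rate u$ and shortening the algebra slightly, but it changes nothing essential.
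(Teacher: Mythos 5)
Your proof is correct, but it takes a genuinely different route from the paper's. The paper argues in two lines from the elementary bound $\ln(1+y) \geq \frac{y}{1+y}$: with $y = \rate(x-1)$ this gives $\frac{1}{\rate}\ln(1-\rate+\rate x) \geq \frac{x-1}{1+\rate(x-1)} = (x-1) - \frac{\rate(x-1)^2}{1+\rate(x-1)}$, and the hypothesis $x \geq 0$ is used exactly once, to bound the denominator via $1+\rate(x-1) \geq 1-\rate$; rearranging gives the lemma. You instead take $h$ to be the difference of the two sides, anchor at $h(1)=0$, and factor $h'(x) = \rate(x-1)\left(\frac{2}{1-\rate} - \frac{1}{1-\rate+\rate x}\right)$; the factorisation is correct, as is your reduction of the bracket's positivity to $1-\rate+2\rate x \geq 0$, so the sign-of-derivative argument closes. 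What each approach buys: the paper's derivation is shorter given the standard logarithm bound, makes the role of $x \geq 0$ transparent, and its intermediate step is slightly sharper (the quadratic term carries denominator $1+\rate(x-1)$ before the relaxation to $1-\rate$); your proof is fully self-contained, needing no auxiliary inequality, and your caveat that $h$ fails to be convex for $\rate > \half$ (so that a naive ``convexity plus $h(1)=h'(1)=0$'' argument would not work) is accurate and is precisely what the factorisation sidesteps. As a minor bonus, since your bracket is nonnegative already for $x \geq -\frac{1-\rate}{2\rate}$, your method extends the inequality slightly below $x=0$, where the paper's denominator estimate would fail; the stated range is all that the paper needs, however.
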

\begin{proof}
Using $\log (1+x) \geq \frac{x}{1+x}$:
\begin{align*}
\frac{1}{\rate}\ln (1-\rate+\rate x) &=\frac{1}{\rate}\ln (1+\rate(x-1))
\geq \frac{x-1}{1+\rate(x-1)}
= (x-1) - \frac{\rate(x-1)^2}{1+\rate(x-1)}  \\
&\geq(x-1) - \frac{\rate(x-1)^2}{1-\rate}
\end{align*}
where the last inequality holds with $x\geq 0$.
Rearranging gives the result.
\end{proof}

\newcommand{\dd}{3}
\newcommand{\ddmo}{2}
\begin{lemma}\label{lem:restartlnt}
For all $t=1, 2, 3,\ldots$:
\begin{align*}
-\ln\left(1-\frac{\ln(t+\dd)}{\ln(t+\ddmo)}\sqrt{\frac{t-1}{t}}\right)
\leq \ln(t) + 1.6.
\end{align*}
\end{lemma}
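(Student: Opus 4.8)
The plan is to exponentiate and turn the claim into a lower bound on the argument of the logarithm. Writing $r(t) := \frac{\ln(t+3)}{\ln(t+2)}\sqrt{\frac{t-1}{t}}$, and noting that $1 - r(t) > 0$, the inequality $-\ln(1-r(t)) \le \ln t + 1.6$ is equivalent (apply the decreasing map $x \mapsto e^{-x}$ to both sides) to
\[
1 - r(t) \ge \frac{e^{-1.6}}{t}, \qquad \text{i.e.} \qquad \phi(t) := t\,\bigl(1 - r(t)\bigr) \ge e^{-1.6} \approx 0.202.
\]
So it suffices to lower bound $\phi(t)$ uniformly by the constant $e^{-1.6}$.

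First I would produce a clean elementary lower bound on $\phi$ valid for all $t \ge 1$. Using $\sqrt{1 - 1/t} \le 1 - \frac{1}{2t}$ together with $\frac{\ln(t+3)}{\ln(t+2)} = 1 + \frac{\ln(1 + 1/(t+2))}{\ln(t+2)} \le 1 + \frac{1}{(t+2)\ln(t+2)}$ (the latter from $\ln(1+x) \le x$), the product $r(t)$ is at most $\bigl(1 - \frac{1}{2t}\bigr)\bigl(1 + \frac{1}{(t+2)\ln(t+2)}\bigr)$; dropping the nonnegative cross term gives
\[
\phi(t) \ge \frac{1}{2} - \frac{t}{(t+2)\ln(t+2)}.
\]
For the tail I would then bound $\frac{t}{(t+2)\ln(t+2)} \le \frac{1}{\ln(t+2)}$, so this estimate exceeds $e^{-1.6}$ as soon as $\frac{1}{\ln(t+2)} \le \tfrac12 - e^{-1.6} \approx 0.298$, i.e. $\ln(t+2) \ge 3.36$, which holds for all $t \ge t_0$ with $t_0 = 27$. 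For the finitely many remaining values $t \in \{1, 2, \ldots, 26\}$ I would simply evaluate $\phi(t)$ and check $\phi(t) \ge e^{-1.6}$ directly.

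The main obstacle is that the inequality is genuinely tight in the moderate regime: $\phi$ decreases from $\phi(1) = 1$ to a minimum of about $0.207$ near $t = 8,9$ before increasing back toward its limit $\tfrac12$, so the margin over $e^{-1.6}\approx 0.202$ is only a couple of percent there. Consequently no single crude bound can cover all $t$ at once—indeed the elementary tail estimate above is itself below $e^{-1.6}$ for small $t$—and the real substance of the argument is the finite check near the minimum, together with having chosen the additive constant $1.6$ large enough to leave exactly that small margin. If one wanted to shrink the number of cases verified by hand, one could sharpen the tail estimate (keep the cross term and use $\sqrt{1-1/t} \le 1 - \frac{1}{2t} - \frac{1}{8t^2}$), pushing $t_0$ down to around $13$–$14$ at the cost of a messier inequality; either way the tight cases near $t \approx 8$–$9$ must be checked numerically.
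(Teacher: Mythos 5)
Your proposal is correct and takes essentially the same route as the paper: the paper likewise combines $\frac{\ln(t+3)}{\ln(t+2)} \le 1 + \frac{1}{(t+2)\ln(t+2)}$ (via $\ln(1+x)\le x$) with $\sqrt{1-1/t} \le 1 - \frac{1}{2t}$ to obtain the identical lower bound $\frac{1}{2} - \frac{1}{\ln(t+2)}$ on $t\bigl(1-r(t)\bigr)$, proving the tail analytically and disposing of the small, tight cases (where the minimum is near $t\approx 8$--$9$) by exhaustive numerical check. The only differences are cosmetic: you exponentiate and verify $\phi(t)\ge e^{-1.6}$ directly with crossover $t\ge 27$, whereas the paper stays in logarithmic form, checks $t\in[1..30]$, and bounds $-\ln\bigl(\frac{1}{2}-\frac{1}{\ln 32}\bigr)\le 1.6$ for $t\ge 30$.
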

\begin{proof}
By exhaustive search, the result holds for all $t\in [1..30]$.
Now consider $t\geq 30$ for the rest of the proof.
Observe that
\begin{align*}
\frac{\ln(t+3)}{\ln(t+2)}=1+\frac{\ln(1+1/(t+2))}{\ln(t+2)}\leq 1+\frac{1}{(t+2)\ln(t+2)}\,.
\end{align*}
Therefore:
\begin{align*}
-\ln&\left(1-\frac{\ln(t+\dd)}{\ln(t+\ddmo)}\sqrt{\frac{t-1}{t}}\right)
 \leq\ln(t) - \ln \left(t - \left(1+\frac{1}{(t+\ddmo)\ln(t+\ddmo)}\right)\sqrt{t(t-1)} \right) \\
 &= \ln(t) - \ln\left( t(1-\underbrace{\sqrt{1-1/t}}_{\leq 1-1/(2t)}) - \frac{\sqrt{t(t-1)}}{(t+\ddmo)\ln(t+\ddmo)}\right) \\
 &\leq \ln(t) - \ln\left(\half - \frac{1}{\ln(t+\ddmo)}\right)
 \quad\leq \ln(t) - \ln\left(\half - \frac{1}{\ln(30+\ddmo)}\right)
 \quad\leq \ln(t) + 1.6
 \,.
\end{align*}
\end{proof}


\section{Reverse Jensens' inequalities}\label[appendix]{sec:revjens}

\begin{lemma}\label{lem:lnaixi}
Let $\rate \in(0, \half], a\in\cW$, and $\forall i\in[\nmdl]: \mdlfrac_i \geq 0$ then:
\begin{align*}
\ln\sum_{i=1}^\nmdl a_i \mdlfrac_i \leq \sum_{i=1}^\nmdl a_i \frac{1}{\rate}\ln\left(1-\rate+\rate\mdlfrac_i\right) +
\max_{i\leq\nmdl} \frac{\rate}{2}\left(\mdlfrac_i-1\right)+\rate^2.
\end{align*}
\end{lemma}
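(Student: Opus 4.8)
The plan is to recast the inequality as a bound on a reverse-Jensen gap. Write $r_i := 1-\rate+\rate\mdlfrac_i \ge 1-\rate$ and $\bar{\mdlfrac} := \sum_i a_i\mdlfrac_i$, and set $\Phi := \ln\bar{\mdlfrac} - \frac1\rate\sum_i a_i\ln r_i$; subtracting $\frac1\rate\sum_i a_i\ln r_i$ from both sides, the claim is exactly $\Phi \le \frac\rate2\max_i(\mdlfrac_i-1) + \rate^2$. The only piece of $\Phi$ that depends on the individual $r_i$ (rather than just on $\bar{\mdlfrac}$) is $-\frac1\rate\sum_i a_i\ln r_i$, so this is the term to control. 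A crude route via $\ln\bar{\mdlfrac}\le\bar{\mdlfrac}-1$ and \cref{lem:lnratesimple} only yields the \emph{quadratic} correction $\frac{\rate}{1-\rate}\sum_i a_i(\mdlfrac_i-1)^2$; this is genuinely too lossy for the linear bound (it blows up when many $\mdlfrac_i$ are large, whereas the true $\Phi$ stays small there), so I would keep the outer logarithm intact.

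The key step is a smoothing argument that reduces to a two-point distribution. Fix $\mdlfracmax := \max_i\mdlfrac_i$ (the only quantity the target sees) together with an index attaining it, and for any other pair $(\mdlfrac_j,\mdlfrac_k)$ spread the two values apart while holding $a_j\mdlfrac_j+a_k\mdlfrac_k$ fixed. This keeps $\bar{\mdlfrac}$ fixed, so the $\ln\bar{\mdlfrac}$ part of $\Phi$ is unchanged; since $r$ is an increasing affine function of $\mdlfrac$, the pair $(r_j,r_k)$ also spreads with $a_jr_j+a_kr_k$ fixed, so concavity of $\ln$ makes $a_j\ln r_j+a_k\ln r_k$ decrease, i.e. $-\frac1\rate\sum_i a_i\ln r_i$ and hence $\Phi$ increase. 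Capping values at $0$ and $\mdlfracmax$ so the maximum is preserved, I repeat until every $\mdlfrac_i\in\{0,\mdlfracmax\}$. Thus it suffices to prove the bound for a two-point law with weight $p$ at $\mdlfracmax$ and $1-p$ at $0$.

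For this two-point law, with $M := 1-\rate+\rate\mdlfracmax$, the gap is $\Phi(p) = \ln(p\,\mdlfracmax) - \frac{p}{\rate}\ln M - \frac{1-p}{\rate}\ln(1-\rate)$, which is concave in $p$ (its only nonlinear part is $\ln p$). Solving $\Phi'(p)=0$ gives the interior maximiser $p^* = \rate\,/\,\ln\tfrac{M}{1-\rate}$, and substituting back the two $\ln M$, $\ln(1-\rate)$ terms collapse, leaving $\Phi(p^*) = \ln(p^*\mdlfracmax) - 1 - \frac1\rate\ln(1-\rate)$. (When $p^*\ge1$, which can happen only for small $\mdlfracmax$, the maximum is instead at $p=1$, where $\Phi(1)=\ln\mdlfracmax-\frac1\rate\ln M\le 0$ by the pointwise bound $\frac1\rate\ln M\ge\ln\mdlfracmax$, and the claim is immediate.)

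To finish I would invoke \cref{cor:-1/xln1-x} with $x=\rate$ — this is exactly where the hypothesis $\rate\le\half$ is used — to get $-1-\frac1\rate\ln(1-\rate)\le\frac\rate2+\rate^2$, which reduces the target to the scalar inequality $\ln(p^*\mdlfracmax)\le\frac\rate2(\mdlfracmax-2)$. This last step is elementary but delicate: it is essentially tight as $\mdlfracmax\to1$ and $\rate\to0$ (the two sides behave like $\rate^2/8$ versus $\rate^2$), which is precisely why the sharp second-order \cref{cor:-1/xln1-x} is needed instead of the cruder $-\ln(1-\rate)\le\frac{\rate}{1-\rate}$ of \cref{lem:-ln1-x}, and why the additive $\rate^2$ appears in the statement. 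I expect the main obstacle to be making the smoothing reduction airtight — since the outer $\ln\bar{\mdlfrac}$ is coupled to the inner sum through $\bar{\mdlfrac}$, one must argue carefully that a pairwise spread really fixes $\bar{\mdlfrac}$ and the maximum while monotonically increasing $\Phi$ — followed by verifying the sharp scalar inequality $\ln(p^*\mdlfracmax)\le\frac\rate2(\mdlfracmax-2)$ uniformly over $\rate\in(0,\half]$ and $\mdlfracmax\ge0$.
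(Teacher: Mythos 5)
Your route is correct in substance and, pleasingly, lands on exactly the same extremal quantity as the paper: your $p^*\mdlfracmax = \rate\mdlfracmax/\ln(1+\ratebar\mdlfracmax)$ is precisely the maximiser $\hat{\mdlfrac}$ in the paper's proof, and your $\Phi(p^*) = \ln(p^*\mdlfracmax) - 1 - \tfrac{1}{\rate}\ln(1-\rate)$ is exactly \cref{eq:forlnaixiB}. The difference is how you get there. You reduce to two-point laws on $\{0,\mdlfracmax\}$ by a mean-preserving-spread argument; the paper instead lower-bounds the concave function $\lnrate(\mdlfrac) = \tfrac{1}{\rate}\ln(1-\rate+\rate\mdlfrac)$ by its chord $g$ on $[0,\mdlfracmax]$ and bounds $\sup_{\mdlfrac>0}\left(\ln\mdlfrac - g(\mdlfrac)\right)$ pointwise. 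Since $g$ is affine, $g\left(\sum_i a_i\mdlfrac_i\right)=\sum_i a_i g(\mdlfrac_i)$ holds trivially, so the smoothing machinery---which you rightly flag as the part needing care---simply disappears. (Your smoothing is in fact valid: with $r_i$ affine increasing in $\mdlfrac_i$, the derivative of $a_j\ln r_j + a_k\ln r_k$ along the spread is $\rate(1/r_k - 1/r_j)\le 0$ for $r_k\ge r_j$; the chord argument just buys the same reduction in one line. The two views are dual: the worst distribution for a chord bound is supported on the chord's endpoints.)

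Two loose ends on your side, both fixable. First, the leftover scalar inequality $\ln(p^*\mdlfracmax)\le\tfrac{\rate}{2}(\mdlfracmax-2)$ is exactly where the paper invokes \cref{lem:ln1+x_lowbound}: from $\ln(1+\ratebar\mdlfracmax)\ge\frac{\ratebar\mdlfracmax}{1+\ratebar\mdlfracmax/2}$ one gets $p^*\mdlfracmax\le 1-\rate+\rate\mdlfracmax/2$, and then $\ln(1+x)\le x$ finishes---two lines, valid for all $\mdlfracmax\ge 0$, so it is less delicate than you fear. (Your tightness diagnosis is right, though: at $\mdlfracmax\approx 1$ the true gap is $\approx\rate^2/8$ against the $\rate^2$ in the statement, which is indeed why the sharp \cref{cor:-1/xln1-x}, and hence $\rate\le\half$, enters rather than \cref{lem:-ln1-x}.) Second, your boundary case $p^*\ge 1$ is both unnecessary and slightly misargued: $\Phi(1)\le 0$ does not immediately give the claim when $\tfrac{\rate}{2}(\mdlfracmax-1)+\rate^2<0$, which happens for $\mdlfracmax<1-2\rate$. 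The clean fix is to note that by concavity $\Phi(p)\le\Phi(p^*)$ for all $p\in(0,1]$ even when the unconstrained maximiser satisfies $p^*>1$, so the unified scalar bound covers every case---which is implicitly what the paper's pointwise supremum over all $\mdlfrac>0$ does.
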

\begin{proof}
Let $\lnrate(\mdlfrac) := \frac{1}{\rate}\ln(1-\rate+\rate \mdlfrac)$, and let $\ratebar := \frac{\rate}{1-\rate}$.
By concavity of $\lnrate$, for $\mdlfrac\in[0, \mdlfracmax]$:
\begin{align*}
\lnrate(\mdlfrac) &\geq \lnrate(0) + \frac{\mdlfrac}{\mdlfracmax}\left(\lnrate(\mdlfracmax)-\lnrate(0)\right)  \\
&= \frac{1}{\rate}\ln(1-\rate) + \frac{\mdlfrac}{\mdlfracmax}\left(
\frac{1}{\rate}\ln\left[(1-\rate)\left(1+\frac{\rate}{1-\rate}\mdlfracmax\right)\right]
- \frac{1}{\rate}\ln(1-\rate)
\right)\\
&= \frac{1}{\rate}\ln(1-\rate) + \mdlfrac\frac{\ln(1+\ratebar \mdlfracmax)}{\rate \mdlfracmax}
\quad =: g(\mdlfrac).
\end{align*}
Now since $\frac{\D}{\D \mdlfrac} (\ln \mdlfrac - g(\mdlfrac)) = \frac{1}{\mdlfrac} - \frac{\ln(1+\ratebar \mdlfracmax)}{\rate \mdlfracmax}$,
the maximum of $\ln \mdlfrac - g(\mdlfrac)$ is found at $\hat{\mdlfrac} = \frac{\rate \mdlfracmax}{\ln(1+\ratebar \mdlfracmax)}$.
Therefore:
\begin{align}\label{eq:forlnaixiB}
\ln \mdlfrac - g(\mdlfrac) \leq
\ln \hat{\mdlfrac} - g(\hat{\mdlfrac})
&\leq \ln\frac{\rate \mdlfracmax}{\ln(1+\ratebar \mdlfracmax)} - \frac{1}{\rate}\ln(1-\rate) - 1.
\end{align}
Using Lemma~\ref{lem:ln1+x_lowbound}, $\ln(1+\ratebar\mdlfracmax)\geq \frac{\ratebar\mdlfracmax}{1+\ratebar\mdlfracmax/2}=\frac{\rate\mdlfracmax}{1-\rate+\rate\mdlfracmax/2}$ and thus:
\begin{align*}
\ln \mdlfrac - g(\mdlfrac) &\leq
\ln(1-\rate+\rate\mdlfracmax/2) - \frac{1}{\rate}\ln(1-\rate) - 1 \\
&=
\ln(1+\rate(\mdlfracmax/2-1)) - \frac{1}{\rate}\ln(1-\rate) - 1.
\intertext{Hence, using \cref{cor:-1/xln1-x} with $\rate\leq \half$,
together with $\ln(1+x)\leq x$:}
\ln \mdlfrac - g(\mdlfrac) &\leq \frac{\rate}{2}(\mdlfracmax-1) + \rate^2.
\end{align*}
Finally, by linearity of $g$ we have $g(\sum_i a_i \mdlfrac_i) = \sum_i a_i g(\mdlfrac_i)$ and thus since $\lnrate(\mdlfrac)\geq g(\mdlfrac)$:
\begin{align*}
\ln\sum_i a_i \mdlfrac_i \leq \sum_i a_i g(\mdlfrac_i) + \frac{\rate}{2}(\mdlfracmax-1)+\rate^2
\leq \sum_i a_i\lnrate(\mdlfrac_i)+\frac{\rate}{2}(\mdlfracmax-1)+\rate^2.
\end{align*}
Substituting $\lnrate$ and $\mdlfracmax$ by their definitions finishes the proof.
\end{proof}

\begin{proof}(Lemma~\ref{lem:lnaixiB})
The beginning of the proof matches that of Lemma~\ref{lem:lnaixi}, and thus we start from \cref{eq:forlnaixiB}.
Since $\ln(1+\ratebar \mdlfracmax)\geq \frac{\ratebar \mdlfracmax}{1+\ratebar\mdlfracmax}=\frac{\rate \mdlfracmax}{1-\rate + \rate\mdlfracmax}$:
\begin{align*}
\ln \hat{\mdlfrac} - g(\hat{\mdlfrac}) &\leq \ln(1-\rate+\rate \mdlfracmax) -\frac{1}{\rate}\ln (1-\rate) -1 \\
& \leq \ln\left(1+\frac{\rate}{1-\rate} \mdlfracmax\right) -\underbrace{\left(\frac{1}{\rate}-1\right)}_{=\frac{1-\rate}{\rate}}\ln (1-\rate) -1 
\intertext{and since from Lemma~\ref{lem:-ln1-x} $-\ln(1-\rate) \leq \frac{\rate}{1-\rate}=\ratebar$:}
\ln \hat{\mdlfrac} - g(\hat{\mdlfrac}) &\leq \ln(1+\ratebar \mdlfracmax).
\end{align*}
By linearity of $g$, we have $g(\sum_i a_i \mdlfrac_i) = \sum_i a_i g(\mdlfrac_i)$ and thus:
\begin{align*}
\ln \sum_i a_i \mdlfrac_i &\leq \sum_i a_i g(\mdlfrac_i) + \ln(1+\ratebar \mdlfracmax)
\leq \sum_i a_i \ln_\rate(\mdlfrac_i)+ \ln(1+\ratebar \mdlfracmax)
\end{align*}
where the last inequality follows from $\lnrate(q) \geq g(q)$.
\end{proof}


\section{Proofs for the main results}\label[appendix]{sec:mainresults}

\begin{proof}(\cref{thm:selfconfquad})
Using $\ln(x)\leq x-1$ followed by
Lemma~\ref{lem:lnratesimple} and the definition of the update rule \cref{eq:update} leads to:
\begin{align*}
\sum_{t=1}^T \ln\frac{\fixt}{\mixtt} &\leq \sum_t \left(\frac{\fixt}{\mixtt}-1\right)
= \sum_t \sum_{i=1}^\nmdl \wfixi\left(\frac{\mdlit}{\mixtt}-1\right) \\
&\leq\sum_t \sum_i \wfixi \left(\frac{1}{\rate}\ln\left(1-\rate+\rate\frac{\mdlit}{\mixtt}\right) + \frac{\rate}{1-\rate}\left(\frac{\mdlit}{\mixtt}-1\right)^2\right) \\
&= \sum_t \sum_i \wfixi\left(\frac{1}{\rate}\ln\frac{\wmdlitpo}{\wmdlit}
+ \frac{\rate}{1-\rate}\left(\frac{\mdlit}{\mixtt}-1\right)^2\right) \\
&= \frac{1}{\rate}\sum_i \wfixi \ln\frac{\wmdliTpo}{\wmdli_1}
+ \frac{\rate}{1-\rate}\sum_i \wfixi \sum_t \left(\frac{\mdlit}{\mixtt}-1\right)^2 \\
&\leq \frac{1}{\rate}\ln\nmdl
+ \frac{\rate}{1-\rate}\max_i \sum_t \left(\frac{\mdlit}{\mixtt}-1\right)^2  \\
&=
 \frac{1-\rate}{\rate}\ln\nmdl
+ \frac{\rate}{1-\rate}\max_i \sum_t \left(\frac{\mdlit}{\mixtt}-1\right)^2
+\ln\nmdl\,.
\end{align*}
The result is completed by substituting the definitions.
\end{proof}

\begin{proof}(\cref{thm:selfconflin})
For the first entry in the minimum we use the fact that $\log x \leq x - 1$:
\begin{align*}
\sum_{t=1}^T \ln\frac{\fixt}{\mixtt}  &\leq \sum_t \left(\frac{\fixt}{\mixt} - 1\right)
= \sum_t \sum_{i=1}^\nmdl \wfixi\left(\frac{\mdlit}{\mixtt}-1\right)
\leq \sum_t \max_i\left(\frac{\mdlit}{\mixtt}-1\right)\,.
\end{align*}
For the other terms we use Lemma~\ref{lem:lnaixi} to obtain:
\begin{align*}
\ln\frac{\fixt}{\mixtt}
&=\ln\left(\sum_i \wfixi\frac{\mdlit}{\mixtt}\right)
\leq \sum_i \wfixi\frac{1}{\rate}\ln\left(1-\rate+\rate\frac{\mdlit}{\mixtt}\right) + \frac{\rate}{2}\max_i \left(\frac{\mdlit}{\mixtt}-1\right) + \rate^2 \\
&= \sum_i \wfixi \frac{1}{\rate}\ln\frac{\wmdlitpo}{\wmdlit}
+ \frac{\rate}{2}\max_i \left(\frac{\mdlit}{\mixtt}-1\right) + \rate^2\,.
\end{align*}
Therefore
\begin{align*}
\sum_{t=1}^T\ln\frac{\fixt}{\mixtt}
&\leq  \frac{1}{\rate}\ln \nmdl
+ \frac{\rate}{2}\sum_{t=1}^T\max_i \left(\frac{\mdlit}{\mixtt}-1\right) + \rate^2T.
\end{align*}
\end{proof}

\begin{proof}(\cref{thm:sparse})
As for the proof of \cref{thm:sbonlineN} we start from Lemma~\ref{lem:sbonlinebound}:
\begin{align*}
\ln\frac{\fixtoT}{\mixttoT} &\leq
\frac{1}{\rateTpo}\KL{\wfix}{\wmdl_1}
+ \ln\frac{\rateo}{\rateTpo}
+ \sum_{t=1}^T \frac{\ratet}{1-\ratet}
+ \sum_{t=1}^T \max_{i\leq\nmdl} \ln\frac{\wmdlitpo}{\wmdlit}
\intertext{but instead of complementing the missing terms for each expert from $t=1$ to $T$ (which would lead to $O(\nbestset\sqrt{T})$ if all $\Tstarti\approx T$)
we complement using the \propinjectloss{} only for expert $i$ from $\Tstarti$ to $T$ and rely on the \proprestart{}
to start with a high enough weight $\wmdliTstart \approx 1/t$.
Telescoping the series, we obtain:}
\ln\frac{\fixtoT}{\mixttoT} &\leq
\frac{1}{\rateTpo}\ln\nmdl +
\sum_{i\in\bestset} \left[
\ln\frac{\wmdliTpo}{\wmdliTstart}
+ \ln\frac{\rateTstarti}{\rateT}
+ \sum_{t=\Tstarti}^T \frac{\ratet}{1-\ratet}
\right].
\intertext{Using the \proprestart:}
\sum_{i\in\bestset} \ln\frac{\wmdliTpo}{\wmdliTstart}
&\leq \sum_{i\in\bestset} \ln \frac{\wmdliTpo}{\wmdli_1\left(1-\frac{\rateTstarti}{\rateTstartimo}\right)}
= \sum_{i\in\bestset}\left[ \ln \frac{\wmdliTpo}{\wmdli_1} - \ln\left(1-\frac{\rateTstarti}{\rateTstartimo}\right)\right] \\
&\leq \nbestset\ln\frac{\nmdl}{\nbestset}
- \sum_{i\in\bestset}\ln\left(1-\frac{\rateTstarti}{\rateTstartimo}\right)\,,\\
- \sum_{i\in\bestset}\ln\left(1-\frac{\rateTstarti}{\rateTstartimo}\right)
&=
- \sum_{i\in\bestset}\ln\left(1-\sqrt{\frac{\nbestsetTstartimo (\Tstarti-1)}{\nbestsetTstarti \Tstarti}}\right) \\
&\leq
- \sum_{i\in\bestset}\ln\left(1-\sqrt{\frac{\Tstarti-1}{\Tstarti}}\right)
=
- \sum_{i\in\bestset}\ln\left(1-\sqrt{1-\frac{1}{\Tstarti}}\right) \\
&\leq
\sum_{i\in\bestset}\ln\left(2\Tstarti\right) = \nbestset\ln2 + \sum_{i\in\bestset}\ln \Tstarti
\\
\intertext{where we used $\sqrt{1-x} \leq 1-x/2$.
For the next term, remember that $\bestsetTstarti$ does not yet include the
expert $i$, which will be added at $\Tstarti+1$:}
\sum_{i\in\bestset}\ln\frac{\rateTstarti}{\rateT}
&\leq \sum_{i\in\bestset}\ln \sqrt{\frac{\nbestset T}{\nbestsetTstarti \Tstarti}}
= \half(\nbestset\ln\nbestset - \ln ((\nbestset-1)!) + \half\sum_{i\in\bestset} \ln\frac{T}{\Tstarti} \\
&\leq \half(\nbestset + \ln\nbestset)
+ \frac{\nbestset}{2}\ln T - \half\sum_{i\in\bestset}\ln\Tstarti
\intertext{where we used $\ln((m-1)!) = \ln(m!) - \ln m$ and $\ln(m!) \geq m\ln(m) - m$.
For the next term, we have:}
\sum_{i\in\bestset}\sum_{t=\Tstarti}^T \frac{\ratet}{1-\ratet}
&= \sum_{i\in\bestset}\sum_{t=\Tstarti}^T \Big(\ratet
+ \underbrace{\frac{\ratet^2}{1-\ratet}}_{\leq 2\ratet^2}\Big)
= \sum_{t=1}^T \nbestsettpo (\ratet + 2\ratet^2) \\
&= \sum_{t=1}^T \nbestsett (\ratet + 2\ratet^2) +
\sum_{i\in\bestset} \left(\rateTstarti + 2\rateTstarti^2\right).
\\
\intertext{For the rightmost term, we take the worst case of largest learning rates,
that is for $\nbestsett = t$ up to $t=m$:}
\sum_{i\in\bestset} \rateTstarti + 2\rateTstarti^2
&\leq
\sum_{t=1}^\nbestset
\sqrt{\frac{\ln\nmdl}{2t^2}}
+ \frac{\ln\nmdl}{t^2} \leq \sqrt{\half\ln\nmdl} (1+\ln\nbestset) + 2\ln\nmdl\,,
\\
\sum_{t=1}^T \nbestsett (\ratet + 2\ratet^2)
&= \sum_{t=1}^T\sqrt{\frac{\nbestsett\ln\nmdl}{2t}} + \ln\nmdl\frac{1}{t}
\leq \sum_{t=1}^T\sqrt{\frac{\nbestset\ln\nmdl}{2t}} + \ln\nmdl\frac{1}{t} \\
&\leq \sqrt{2\nbestset(T+1)\ln\nmdl} + (1+\ln T)\ln N.
\end{align*}
Putting it all together we have:
\begin{align*}
\ln\frac{\fixtoT}{\mixttoT}
&\leq
2\sqrt{2\nbestset(T+1)\ln\nmdl}
+ \left(\frac{\nbestset}{2}+\ln\nmdl\right)\ln T
+ \half\sum_{i\in\bestset}\ln \Tstarti
+ \nbestset\ln\frac{\nmdl}{\nbestset} \\
&\quad
+ \nbestset\ln 2
+ \half\nbestset +\half \ln\nbestset
+ \sqrt{\half\ln\nmdl}(1+\ln\nbestset) + 3\ln\nmdl \\
&\leq
2\sqrt{2\nbestset(T+1)\ln\nmdl}
+ \left(\nbestset+\ln\nmdl\right)\ln T
+ \nbestset\ln\frac{\nmdl}{\nbestset} \\
&\quad + 1.2\nbestset
+ \sqrt{\half\ln\nmdl}(1+\ln\nbestset) + 3.5\ln\nmdl
\end{align*}
which concludes the proof.
\end{proof}

\begin{proof}(\cref{thm:shifting})
The beginning of the proof is similar to that of \cref{thm:sbonlineN},
using first Lemma~\ref{lem:lnaixiB}:
\begin{align*}
\ln\frac{\fixtoT}{\mixttoT}
&= \sum_{\idxshift=1}^{\nshift} \ln\frac{\fixidx{\idxshift}_{\Tshiftk:\Tshiftkend}}{\mixt_{\Tshiftk:\Tshiftkend}} \\
&\leq \sum_{\idxshift=1}^{\nshift} \sum_{t=\Tshiftk}^{\Tshiftkend}
\left[\sum_{i=1}^\nmdl \wfixik\frac{1}{\ratet}\ln\left(1-\ratet+\ratet\frac{\mdlit}{\mixtt}\right)\right] + \max_i \ln\left(1+\ratebart\frac{\mdlit}{\mixtt}\right) \\
&=
\underbrace{\sum_{t=1}^T \max_i \ln\left(1+\ratebart\frac{\mdlit}{\mixtt}\right)}_{(A)}
+ \sum_{\idxshift=1}^{\nshift}
\underbrace{\sum_{t=\Tshiftk}^{\Tshiftkend}
\sum_{i=1}^\nmdl \wfixik\frac{1}{\ratet}\ln\left(1-\ratet+\ratet\frac{\mdlit}{\mixtt}\right)}_{(B)}.
\end{align*}
For (A), we first apply the same transformation as in \cref{eq:prelossinjection},
then we repeatedly
 use the \propinjectloss{} and
 the \proptelescope{} as in the proof of \cref{thm:sbonlineN}.
It can be shown that $\max_{N\geq 2, t\geq 1} \ratet \leq 3/5$, hence:
\begin{align*}
(A)
&\leq
\nmdl\ln\frac{\rateo}{\rateTpo}+
\nmdl\sum_{t=1}^T\left( \ratet + \frac{\ratet^2}{1-\ratet}\right) \\
&\leq
N\ln\frac{\ln(4)\sqrt{T+1}}{\ln(T+4)}+
\sqrt{\half\nmdl\ln\nmdl}\ln(T+3)\sum_{t=1}^T \frac{1}{\sqrt{t}}
+ \frac{5}{2}\frac{\ln\nmdl}{2\nmdl}(\ln(T+3))^2\sum_{t=1}^T \frac{1}{t} \\
&\leq
\frac{N}{2}\ln(T+1)+\sqrt{2\nmdl\ln\nmdl}\ln(T+3)\sqrt{T}+
\frac{5}{4}\frac{\ln\nmdl}{\nmdl}(\underbrace{\ln(T+3)}_{\leq 1+ \ln T, \forall T\geq 2})^2(1+\ln T) \\
&\leq
\frac{N}{2}\ln(T+1)+
\sqrt{2T\nmdl\ln\nmdl}\ln(T+3)+
\frac{5}{4}\frac{\ln\nmdl}{\nmdl}(1+\ln T)^3
\end{align*}
For (B), using the \proptelescoperate{} and then telescoping the series gives:
\begin{align*}
(B)&\leq
\sum_{i=1}^\nmdl \wfixik\left(
\frac{1}{\rate_{\Tshiftkpo}}\ln\frac{\wmdli_{\Tshiftkpo}}{\wmdli_1} - \frac{1}{\rate_{\Tshiftk}}\ln\frac{\wmdli_{\Tshiftk}}{\wmdli_1}\right)\,.
\end{align*}
Now, with the \proprestart{}, $\ln\frac{\wmdli_{\Tshiftk}}{\wmdli_1} \geq \ln\left(1-\frac{\rate_{\Tshiftk}}{\rate_{\Tshiftk-1}}\right)$
together with Lemma~\ref{lem:restartlnt},
and also with $\sum_{i=1}^\nmdl \wfixik\ln\frac{\wmdli_{\Tshiftkpo}}{\wmdli_1} \leq \ln\nmdl$ we have:
\begin{align*}
(B) &\leq
 \frac{1}{\rate_{\Tshiftkpo}}\ln\nmdl
-\frac{1}{\rate_{\Tshiftk}}\ln\left(1-\frac{\rate_{\Tshiftk}}{\rate_{\Tshiftk-1}}\right)
\leq
 \frac{1}{\rateTpo}\ln\nmdl
+ \frac{1}{\rateTpo}(\ln(T) + 1.6) \\
&\leq \sqrt{\frac{2\nmdl(T+1)}{\ln\nmdl}}\left(1 + \frac{\ln\nmdl}{\ln (T+4)} + \underbrace{1.6/\ln (T+4)}_{\leq 1}\right)
\leq \sqrt{2(T+1)\nmdl\ln\nmdl}\left(\frac{2}{\ln \nmdl} + \frac{1}{\ln T}\right)
\,,
\\
\ln\frac{\fixtoT}{\mixttoT}
&\leq
\sqrt{2(T+1)\nmdl\ln\nmdl}\left(
\ln(T+3) + \nshift\left(\frac{2}{\ln \nmdl}+\frac{1}{\ln T}\right)
\right) \\
&\quad
+ \frac{5}{4}\frac{\ln\nmdl}{\nmdl}(1+\ln T)^3
+ \frac{N}{2}\ln(T+1)\,
\end{align*}
which was to be proven.
\end{proof}

\section{Failure of EG and OGD}\label[appendix]{app:thm:eg-lower}

\newcommand{\idxa}{a}
\newcommand{\idxb}{b}

\begin{proof}(Theorem~\ref{thm:eg-lower})
We start by proving the claim for EG.
The theorem will follow by considering two examples. For the first,
let $p^\idxa_t = 0$ and $p^\idxb_t = 1$ for all $t$. Then
according to the EG update rule in \cref{eq:EGupdate}:
\begin{align}
\wmdl^\idxa_t \propto \wmdl^\idxa_1  \qquad \text{ and } \qquad
\wmdl^\idxb_t \propto \wmdl^\idxb_1 \exp\left(\rate \sum_{s=1}^t \frac{1}{\wmdl^\idxb_s}\right)\,.
\label{eq:fail-propto}
\end{align}
It is easy to see that $w^\idxb_s \in [1/2, 1]$,
 which means that
\begin{align*}
w^\idxb_t = \frac{\exp\left(\rate \sum_{s=1}^t \frac{1}{w^\idxb_s}\right)}{1 + \exp\left(\rate \sum_{s=1}^t \frac{1}{w^\idxb_s}\right)}
\leq \frac{\exp(2\rate t)}{1 + \exp(2\rate t)}\,.
\end{align*}
The best mixture in hindsight in this case assigns all mass to the second expert and suffers no loss. Therefore
the regret
\begin{align*}
\cR_T = \sum_{t=1}^T \ln\left(\frac{1}{w^\idxb_t}\right)
\geq \sum_{t=1}^{\min\{1/\rate, T\}} \ln\left(\frac{1 + \exp(2)}{\exp(2)}\right)
\geq \frac{1}{10} \min\left\{\frac{1}{\rate}, T\right\} \,,
\end{align*}
which proves the result for small learning rates $\rate \leq T^{\varepsilon - 1}$. From now on suppose that $T$ is reasonably large and $\rate \geq T^{\varepsilon - 1}$.
Now we consider a different sequence of expert predictions. Let the first expert be a poor predictor for the first $T/2$ rounds, and subsequently
let the prediction quality of each expert alternate. Formally,
\begin{align*}
p^\idxa_t &= \begin{cases}
0 & \text{if } t \leq T/2 \\
1 & \text{if } t > T/2 \text{ and } t \text{ is even} \\
0 & \text{otherwise}\,.
\end{cases} &
p^\idxb_t &= \begin{cases}
1 & \text{if } t \leq T/2 \\
0 & \text{if } t > T/2 \text{ and } t \text{ is even} \\
1 & \text{otherwise}\,.
\end{cases}
\end{align*}
Notice that this is the same sequence as considered in the first part until $t = T/2$. Therefore by \cref{eq:fail-propto} for $t = T/2+1$ we have
\begin{align*}
w^\idxa_{T/2+1} = \left(1 + \exp\left(\rate \sum_{s=1}^{T/2} \frac{1}{w_s^\idxb}\right)\right)^{-1} \leq \exp(-\rate T/2)\,.
\end{align*}
This means that in a single round the predictor suffers loss $-\log \exp(-\rate T/2) = \rate T/2$, which may already be quite large. But there are still many rounds
to go and things do not get better. In round $T/2+2$ we have
\begin{align}
\label{eq:fail-2}
w^\idxb_{T/2+2}
&= \frac{w^\idxb_{T/2+1}}{w^\idxb_{T/2+1} + w^\idxa_{T/2+1} \exp\left(\rate / w^\idxa_{T/2+1}\right)}
\leq \frac{\exp\left(-\rate / w^\idxa_{T/2+1}\right)}{w^\idxa_{T/2+1}} \\
&\leq \frac{\exp\left(-\rate \exp(\rate T/2))\right)}{\exp(-\rate T/2)}
\leq \exp(-\rate T / 2)\, \nonumber
\end{align}
and so on. Therefore the loss of the EG algorithm over the final $T/2$ rounds is at least $\rate T^\idxb / 4 = \Omega(T^{1+\varepsilon})$.
For this sequence the loss of the best mixture in hindsight is $O(T)$ and hence the regret of EG is at least $\Omega(T^{1+\varepsilon})$, which completes the proof for EG.
Moving to gradient descent. We use a similar example where in rounds $t < T$ the first expert has $p^\idxa_t = 0$ and the second has $p^\idxb_t = 1$.
An easy calculation shows that
\begin{align*}
w^\idxb_{t+1}
= \max\left\{1,\, w^\idxb_t + \frac{\rate}{2 w^\idxb_t}\right\}
\geq \max\left\{1, \, w^\idxb_t + \frac{\rate}{2}\right\}\,.
\end{align*}
Therefore since $w^\idxb_1 = 1/2$, if $T > 1 + 1/\rate$, then $w^\idxb_T = 1$. In this case let $p^\idxa_T = 1$ and $p^\idxb_T = 0$, which leads to infinite regret.
On the other hand if $T \leq 1 + 1/\rate$, then let $p^\idxa_T = 0$ and $p^\idxb_T = 1$ so that the minimum loss in hindsight vanishes. Therefore the regret of OGD
is at least
\begin{align*}
\cR_T = \sum_{t=1}^T \ln\left(\frac{1}{w^\idxb_t}\right) \geq \sum_{t=1}^{T/2} \ln\left(\frac{1}{1/2 + \rate t}\right) = \Omega(T)\,,
\end{align*}
where we used the fact that $w^\idxb_t \geq 1/2$ so that $\max\{1, \, w^\idxb_t + \rate / (2 w_t^\idxb)\} \leq \max\{1,\, w^\idxb_t + \rate\}$.
\end{proof}

\begin{remark}
Notice that the second inequality in \cref{eq:fail-2} is rather loose when $\rate$ is large. In fact for learning rates $\rate = O(T^{\varepsilon - 1})$ one can show
the regret of EG grows \textit{super-exponentially}.
\end{remark}

\section{A multi-learning-rate Soft-Bayes}\label[appendix]{sec:multirate}

ML-Prod~\citep{gaillard2014mlprod} was designed for [0,1] losses for linear optimization.
It uses one learning rate per expert to be able to track the best of them
with a loss that does not depend on the other experts.
Interestingly, using the gradient trick on Prod with excess losses also leads
to the same update as Soft-Bayes
since the linearized loss of the algorithm is
$\sum_{i\leq \nmdl } \wmdlit \partial (-\ln \mixtt)/\partial \wmdlit = \sum_i \wmdlit (-\mdlit/\mixtt) = -1$.
We provide similar results as for ML-Prod, but with the online correction rule.
First, using fixed learning rates $\ratei$, we define the mixture as:
\begin{align*}
\mixttoT &:= \sum_{i=1}^\nmdl \wmdli_1 \prod_{t=1}^T ((1-\ratei)\mixtt + \ratei\mdlit) \\
\mixtt &= \frac{\mixttot}{\mixtpret} = \frac{\sum_i \wmdlit \ratei \mdlit}{\sum_i \wmdlit \ratei} \\
\wmdlitpo &:=\wmdli_1\frac{\prod_{t=1}^T ((1-\ratei)\mixtt + \ratei\mdlit)}{\mixttoT}= \wmdlit\left(1-\ratei+\ratei\frac{\mdlit}{\mixtt}\right)
\end{align*}
where the equation for $\mixtt$ follows from $\mixtt = \sum_i \wmdlit[(1-\ratei)\mixtt + \ratei\mdlit]$ and algebra.
Observe that for now the weights are kept normalized.
For time-dependent learning rates, we apply the online correction
rule~\cref{eq:sbonlineupdate} to the update of the weights, which gives the following:
\begin{align}
\mixtt &:= \frac{\mixttot}{\mixtpret} = \frac{\sum_i \wmdlit \rateit \mdlit}{\sum_i \wmdlit \rateit} \label{eq:mlsbonlinepred}\\
\wmdlitpo &:= \wmdlit\left(1-\rateit+\rateit\frac{\mdlit}{\mixtt}\right)\frac{\rateitpo}{\rateit} + \left(1-\frac{\rateitpo}{\rateit}\right)\wmdli_1\,.
\label{eq:mlsbonlineupdate}
\end{align}
All the properties of Lemma~\ref{lem:sbonlineupdate} still hold, just with additional indices on the learning rates.
Furthermore, the weights always remain positive since $\mdlit \geq 0$.

\begin{lemma}\label{lem:mlsb}
For the algorithm defined by \cref{eq:mlsbonlinepred,eq:mlsbonlineupdate},
with sequences $\rateit\in(0,1)$ of monotonically decreasing learning rates,
we have the following regret guarantee against the best fixed convex combination
of the experts $\fix$ in hindsight such that $\fixt := \sum_{i=1}^N\wfixi\mdlit$:
\begin{align*}
\ln\frac{\fixtoT}{\mixttoT} \leq
\sum_{i=1}^\nmdl \wfixi\left[ \frac{1}{\ratebariTpo}\ln\frac{\wmdliTpo}{\wmdli_1}
+ \sum_{t=1}^T\ratebarit\left(\frac{\mdlit}{\mixtt}-1\right)^2
+\ln\frac{\wmdliTpo}{\wmdli_1} \right]\,, \qquad
\text{with }\quad \ratebarit := \frac{\rateit}{1-\rateit}.
\end{align*}
\end{lemma}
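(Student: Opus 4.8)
The plan is to follow the structure of the proof of \cref{thm:selfconfquad} very closely, replacing the single learning rate $\rate$ by the per-expert, time-varying rate $\rateit$ and invoking the indexed versions of the properties in \cref{lem:sbonlineupdate}. First I would decompose the cumulative log-regret over time as $\ln\frac{\fixtoT}{\mixttoT} = \sum_{t=1}^T \ln\frac{\fixt}{\mixtt}$, and on each round linearize the logarithm using $\ln x \le x-1$ together with $\fixt = \sum_i \wfixi \mdlit$ and $\sum_i \wfixi = 1$ to write
\[
\ln\frac{\fixt}{\mixtt} \le \frac{\fixt}{\mixtt} - 1 = \sum_{i=1}^\nmdl \wfixi\left(\frac{\mdlit}{\mixtt} - 1\right).
\]
Applying \cref{lem:lnratesimple} to each term with $x = \mdlit/\mixtt \ge 0$ and $\rate = \rateit$ then produces, for every expert $i$, a $\frac{1}{\rateit}\ln\left(1-\rateit+\rateit\frac{\mdlit}{\mixtt}\right)$ term plus the quadratic remainder $\ratebarit\left(\frac{\mdlit}{\mixtt}-1\right)^2$, which is precisely the second summand in the claimed bound.

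The key step is to telescope the logarithmic terms across time. Here I would use the per-expert form of the \proptelescoperate{} (that is, \cref{eq:telescoperate} with each learning rate carrying an extra index $i$), which holds because the corresponding property of \cref{lem:sbonlineupdate} is derived solely from the online update rule via Jensen's inequality and remains valid verbatim once each $\rate$ is replaced by $\rateit$ (monotone decay guarantees $\rateitpo/\rateit \in [0,1]$). Summing
\[
\frac{1}{\rateit}\ln\left(1-\rateit+\rateit\frac{\mdlit}{\mixtt}\right) \le \frac{1}{\rateitpo}\ln\frac{\wmdlitpo}{\wmdli_1} - \frac{1}{\rateit}\ln\frac{\wmdlit}{\wmdli_1}
\]
over $t = 1,\dots,T$ telescopes to $\frac{1}{\rateiTpo}\ln\frac{\wmdliTpo}{\wmdli_1}$, the boundary term at $t=1$ vanishing since $\ln(\wmdli_1/\wmdli_1) = 0$.

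Finally I would reconcile the $\frac{1}{\rateiTpo}$ coefficient with the $\frac{1}{\ratebariTpo}$ appearing in the statement. Using $\ratebariTpo = \rateiTpo/(1-\rateiTpo)$, so that $\frac{1}{\rateiTpo} = \frac{1}{\ratebariTpo} + 1$, splits $\frac{1}{\rateiTpo}\ln\frac{\wmdliTpo}{\wmdli_1}$ into exactly the two boundary terms $\frac{1}{\ratebariTpo}\ln\frac{\wmdliTpo}{\wmdli_1}$ and $\ln\frac{\wmdliTpo}{\wmdli_1}$ of the claim. Collecting everything under the outer $\sum_i \wfixi$ then yields the stated inequality.

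I do not anticipate a substantive obstacle: the only point requiring care is verifying that the properties of \cref{lem:sbonlineupdate}—particularly the $1/\rate$-telescoping inequality—carry over unchanged to per-expert rates, which is immediate because that derivation never mixes the indices of different experts and never uses the closed form of $\mixtt$ (only the update rule and $\sum_i \wfixi = 1$ enter). One should also confirm the hypotheses of \cref{lem:lnratesimple}, namely $\mdlit/\mixtt \ge 0$ and $\rateit \in (0,1)$, both of which hold here.
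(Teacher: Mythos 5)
Your proposal is correct and follows essentially the same route as the paper's proof: linearise via $\ln x \leq x-1$, apply \cref{lem:lnratesimple} with $x=\mdlit/\mixtt$ and rate $\rateit$, telescope using the per-expert form of the $1/\rate$-telescoping property of \cref{lem:sbonlineupdate} (valid since $\rateitpo/\rateit\in[0,1]$ and the derivation never uses the closed form of $\mixtt$), and split the boundary term via $1/\rateiTpo = 1/\ratebariTpo + 1$. Your sanity checks (hypotheses of \cref{lem:lnratesimple}, vanishing $t=1$ boundary term) match the paper's implicit reasoning, so there is nothing to add.
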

\begin{proof}
Using $\ln(x)\leq x-1$, then
 Lemma~\ref{lem:lnratesimple} on the second line,
 the \proptelescoperate{}
 on the third line,
 and telescoping on the fourth line
  we have:
\begin{align*}
\sum_{t=1}^T \ln\frac{\fixt}{\mixtt} &\leq \sum_t \left(\frac{\fixt}{\mixtt}-1\right)
= \sum_t \sum_{i=1}^\nmdl \wfixi\left(\frac{\mdlit}{\mixtt}-1\right)
= \sum_i  \wfixi\sum_t\left(\frac{\mdlit}{\mixtt}-1\right)
 \\
&\leq \sum_i \wfixi \sum_t \left[  \frac{1}{\rateit}\ln\left(1-\rateit+\rateit\frac{\mdlit}{\mixtt}\right) + \frac{\rateit}{1-\rateit}\left(\frac{\mdlit}{\mixtt}-1\right)^2 \right]\\
&\leq \sum_i \wfixi\sum_t\left[
\left(\frac{1}{\rateitpo}\ln\frac{\wmdlitpo}{\wmdli_1}
-\frac{1}{\rateit}\ln\frac{\wmdlit}{\wmdli_1}
\right)
+ \ratebarit\left(\frac{\mdlit}{\mixtt}-1\right)^2 \right]\\
&= \sum_i \wfixi\left[ \frac{1}{\rateiTpo}\ln\frac{\wmdliTpo}{\wmdli_1}
+ \sum_{t=1}^T\ratebarit\left(\frac{\mdlit}{\mixtt}-1\right)^2 \right]\\
&= \sum_i \wfixi\left[ \frac{1-\rateiTpo}{\rateiTpo}\ln\frac{\wmdliTpo}{\wmdli_1}
+ \sum_{t=1}^T\ratebarit\left(\frac{\mdlit}{\mixtt}-1\right)^2
+\ln\frac{\wmdliTpo}{\wmdli_1} \right]\,, 
\end{align*}
which with $(1-\rateiTpo)/\rateiTpo = \ratebariTpo$ proves the claim.
\end{proof}
Unfortunately, applying the correction rule with a different ratio for each $i$ means that the weights may not be normalized anymore, so we cannot use $\wmdliTpo\leq 1$,
but it can be shown that the correction rule still ensures that they do not grow
faster than $O(\ln\ln T)$ if the ratios $\mdlit/\mixtt$ are bounded,
as for ML-Prod.

\ifarxiv

    More precisely, it can be shown that $\wmdliTpo$ is bounded above by $O(\sum_i \ln (1/\rateiTpo))$: From \cref{eq:mlsbonlineupdate}, using $\rateitpo/\rateit \leq 1$ together with $\rateit < 1$ (ensuring $1-\rateit \geq 0$), and $1-1/x \leq \ln x$ we can deduce
    \begin{align*}
    \sum_i \wmdlitpo &\leq \sum_i \left(\wmdlit\left(1-\rateit + \rateit \frac{\mdlit}{\mixtt}\right)\cancel{\frac{\rateitpo}{\rateit}} + \wmdli_1\ln \frac{\rateit}{\rateitpo} \right)\\
    &\leq
    \sum_i \wmdlit + \sum_i \wmdli_1\ln \frac{\rateit}{\rateitpo}\\
    &\leq \sum_i \wmdli_1 + \sum_i \wmdli_1\ln \frac{\ratei_1}{\rateitpo}\\
    &\leq \sum_i \wmdli_1 + \sum_i \wmdli_1\ln \frac{\ratebari_1}{\ratebaritpo}\,,\\
    \forall j \in [\nmdl]: 0 \leq \wmdl^j_{T+1} &\leq \sum_i \wmdli_1 \left(1+ \ln \frac{\ratebari_1}{\ratebariTpo}\right)\,,
    \end{align*}
    where we used
    $\sum_i \wmdlit\rateit\frac{\mdlit}{\mixtt}= \sum_i \wmdlit\rateit$ from
     \cref{eq:mlsbonlinepred} on the second line and telescoping over $t$ on the third line.

    Finally, let $V^i_k := \sum_{t\leq k} \left(\frac{\mdlit}{\mixtt}-1\right)^2$.
    Then, from \cref{lem:mlsb}, setting $\ratebarit := \sqrt{\frac{\ln(\nmdl)/2}{\ln \nmdl + V^i_{t-1}}}$ can be shown to lead to an upper bound on the regret of
    \begin{align*}
    O\left(\left[\ln \nmdl + \ln\left(1+\max_j\ln (1+ V^j_T)\right)\right]\sqrt{1+ \frac{V^i_T}{\ln\nmdl}}\right)
    \end{align*}
    simultaneously for all $i$.

\fi

\section{Sequence prediction with experts with disjoint supports}\label[appendix]{sec:disjoint}

\newcommand{\idxt}{\hat{i}_t} 
\newcommand{\idxs}{\hat{i}_s}
\newcommand{\idxtpo}{\hat{i}_{t+1}}
\newcommand{\mdlitt}{\mdl^{\idxt}_t}
\newcommand{\wmdlitt}{\wmdl^{\idxt}_t}
\newcommand{\wmdlittpo}{\wmdl^{\idxt}_{t+1}}
\newcommand{\countit}{n^i_t}

In this section we consider that the experts have disjoint supports,
that is, for any observation exactly one expert predicts it with positive probability:%
\footnote{But a single expert can still place positive probability
over several observations, as long as there is no overlap with any other expert.}
$\forall t\in[1..T], |\{i\in[1..\nmdl]| \mdlit > 0\}| = 1$.
This happens in particular if the experts
are designed so that each expert $i$ predicts the symbol of index $i$,
that is $\forall t, i:\mdlit(\obs_t=i) = 1$ when considering that $\allObs = [1..\nmdl]$.

In this setting, we show that the Soft-Bayes rule with a learning rate $\ratet$ of $\frac{1}{t+c}$ recovers exactly some well-known density estimators such as
Laplace's rule of succession and the minmax-optimal KT estimator.

Let $\idxt$ be the index of the model that places positive probability for the current observation $\obs_t$ at time $t$.
Then $\mixtt = \sum_i \wmdlit\mdlit = \wmdlitt\mdlitt$.

Let $\countit := \sum_{s=1}^{t} \indicator{\idxs = i}$ be the number of times up to $t$ where the expert $i$ is correct.
Then we have the following property.
\begin{theorem}
If the experts have disjoint supports and uniform prior $\wmdli_1 = 1/\nmdl$, then using a learning rate $\ratet := \frac{1}{t+c}$ makes the mixture predict
\begin{align*}
\forall t:
\mixttpo(\obs_{t+1}=i) = \frac{\countit + \frac{c}{\nmdl}}{t+c}\mdlitpo.
\end{align*}
\end{theorem}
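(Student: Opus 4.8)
The plan is to reduce the statement to a single claim about the weights and prove that claim by induction on $t$. Since the experts have disjoint supports, an observation $\obs_{t+1}$ receiving positive probability only from expert $i$ satisfies $\mixttpo(\obs_{t+1}=i) = \wmdlitpo \mdlitpo$, so it suffices to show that the plain (time-varying) Soft-Bayes update of \cref{eq:update}, with $\ratet = 1/(t+c)$ in place of the fixed rate, produces $\wmdlitpo = \frac{\countit + c/\nmdl}{t+c}$ for every $i$ and every $t\geq 0$ (with $n^i_0 = 0$). The same disjointness gives $\mixtt = \wmdlitt\mdlitt$, hence $\mdlit/\mixtt = 1/\wmdlit$ when $i=\idxt$ and $\mdlit/\mixtt = 0$ otherwise; this substitution is the only place the problem structure enters.

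First I would fix the induction hypothesis $\wmdlit = \frac{n^i_{t-1}+c/\nmdl}{t-1+c}$, whose base case $t=1$ is exactly the uniform prior $\wmdli_1 = 1/\nmdl = \frac{c/\nmdl}{c}$ (using $n^i_0=0$). I would note in passing that the weights remain strictly positive throughout, so that dividing by $\wmdlit$ in the term $\mdlit/\mixtt$ is legitimate.

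For the inductive step I would split into the correct expert $i=\idxt$ and the remaining experts. Substituting $\mdlit/\mixtt$ and $\ratet$ into $\wmdlitpo = \wmdlit\left(1-\ratet+\ratet \mdlit/\mixtt\right)$, the key observation is that $1-\ratet = \frac{t-1+c}{t+c}$ cancels the denominator of the hypothesis, giving $\wmdlit(1-\ratet) = \frac{n^i_{t-1}+c/\nmdl}{t+c}$ in both cases. For $i\neq\idxt$ (where $\mdlit=0$ and $\countit = n^i_{t-1}$) this is already the target $\frac{\countit+c/\nmdl}{t+c}$; for $i=\idxt$ the update adds $\ratet\cdot\wmdlit/\wmdlit = \ratet = \frac{1}{t+c}$, turning the numerator into $n^i_{t-1}+c/\nmdl+1 = \countit + c/\nmdl$ since $\countit = n^i_{t-1}+1$. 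Multiplying the resulting $\wmdlitpo$ by $\mdlitpo$ then recovers the claimed predictive probability.

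The computation is entirely routine and I do not expect a serious obstacle. The only two points that must be handled with care are (i) using the plain update with the time-varying rate rather than the online-corrected update of \cref{eq:sbonlineupdate} --- the clean telescoping relies precisely on the $\frac{1}{t+c}$ form of $\ratet$, and the online correction would spoil it --- and (ii) the bookkeeping that relates $\countit$ to $n^i_{t-1}$ through the indicator $\indicator{i=\idxt}$, which is what distinguishes the two cases above.
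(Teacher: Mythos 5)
Your proposal is correct and takes essentially the same route as the paper's proof: an induction on the weights based on the one-step recursion $\wmdlitpo = \wmdlit(1-\ratet) + \ratet\indicator{\hat{i}_t = i}$ that follows from the disjointness of the supports (equivalently, $\mdlit/\mixtt = 1/\wmdlit$ for the correct expert and $0$ otherwise). The only cosmetic difference is that you verify the closed form $\wmdlit = \frac{n^i_{t-1}+c/\nmdl}{t-1+c}$ directly in the inductive step by cancelling $1-\ratet = \frac{t-1+c}{t+c}$ against the denominator, whereas the paper unrolls the same recursion by telescoping $\wmdlitpo/\ratet$ via the equivalent identity $\frac{\ratet}{1-\ratet} = \ratetmo$.
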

\begin{proof}
We proceed by induction on the weights:
\begin{align*}
\wmdlittpo &= \wmdlitt(1-\ratet + \ratet\frac{\mdlitt}{\mixtt}) = \wmdlit(1-\ratet) + \ratet \\
\forall i\neq\idxt: \wmdlitpo &= \wmdlit(1-\ratet) \\
\text{that is }\forall i: \wmdlitpo &= \wmdlit(1-\ratet) + \ratet\indicator{\idxt=i}
\end{align*}
Now with $\ratet = \frac{1}{t+c}$, observe%
\footnote{Interestingly, this property exists only for this type of learning rate,
and not for example for $\ratet \propto \frac{1}{\sqrt{t}}$.}
 that $\frac{\ratet}{1-\ratet} = \frac{1}{t-1+c} = \ratetmo$. Then
\begin{align*}
\wmdlitpo &= \wmdli_1(1-\ratet) + \ratet\indicator{\idxt=i} \\
\frac{1}{\ratet} \wmdlitpo &= \frac{1-\ratet}{\ratet}\wmdli_1 + \indicator{\idxt=i} \\
&= \frac{1}{\ratetmo}\wmdlit + \indicator{\idxt=i} \\
&= \frac{1}{\ratez}\wmdli_1 + \sum_{s=1}^t \indicator{\idxs=i} \quad\text{(by induction)} \\
\frac{1}{t+c}\wmdlitpo&= \frac{c}{\nmdl} + \countit, \\
\text{hence}\quad \wmdlitpo &= \frac{\countit + \frac{c}{\nmdl}}{t+c}
\end{align*}
which with $\mixttpo = \wmdlitpo \mdlitpo$ for $i = \idxtpo$
proves the claim.
\end{proof}

In particular, for experts such that $\mdlit\in\{0,1\}$
and compared to the best constant convex combination of the experts in hindsight
(still with disjoint support),
\begin{itemize}
    \item setting $c=1$ recovers Perks' estimator~\citep{perks1947indifference,hutter2013sad},
    with a regret of $O(\nbestset\ln T)$ where $\nbestset$ is the number of experts
    that make at least one good prediction,
    \item setting $c=\nmdl$ recovers Laplace's rule of succession,
    with a regret of $O(\nmdl\ln \frac{T}{\nmdl})$,
    \item setting $c = \nmdl/2$ recovers the KT estimator~\citep{krichevsky1981performance}, with a regret of $O(\frac{\nmdl}{2}\ln T)$.
\end{itemize}
See \citep{hutter2013sad} for more details and comparison of these estimators.

We can draw an interesting parallel between these estimators and the different learning rates for competing against a fixed combination of the experts (with non-disjoint supports).

Indeed Perks' estimator with learning rate $\ratet = \frac{1}{t+1}$ is a sparse estimator: It pays a cost of $\log t$ each time a symbol is seen for the first time, just like a learning rate of $\ratet = \frac{1}{\sqrt{t}}$ pays a cost of $\sqrt{t}$ for convex combinations when an expert is good for the first time.
The KT estimator with a learning rate of $\frac{1}{t+\nmdl/2}$ minimizes the worst case where all symbols must be introduced, similarly to a learning rate of $\frac{1}{\sqrt{t\nmdl}}$ for convex combinations.

\vskip 0.2in
\ifarxiv
    \bibliographystyle{unsrtnat}
\fi
\bibliography{refs}

\end{document}